\documentclass[12pt]{article}

\usepackage[top=0.95in,bottom=0.95in,left=0.95in,right=0.95in]{geometry}
\usepackage[utf8]{inputenc} 

\usepackage[T1]{fontenc}   
\usepackage{lmodern}
\usepackage{hyperref}      
\usepackage{url}            
\usepackage{booktabs}       
\usepackage{amsfonts}       
\usepackage{nicefrac}       
\usepackage{microtype}
\usepackage{graphicx}
\usepackage{subfigure}
\usepackage{booktabs} 
\usepackage{enumitem}
\usepackage{makecell}
\usepackage{textcomp}
\usepackage{xcolor}
\usepackage[round]{natbib}

\usepackage{times}
\usepackage{helvet}
\usepackage{courier}
\usepackage{amssymb}
\usepackage{amsmath}
\usepackage{amsthm}
\usepackage{latexsym}
\usepackage{graphicx}
\usepackage{wrapfig}
\usepackage{color,colortbl}
\usepackage{algorithm,algorithmic}
\usepackage{url}
\usepackage{subfigure}
\usepackage[font=small]{caption}
\usepackage{tikz}
\usepackage{wrapfig}
\usepackage[toc, page, header]{appendix}
\hypersetup{colorlinks,urlcolor=black, citecolor=blue}

\newtheorem{thm}{Theorem}
\newtheorem{lem}[thm]{Lemma}
\newtheorem{asp}{Assumption}

\newtheorem{exmp}{Example}

\newtheorem*{thm*}{Theorem}
\newtheorem*{prop*}{Proposition}
\newtheorem*{lem*}{Lemma}


%
\definecolor{Gray}{gray}{0.8}
\renewcommand{\hat}{\widehat}
\renewcommand{\tilde}{\widetilde}
\renewcommand{\>}{{\rightarrow}}

\newcommand{\argmax}{\operatorname{argmax}}
\newcommand{\argmin}{\operatorname{argmin}}

\newcommand{\R}{{\mathbb R}}

\newcommand{\N}{{\mathbb N}}
\renewcommand{\P}{{\mathbf P}}
\newcommand{\E}{{\mathbf E}}

\newcommand{\1}{{\mathbf 1}}

\newcommand{\A}{{\mathbf A}}

\newcommand{\cC}{{\mathcal C}}
\newcommand{\C}{{\mathbf C}}
\newcommand{\cE}{{\mathcal E}}

\renewcommand{\H}{{\mathcal H}}

\renewcommand{\L}{{\mathbf L}}

\newcommand{\X}{{\mathcal X}}
\newcommand{\Y}{{\mathcal Y}}

\renewcommand{\c}{{\mathbf c}}


\newcommand{\x}{{\mathbf x}}

\newcommand{\lin}{\textup{\textrm{lin}}}

\newcommand{\balpha}{{\boldsymbol \alpha}}

\newcommand{\bSigma}{{\boldsymbol \Sigma}}

%




\newcommand{\T}{{\mathbf T}}

\newcommand{\perf}{\cE}

\newcommand{\Dtrue}{D}
\newcommand{\Dshift}{\mu}
\newcommand{\tr}{\textup{\textrm{tr}}}
\newcommand{\val}{\textup{\textrm{val}}}
\newcommand{\onehot}{\textup{\textrm{onehot}}}

\newcommand{\W}{\mathbf{W}}
\newcommand{\D}{\mathbf{D}}
\newcommand{\bPhi}{\boldsymbol{\Phi}}

\newcommand{\bD}{\mathbf{D}}
\newcommand{\bcE}{\boldsymbol{\cE}}
\newcommand{\diag}{\textup{\textrm{diag}}}

\newcommand{\bbeta}{\boldsymbol{\beta}}

\newcommand{\bupsilon}{\boldsymbol{\upsilon}}
\newcommand{\bE}{\mathbf{E}}


\newcommand\numberthis{\addtocounter{equation}{1}\tag{\theequation}}

\renewcommand\cite{\citep}

\setlength{\parindent}{0cm}
\setlength{\parskip}{0.4em}

\begin{document}

\title{Optimizing Black-box Metrics with\\Iterative Example Weighting}

\author{
Gaurush Hiranandani\\
UIUC\\
\texttt{gaurush2@illinois.edu}\\
\and
Jatin Mathur\\
UIUC\\
\texttt{jatinm2@illinois.edu}\\
\and
Harikrishna Narasimhan \\
Google Research USA\\
\texttt{hnarasimhan@google.com}
\and
Mahdi Milani Fard \\
Google Research USA\\
\texttt{mmilanifard@google.com}\\
\and
Oluwasanmi Koyejo\\
Google Research Accra \& UIUC\\
\texttt{sanmik@google.com}\\
}

\date{\today}

\flushbottom
\maketitle

\begin{abstract}
We consider learning to optimize a classification metric defined by a black-box function of the confusion matrix. Such black-box learning settings are ubiquitous, for example, when the learner only has query access to the metric of interest, or in noisy-label and domain adaptation applications where the learner must evaluate the metric via performance evaluation using a small validation sample. Our approach is to adaptively learn example weights on the training dataset such that the resulting weighted objective best approximates the metric on the validation sample. We show how to model and estimate the example weights and use them to iteratively post-shift a pre-trained class probability estimator to construct a classifier. We also analyze the resulting procedure's statistical properties. Experiments on various label noise, domain shift, and fair classification setups confirm that our proposal compares favorably to the state-of-the-art baselines for each application.
\end{abstract}
\section{Introduction}
\label{sec:introduction}

In many real-world machine learning tasks, the evaluation metric one seeks to optimize is not explicitly available in closed-form. This is true for metrics that are evaluated through live experiments or by querying human users \cite{tamburrelli2014towards, hiranandani2018eliciting}, or that require access to private or legally protected data \cite{awasthi+21},
and hence cannot be written as an explicit training objective. This is also the case when the learner only has access to data with skewed training distribution or labels with heteroscedastic noise~\cite{huang2019addressing,jiang2020optimizing}, and hence cannot directly optimize the metric on the training set despite knowing its mathematical form. 

These problems can be framed as black-box learning tasks, where the goal is to optimize an unknown classification metric on a large (possibly noisy) training data, given  access to evaluations of the metric on a small, clean validation sample \citep{jiang2020optimizing}. Our high-level approach to these learning tasks is to adaptively assign weights to the training examples, so that the resulting weighted training objective closely approximates the black-box metric on the validation sample. We then construct a classifier by using the example weights to  post-shift a class-probability estimator pre-trained on the training set. This results in an efficient, iterative approach that does not require any re-training. 

Indeed, example weighting strategies have been  widely used  to both optimize metrics and to correct for distribution shift, but prior works either handle specialized forms of metric or data noise~\cite{sugiyama2008direct, natarajan2013learning, patrini2017making}, formulate the example-weight learning task as a difficult non-convex problem that is hard to analyze \cite{ren2018learning,zhao2019metric}, or employ an expensive surrogate re-weighting strategy that comes with limited statistical guarantees~\cite{jiang2020optimizing}. 
In contrast, we propose a simple and effective approach to optimize a general black-box metric (that is  a function of the confusion matrix)  and provide a rigorous statistical analysis. 

A key element of our approach is eliciting the weight coefficients by probing the  black-box metric at few select classifiers and solving a system of linear equations matching the weighted training errors to the validation metric. 
We choose the ``probing'' classifiers so that the linear system is well-conditioned, for which we provide both theoretically-grounded options and practically efficient variants. This weight elicitation procedure is then used as a subroutine to iteratively construct the final plug-in classifier.

\textbf{Contributions:} (i) We provide a method for eliciting example weights for linear black-box metrics (Section \ref{sec:example-weights}). (ii) We  use  this procedure to iteratively learn a plug-in classifier for general black-box metrics (Section \ref{sec:algorithms}). (iii) We provide theoretical guarantees for metrics that are concave functions of the confusion matrix under distributional assumptions
(Section \ref{sec:theory}). (iv) We experimentally show that our approach is competitive with (or better than) the state-of-the-art methods for tackling label noise in CIFAR-10~\cite{krizhevsky2009learning} and domain shift in Adience~\cite{eidinger2014age}, and optimizing with proxy labels and a black-box fairness metric on Adult~\cite{Dua:2019} (Section \ref{sec:experiments}). 

\textbf{Notations:} $\Delta_m$ denotes the $(m-1)$-dimensional simplex. $[m] = \{1,\dots,m\}$ represents an index set. $\onehot(j) \in \{0,1\}^m$  returns the one-hot encoding of  $j \in [m]$. The $\ell_2$ norm a vector is denoted by $\|\cdot\|$. 
\vspace{-0.4cm}
\section{Problem Setup}
\label{sec:setup}

We consider a standard multiclass setup with an instance space  $\X \subseteq \R^d$ and a label space $\Y = [m]$. We wish to learn a randomized multiclass classifier $h: \X \> \Delta_m$ that for any input $x \in \X$ predicts a distribution $h(x) \in \Delta_m$ over the $m$ classes. We will also consider deterministic classifiers $h: \X \>[m]$ which map an instance $x$ to one of $m$ classes.

\textbf{Evaluation Metrics.} Let $\Dtrue$ denote the underlying data distribution over $\X \times \Y$. 
We will evaluate the performance of a classifier $h$ on $\Dtrue$ using an evaluation metric $\perf^\Dtrue[h]$, with higher values indicating better performance. Our goal is to  learn a classifier $h$ that maximizes this evaluation measure:
\begin{equation}
\textstyle
\max_{h}\,\perf^{\Dtrue}[h].
\label{eq:unconsrained}
\end{equation}

We will  focus on metrics $\perf^\Dtrue$ that can be written in terms of  classifier's  confusion matrix $\C[h] \in [0,1]^{m\times m}$, where the $i,j$-th entry is the probability that the true label is $i$ and the randomized classifier $h$ predicts $j$:
\[
C^{\Dtrue}_{ij}[h] = \E_{(x, y) \sim \Dtrue}\left[\1(y = i)h_j(x)\right].
\]

The performance of the classifier can then be evaluated using a (possibly unknown) function $\psi: [0,1]^{m\times m}\>\R_+$ of the confusion matrix: 
\begin{equation}
\perf^D[h] = \psi(\C^D[h]).
\label{eq:perf-conf}
\end{equation}
Several common classification metrics take this form, including typical linear metrics $\psi(\C) \,=\, \sum_{ij}L_{ij}\,C_{ij}$ for some reward matrix $\L \in \R_+^{m\times m}$,  the  F-measure {\small $\psi(\C) \,=\,\sum_i \frac{2C_{ii}}{\sum_j C_{ij} + \sum_j C_{ji}}$} \cite{Lewis95}, and the G-mean {\small$\psi(\C) = \big(\prod_i \big({C_{ii}}/\sum_j C_{ij}\big)\big)^{1/m}$} \cite{Daskalaki+06}.

We consider settings where the learner has query-access to the evaluation metric $\perf^D$, i.e., can evaluate the metric for any given classifier $h$ but cannot directly write out the metric as an explicit mathematical objective. This happens when the metric is truly a black-box function, i.e., $\psi$ is unknown, or when $\psi$ is known, but we have access to only a noisy version of the distribution $D$ needed to compute the metric. 

\textbf{Noisy Training Distribution.} 
For learning a classifier, we assume access to a large  sample $S^{\tr}$ of $n^{\tr}$ examples drawn from a distribution $\Dshift$, which we will refer to as the ``training'' distribution. The training distribution $\Dshift$ may be the same as the true  distribution $\Dtrue$, or 
may differ from the true  distribution $\Dtrue$ 
in  the  feature distribution $\P(x)$, the conditional label distribution $\P(y|x)$, or both. 
 We also assume access to a  smaller sample $S^{\val}$ of
$n^{\val}$ examples
drawn from the true distribution
$\Dtrue$. 
We will refer to the sample $S^{\tr}$ as the ``training'' sample, and the smaller sample $S^{\val}$ as the ``validation'' sample. We seek to solve \eqref{eq:unconsrained} using both these  samples. 

The following are some examples of noisy training distributions in the literature:
\begin{exmp}[Independent label noise (ILN) \cite{natarajan2013learning, patrini2017making}]
\label{ex:iln}
The distribution  $\Dshift$ draws an example $(x,y)$ from $\Dtrue$, and randomly flips $y$ to $\tilde{y}$ with probability $\P(\tilde{y}|y)$, independent of the instance $x$.
\end{exmp}
\begin{exmp}[Cluster-dependent label noise (CDLN) \cite{wang2020fair}]
Suppose each  $x$ belongs to one of $k$ disjoint clusters $g(x) \in [k]$. The distribution  $\Dshift$ draws  $(x,y)$ from $\Dtrue$ and randomly flips $y$ to $\tilde{y}$ with probability $\P(\tilde{y}|y,g(x))$.
\end{exmp}
\begin{exmp}[Instance-dependent label noise (IDLN) \cite{menon2018learning}] 
$\Dshift$ draws  $(x,y)$ from $\Dtrue$ and randomly flips $y$ to $\tilde{y}$ with probability $\P(\tilde{y}|y,x)$, which may depend on  $x$.
\label{ex:instnoise}
\end{exmp}
\begin{exmp}[Domain shift (DS) \cite{sugiyama2008direct}] 
\label{ex:ds}
$\Dshift$ draws $\tilde{x}$ according to a distribution $\P^{\Dshift}({x})$ different from $\P^{\Dtrue}({x})$, but draws $y$ from the true conditional $\P^{\Dtrue}(y|\tilde{x})$.
\end{exmp}

\begin{table}[t]
    \centering
    \caption{Example weights $\W: \X \> \R_+^{m\times m}$ for linear metric {\small $\perf^\Dtrue[h]=\langle\L, \C^\Dtrue[h]\rangle$} under the  noise models in Exmp.\ \ref{ex:iln}--\ref{ex:ds}, where $W_{ij}(x)$ is the weight on entry $C_{ij}$. In Sec.\ \ref{sec:example-weights}--\ref{sec:algorithms}, we consider metrics that are functions of the diagonal confusion  entries alone (i.e.\ $\L$ and $\T$ are diagonal), and handle  general  metrics in  Appendix \ref{app:linear-gen}.}
    \vskip -0.1cm
    \small{
    \begin{tabular}{ccc}
    \hline
    Model &
         Noise Transition Matrix & Correction Weights\\
        \hline
        ILN 
        & $T_{ij} = \P(\tilde{y}=j|y=i)$& 
        $\W(x) = \L \odot \T^{-1}$
        \\
        CDLN 
        & $T^{[k]}_{ij} = \P(\tilde{y}=j|y=i, g(x)=k)$& 
        $\W(x) = 
        \L \odot (\T^{[g(x)]})^{-1}$
        \\
        IDLN 
        & $T_{ij}(x) = \P(\tilde{y}=j|y=i, x)$& 
        $\W(x) = \L \odot (\T(x))^{-1}$
        \\
        DS & 
        - &  $W_{ij}(x) = {\P^\Dtrue(x)}/{\P^\Dshift(x)}, \forall i,j$\\
        \hline
    \end{tabular}
    }
      \label{tab:correction-weights}
\end{table}

Our approach is to learn example weights on the training sample $S^\tr$, so that the resulting weighted empirical objective (locally, if not globally) approximates an estimate of the metric $\perf^\Dtrue$ on the validation sample $S^\val$. For ease of presentation, we will assume that the metrics only depend on the diagonal entries of the confusion matrix, i.e., $C_{ii}$'s. In Appendix \ref{app:linear-gen}, we elaborate how our ideas can be extended to handle metrics that depend on the entire confusion matrix. 

While our approach uses randomized classifiers, in practice one can replace them with similarly performing deterministic classifiers using, e.g., the techniques of \cite{cotter19stochastic}. In what follows, we will need the empirical confusion matrix on the validation set $\hat{\C}^\val[h]$, where  $\hat{C}_{ij}^{\val}[h] = \frac{1}{n^\val}\sum_{(x, y) \in S^\val}\1(y=i)h_j(x)$. 
\section{Example Weighting for  Linear Metrics}
\label{sec:example-weights}

We first describe our example weighting strategy for linear functions of the diagonal entries of the confusion matrix, which is given by:
\begin{equation}
  \textstyle \perf^\Dtrue[h] \,=\, 
\sum_{i}\beta_i\,C^D_{ii}[h]
\label{eq:linearmetric}
\end{equation}
\vskip -0.2cm
for some (unknown) weights $\beta_1,\ldots,\beta_m$. In the next section, we will discuss how to use this procedure as a subroutine to handle more complex metrics.

\subsection{Modeling Example Weights} 

We define an example weighting function $\W: \X  \> \R^{m}_+$ which associates $m$ \emph{correction weights} $[W_{i}(x)]_{i=1}^m$ with each example $x$ so that: 
\begin{equation}
\textstyle
\E_{(x, y) \sim \Dshift}\Big[\sum_{i} W_{i}(x)\,\1(y = i)h_i(x)\Big] \,\approx\, 
\perf^\Dtrue[h],  \forall h.\hspace{-2pt}
\label{eq:example-weights}
\end{equation}
Indeed for the noise models in Examples \ref{ex:iln}--\ref{ex:ds}, there exist weighting functions $\W$ for which the above holds with equality.  Table \ref{tab:correction-weights} shows the form of the weighting function for general linear metrics.

Ideally, the weighting function $\W$ assigns $m$ independent weights for each example $x \in \X$. However, in practice, we estimate $\perf^\Dtrue$ using a small validation sample $S^\val \sim \Dtrue$. So to avoid having the example weights over-fit to the validation sample, we restrict the flexibility of $\W$ and set it to a  weighted sum of $L$ basis functions $\phi^\ell: \X \> [0,1]$:
\begin{equation}
\textstyle
W_{i}(x) \,=\, \sum_{\ell=1}^L \alpha^{\ell}_{i}\phi^\ell(x),
\label{eq:weighting}
\end{equation}
where $\alpha^{\ell}_{i} \in \R$ is the coefficient associated with basis function $\phi^\ell$ and diagonal confusion  entry $(i,i)$. 

In practice, the basis functions can be as simple as a  partitioning of the instance space into $L$ clusters, i.e.,:
\begin{equation}
\phi^\ell(x) =  \1(g(x) = \ell),
\label{eq:hardlcuster}
\end{equation}
for a clustering function $g: \X\>[L]$,
or may define a more complicated soft clustering using, e.g., radial basis functions \cite{sugiyama2008direct} with centers
$x^\ell$ and width $\sigma$:
\begin{equation}
\phi^\ell(x) =  \text{exp}\left( -\Vert x - x^\ell \Vert/2\sigma^2 \right).
\label{eq:softlcuster}
\end{equation}

\subsection{$\phi$-transformed Confusions}

Expanding the weighting function in \eqref{eq:example-weights} gives us:
\begin{equation*}
\sum_{\ell=1}^L\sum_{i=1}^m
\alpha^{\ell}_{i}\,
\underbrace{
\E_{(x, y) \sim \Dshift}\big[ \phi^\ell(x)\,\1(y=i)h_i(x) \big]}_{\Phi_i^{\Dshift, \ell}[h]} \,\approx\, \perf^\Dtrue[h],  \forall h,
\end{equation*}
where $\bPhi^{\Dshift, \ell}[h] \in [0,1]^{m}$ can be seen as a $\phi$-transformed  confusion matrix  for the training distribution $\Dshift$. 
For example, if one had only one basis function 
$\phi^1(x) = 1,\forall x$, then $\Phi_{i}^{\Dshift, 1}[h] = \E_{(x, y) \sim \Dshift}\big[ \1(y=i)h_i(x)\big]$ gives the standard confusion entries for the training distribution. If the basis functions divides the data into $L$  clusters, as in \eqref{eq:hardlcuster}, then  $\Phi_{i}^{\Dshift, \ell}[h] = \E_{(x, y) \sim \Dshift}\big[\1(g(x)=\ell, y=i)h_i(x)\big]$  gives the training confusion entries evaluated on examples from cluster $\ell$. We can thus re-write equation~\eqref{eq:example-weights} as a weighted combination of the $\Phi$-confusion entries:
\begin{equation}
\sum_{\ell=1}^L\sum_{i=1}^m
\alpha^{\ell}_i
\Phi^{\Dshift, \ell}_i[h] \,\approx\, 
\perf^D[h], \forall h.
\label{eq:example-weights-reduced}
\end{equation}

\subsection{Eliciting Weight Coefficients $\balpha$}
\label{sec:estimating-weights}

We  next discuss how to estimate the weighting function coefficients $\alpha^\ell_{i}$'s from the training sample $S^\tr$ and validation sample $S^\val$. Notice that \eqref{eq:example-weights-reduced} gives a relationship between statistics $\bPhi^{\mu,\ell}$'s computed on the training distribution $\Dshift$, and the evaluation metric of interest computed on the true distribution $\Dtrue$. Moreover, for a fixed classifier $h$, the left-hand side is \textit{linear} in the unknown coefficients
$\balpha = [\alpha_1^1, \ldots, \alpha_1^L, \ldots, \alpha_m^1, \ldots, \alpha_m^L]\in \R^{Lm}$. 

We therefore probe the metric $\hat{\perf}^\val$ at $Lm$ different classifiers $h^{1,1}, \ldots, h^{1,m},\ldots,h^{L,1},\ldots, h^{L,m}$, which results in a set of $Lm$ linear equations of the form in \eqref{eq:example-weights-reduced}: 
\begin{align}
\textstyle\sum_{\ell,i}\alpha^{\ell}_i\,
\hat{\Phi}^{\tr, \ell}_i[h^{1,1}]&=
\hat{\perf}^\val[h^{1,1}],
\nonumber
\\
&\vdots\label{eq:system-of-equations-emp}\\
\textstyle\sum_{\ell,i}\alpha^{\ell}_i\,
\hat{\Phi}^{\tr, \ell}_i[h^{L,m}]&=
\hat{\perf}^\val[h^{L,m}],\nonumber
\end{align}
where $\hat{\Phi}_{i}^{\tr, \ell}[h] = \frac{1}{n^\tr}\sum_{(x, y) \in S^\tr}\phi^\ell(x)\,\1(y=i)h_i(x)$ is evaluated on the training sample and the metric $\hat{\perf}^\val[h] = \sum_{i}\beta_i\,\hat{C}^\val_{ii}[h]$ 
is evaluated on the validation sample.

\begin{figure}[t]
    \centering
    \includegraphics[scale=0.65]{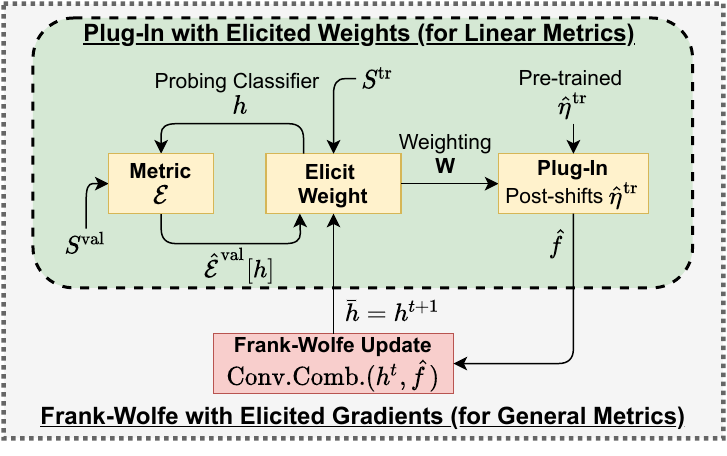}
    \caption{Overview of  our approach.}
    \label{fig:overview}
\end{figure}

More formally, let $\hat{\bSigma} \in \R^{Lm\times Lm}$ and $\hat{\bcE} \in \R^{Lm}$ denote the left-hand and right-hand side observations in \eqref{eq:system-of-equations-emp}, i.e.,:
\vspace{-0.2cm}
\begin{align*}
\textstyle
\hat{\Sigma}_{(\ell,i), (\ell',i')} &\,=\,
\frac{1}{n^\tr}\sum_{(x,y)\in S^\tr}\phi^{\ell'}(x)\1(y=i')h^{\ell,i}_{i'}(x), \\ \nonumber
\hat{\perf}_{(\ell,i)} &\,=\, \hat{\perf}^\val[h^{\ell,i}]. \numberthis 
\label{eq:perf-emp}
\end{align*}
Then the weight coefficients are given by  $\hat{\balpha} = \hat{\bSigma}^{-1}\hat{\bcE}$.

\subsection{Choosing the Probing Classifiers $h^{1,1},\ldots,h^{L,m}$}
\label{subsec:probing-classifier}

We will have to choose the $Lm$ probing classifiers so that $\hat{\bSigma}$ 
is well-conditioned. One way to do this is to choose  the  classifiers so that  $\hat{\bSigma}$ has a high value on the diagonal entries and a low value on the off-diagonals, i.e.\ choose each 
classifier $h^{\ell,i}$ to evaluate to a high value on $\hat{\Phi}^{\tr,\ell}_i[h]$ and a low value on $\hat{\Phi}^{\tr,\ell'}_{i'}[h], \, \forall (\ell',i') \ne (\ell,i)$. 
This can be framed as the following constraint satisfaction problem on $S^\tr$: 

\begin{center}
For $h^{\ell,i}$ pick $h \in \H$ such that:
\begin{align}
\hat{\Phi}^{\tr,\ell}_i[h] \geq \gamma,~\text{and}~
\hat{\Phi}^{\tr,\ell'}_{i'}[h] \leq \omega, \forall (\ell',i') \ne (\ell,i),
    \label{eq:con-opt}
\end{align}
\end{center}

for some $\gamma > \omega > 0$ and a sufficiently flexible hypothesis class $\H$ for which the constraints are feasible. These problems can generally be solved by formulating a constrained classification problem \cite{cotter2019optimization,narasimhan2018learning}. We show in Appendix \ref{app:con-opt} that this problem is feasible and can be efficiently solved for a range of settings.

\begin{figure}
\begin{algorithm}[H]
\caption{\hspace{-0.075cm}\textbf{: ElicitWeights} for Diagonal Linear Metrics}\label{algo:weight-coeff}
\begin{algorithmic}[1]
\STATE \textbf{Input:} $\hat{\perf}^\val$, Basis functions $\phi^1, \ldots, \phi^L: \X \> [0,1]$, 
Training set $S^\tr \sim \Dshift$, Val. set $S^\val \sim \Dtrue$, $\bar{h}$, ${\epsilon}$, $\H$, $\gamma, \omega$
\STATE \textbf{If} 
\textit{fixed classifier}:
\STATE ~~~Choose $h^{\ell,i}(x) = 
\epsilon\phi^\ell(x)\,e^i(x) + (1 - \epsilon\phi^\ell(x))\,\bar{h}(x)$ 
\STATE \textbf{Else:} 
\STATE ~~~$\bar{\H} = \{\tau h + (1-\tau)\bar{h}\,|\, h \in \H, \tau \in [0, \epsilon]\}$
\STATE ~~~Pick $h^{\ell,i} \in \bar{\H}$ to satisfy \eqref{eq:con-opt} 
with slack $\gamma,\omega, \forall (\ell,i)$
\STATE Compute $\hat{\bSigma}$ and $\hat{\bcE}$ using \eqref{eq:perf-emp} with metric $\hat{\perf}^\val$
\STATE \textbf{Output:} $\hat{\balpha} = \hat{\bSigma}^{-1}\hat{\bcE}$
\end{algorithmic}
\end{algorithm}
\vspace{-20pt}
\begin{algorithm}[H]
\caption{\hspace{-0.075cm}\textbf{:} {\textbf{P}lug-\textbf{i}n with \textbf{E}licited \textbf{W}eights} (\textbf{PI-EW}) for Diagonal Linear Metrics
}
\label{algo:linear-metrics}
\begin{algorithmic}[1]
\STATE \textbf{Input:}  $\hat{\perf}^\val$, Basis functions $\phi^1, \ldots, \phi^L: \X \> [0, 1]$, Class probability model $\hat{\eta}^\tr: \X \> \Delta_m$ for  $\Dshift$, 
Training set $S^\tr \sim \Dshift$, Validation set $S^\val \sim \Dtrue$, $\bar{h}$, $\epsilon$
\STATE $\widehat{\balpha} = \textbf{ElicitWeights}(\hat{\perf}^\val, 
\phi^1, \ldots, \phi^L, S^\tr, S^\val, \bar{h}, \epsilon)$ 
\STATE Example-weights:
$
\widehat{W}_{i}(x) \,=\, \sum_{\ell=1}^L \widehat{\alpha}^{\ell}_{i}\phi^\ell_{i}(x)
$
\STATE Plug-in:
$
\widehat{h}(x) \,\in\, \argmax_{i \in [m]} \widehat{W}_{i}(x)\hat{\eta}^\tr_i(x)
$
\STATE \textbf{Output:} $\widehat{h}$
\end{algorithmic}
\end{algorithm}
\vspace{-18pt}
\end{figure}

In practice, we do not explicitly solve \eqref{eq:con-opt} over a hypothesis class $\H$.  Instead, a simpler and surprisingly effective strategy is to 
set the probing classifiers to 
\textit{trivial} classifiers that predict the same class on all (or a subset of) examples. To build intuition for why this is a good idea, 
consider a simple setting with only one basis function $\phi^1(x) = 1, \forall x$, where the $\phi$-confusions $\hat{\Phi}^{\tr,1}_i[h] = \frac{1}{n^\tr}\sum_{(x,y)\in S^\tr}\1(y=i)h_i(x)$ are the standard confusion  entries on the training set. In this case, a trivial classifier $e^i(x) = \onehot(i), \forall x$, which predicts class $i$ on all examples, yields the highest value for $\hat{\Phi}^{\tr,1}_i$ and 0 for all other $\hat{\Phi}^{\tr,1}_j, \forall j \ne i$. In fact, in our experiments,  we set the probing classifier $h^{1,i}$ to a randomized combination of $e^i$ and some fixed base classifier $\bar{h}$:
$$h^{1,i}(x) = \epsilon e^i(x) + (1-\epsilon)\bar{h}(x),$$ for large enough $\epsilon$ so that $\hat{\bSigma}$ is well-conditioned.

Similarly, if the basis functions divide the data into $L$  clusters (as in \eqref{eq:hardlcuster}), then we can randomize between $\bar{h}$ and a {trivial} classifier that predicts a particular class $i$ on all examples assigned to the cluster $\ell \in [L]$. The confusion matrix for the resulting classifiers will have higher values than $\bar{h}$ on the $(\ell,i)$-th diagonal entry and a lower value on  other entries. These classifiers can be succinctly written as:
\begin{equation}
h^{\ell,i}(x) = \epsilon\phi^\ell(x)e^i(x) + (1-\epsilon\phi^\ell(x))\bar{h}
\label{eq:trivial-classifiers}
\end{equation}
where we again tune $\epsilon$ to make sure that the resulting $\hat{\bSigma}$ is well-conditioned. This choice of the probing classifiers also works well in practice for general basis functions $\phi^\ell$'s.

Algorithm \ref{algo:weight-coeff} summarizes the weight elicitation procedure, where  the probing classifiers are either constructed by solving the constrained satisfaction problem \eqref{eq:con-opt} or set to the ``fixed'' classifiers in \eqref{eq:trivial-classifiers}. In both cases, the algorithm takes a base classifier $\bar{h}$ and the parameter $\epsilon$ as input, where $\epsilon$ controls the extent to which $\bar{h}$ is perturbed to construct the probing classifiers. This radius parameter $\epsilon$ restricts the probing classifiers to a neighborhood around $\bar{h}$ and will prove handy in the algorithm we develop in Section~\ref{ssec:iterativeFW}.

\section{Plug-in Based Algorithms}
\label{sec:algorithms}

Having elicited the weight coefficients $\balpha$, we now seek to learn a classifier that optimizes the left hand side of~\eqref{eq:example-weights-reduced}. 
We do this via the \textit{plug-in} approach: first \textit{pre-train} a model $\hat{\eta}^\tr: \X \> \Delta_m$  on the noisy training distribution $\Dshift$ to estimate the conditional class probabilities $\hat{\eta}^\tr_i(x) \approx \P^\mu(y=i|x)$, and then apply the correction weights to \emph{post-shift} $\hat{\eta}^\tr$. 

\begin{figure}
\begin{algorithm}[H]
\caption{\hspace{-0.075cm}\textbf{:} \textbf{F}rank-\textbf{W}olfe with \textbf{E}licited \textbf{G}radients (\textbf{FW-EG}) for General Diagonal Metrics (also depicted in Fig.~\ref{fig:overview})}\label{algo:FW}
\begin{algorithmic}[1]
\STATE \textbf{Input:} $\hat{\perf}^\val$, Basis functions $\phi^1, \ldots, \phi^L: \X \> [0,1]$, Pre-trained $\hat{\eta}^\tr: \X \> \Delta_m$,
$S^\tr \sim \Dshift$, 
$S^\val \sim \Dtrue$, $T$, $\epsilon$
\STATE Initialize classifier $h^0$ and $\c^0 = \diag(\widehat{\C}^\val[h^0])$
\STATE \textbf{For} $t =  0$ \textbf{to} $T-1$ \textbf{do}
\STATE ~~~\textbf{if} $\perf^\Dtrue[h] = \psi(C_{11}^D[h],\ldots,C_{mm}^D[h])$ for known $\psi$:
\STATE ~~~~~~~$\bbeta^{t}\,=\, \nabla\psi(\c^{t})$
\STATE ~~~~~~~$\hat{\perf}^\lin[h]\,=\, \sum_i \beta^t_i \hat{C}^\val_{ii}[h]$
\STATE ~~~\textbf{else}
\STATE ~~~~~~~$\hat{\perf}^\lin[h]= \hat{\perf}^\val[h]$ \hspace{0.7cm}\COMMENT{small $\epsilon$ recommendeded}
\STATE ~~~$\widehat{f} = \text{\textbf{PI-EW}}(\hat{\perf}^\lin, \phi^1,..., \phi^L, \hat{\eta}^\tr, S^\tr, S^\val, h^{t}, \epsilon)$
\STATE ~~~$\tilde{\c} = \diag(\widehat{\C}^\val[\widehat{f}])$
\STATE ~~~${h}^{t+1} = \big(1-\frac{2}{t+1}\big) {h}^{t} + \frac{2}{t+1} \onehot(\widehat{f})$
\STATE ~~~${\c}^{t+1} = \big(1-\frac{2}{t+1}\big) {\c}^{t} + \frac{2}{t+1}\tilde{\c}$
\STATE \textbf{End For}
\STATE \textbf{Output:} $\hat{h} = h^T$
\end{algorithmic}
\end{algorithm}
\vspace{-18pt}
\end{figure}

\subsection{Plug-in  Algorithm for Linear Metrics}
We first describe our approach for (diagonal) linear metrics $\perf^\Dtrue[h] \,=\,  \sum_{i}\beta_i\,C^D_{ii}[h]$ 
in Algorithm \ref{algo:linear-metrics}.
Given the correction weights $\hat{\W}: \X \> \R_+^m$, we seek to maximize the following weighted objective on the training distribution: 
\[
\textstyle
\max_{h}\,\E_{(x, y) \sim \Dshift}\left[\sum_{i} \hat{W}_{i}(x)\,\1(y = i)h_i(x)\right].
\]
This is a standard example-weighted learning problem, for which the following plug-in (post-shift) classifier is a consistent estimator \cite{narasimhan2015consistent,yang2020fairness}:
\[
\widehat{h}(x) \,\in\, \argmax_{i \in [m]} \hat{W}_{i}(x)\,\hat{\eta}^\tr_i(x).
\]

\subsection{Iterative Algorithm for General Metrics}
\label{ssec:iterativeFW}

To optimize generic non-linear metrics of the form $\perf^\Dtrue[h] = \psi(C^D_{11}[h], \ldots,C^D_{mm}[h])$ for $\psi: [0,1]^m\>\R_+$, we  apply Algorithm \ref{algo:linear-metrics} iteratively. We consider both cases where $\psi$ is unknown, and where $\psi$ is known, but needs to be optimized using the noisy distribution $\Dshift$. The idea is to first elicit local linear approximations to $\psi$ and to then learn plug-in classifiers for the resulting linear metrics in each iteration. 

Specifically, following~\citet{narasimhan2015consistent}, we derive our algorithm from the classical Frank-Wolfe  method \cite{Jaggi13} for maximizing a smooth concave function $\psi(\c)$ over a convex set $\cC \subseteq \R^m$. In our case, $\cC$ is the set of confusion matrices $\C^\Dtrue[h]$ achieved by any classifier $h$, and is convex when we allow randomized classifiers (see Lemma~\ref{lem:C-convexity}, Appendix \ref{app:proof-FW}). The algorithm maintains iterates $\c^t$, and at each step, maximizes a linear approximation to $\psi$ at $\c^t$: $\tilde{\c} \,\in\, \argmax_{\c \in \cC}\langle \nabla\psi(\c^{t}), \c\rangle$. The next iterate $\c^{t+1}$ is then a convex combination of $\c^{t}$ and the current solution $\tilde{\c}$.

In Algorithm \ref{algo:FW}, we outline an adaptation of this Frank-Wolfe algorithm to our setting, where we maintain a classifier $h^t$ and an estimate of the diagonal confusion entries $\c^t$ from the validation sample $S^\val$. At each step, we linearize $\psi$ using $\hat{\perf}^\lin[h] = \sum_{i} \beta^t_i \hat{C}_{ii}^\val[h]$, where $\bbeta^t = \nabla \psi(\c^t)$, and invoke the plug-in method in Algorithm \ref{algo:linear-metrics} to optimize the linear approximation $\hat{\perf}^\lin$. 
When the mathematical form of $\psi$ is known, one can directly compute the gradient $\bbeta^t$. When it is not known, we can simply set $\hat{\perf}^\lin[h] = \hat{\perf}^\val[h]$, but restrict the  weight elicitation routine (Algorithm \ref{algo:weight-coeff}) to choose its probing classifiers $h^{\ell,i}$'s from a small neighborhood  around the current classifier $h^t$ (in which $\psi$ is effectively linear). This can be done by passing $\bar{h}=h^t$ to the weight elicitation routine, and setting the radius ${\epsilon}$  to a small value. 
    
Each call to Algorithm \ref{algo:linear-metrics} uses the training and validation set
to elicit example weights for a local linear approximation to $\psi$, and uses the  weights to 
construct a plug-in classifier. The final  output is a randomized combination of the plug-in classifiers  from each step. Note that Algorithm~\ref{algo:FW} runs efficiently for reasonable values of $L$ and $m$. Indeed the runtime is almost always dominated by the pre-training of the base model $\hat\eta^\tr$, with the time taken to elicit the weights (e.g.\ using~\eqref{eq:trivial-classifiers}) being relatively inexpensive (see App.\ \ref{app:run-time}).
\section{Theoretical Guarantees}
\label{sec:theory}

We provide theoretical guarantees for the weight elicitation procedure and the plug-in methods in Algorithms \ref{algo:weight-coeff}--\ref{algo:FW}.

\begin{asp}
\label{asp:alpha-star}
The distributions $\Dtrue$ and $\Dshift$ are such that
for any linear metric $\perf^\Dtrue[h] = \sum_{i}\beta_i C_{ii}[h]$, with $\|\bbeta\| \leq 1$, 
$\exists \bar{\balpha} \in \R^{Lm}$ s.t. $\left|\sum_{\ell,i}
\bar{\alpha}^{\ell}_i
\Phi^{\Dshift, \ell}_i[h] -
\perf^\Dtrue[h]\right| \,\leq\, \nu, \forall h$ and $\|\bar{\balpha}\|_1 \leq B$, for some $\nu \in [0,1)$ and $B>0$.
\end{asp}

The assumption states that our choice of basis functions $\phi^1,\ldots,\phi^L$ are such that, any linear metric on  $\Dtrue$ can be approximated (up to a slack $\nu$) by a weighting $W_i(x) = \sum_{\ell}\bar{\alpha}^\ell_i\phi^\ell(x)$ of the  training examples  from $\Dshift$.
The existence of such a weighting function depends on how well the basis functions capture the underlying distribution shift. 
Indeed, the assumption holds for some common settings in Table \ref{tab:correction-weights}, e.g., when the noise  transition $\T$ is diagonal (Appendix \ref{app:linear-gen} handles a general $\T$), and the basis functions are set to $\phi^1(x)=1,\forall x,$ for the IDLN setting, and $\phi^\ell(x) = \1(g(x)=\ell), \forall x,$ for the CDLN setting.

We  analyze the coefficients $\hat{\balpha}$ elicited by Algorithm \ref{algo:weight-coeff} when the probing classifiers $h^{\ell,i}$ are chosen to satisfy \eqref{eq:con-opt}. In Appendix  \ref{app:norm-sigma-bound}, we provide an analysis  when the probing classifiers $h^{\ell,i}$ are set to the fixed choices in \eqref{eq:trivial-classifiers}. 
\begin{thm}[\hspace{-1pt}\textbf{Error bound on elicited weights}]
\label{thm:alpha-diagonal-linear-conopt}
Let $\gamma, \omega > 0$ be such that the constraints in \eqref{eq:con-opt}  are feasible for hypothesis class $\bar{\H}$, for all $\ell, i$. Suppose Algorithm \ref{algo:weight-coeff} chooses each
 classifier $h^{\ell,i}$ to  satisfy \eqref{eq:con-opt}, with $\perf^D[h^{\ell,i}] \in [c, 1], \forall \ell, i$, for some $c>0$. Let $\bar{\alpha}$ be defined as in Assumption \ref{asp:alpha-star}. Suppose $\gamma > 2\sqrt{2}Lm\omega$ and $n^\tr \geq \frac{L^2m\log(Lm|\H|/\delta)}{(\frac{\gamma}{2} - \sqrt{2}Lm\omega)^2}.$
Fix $\delta\in (0,1)$. Then w.p.\  $\geq 1 - \delta$ over draws of $S^\tr$ and $S^\val$ from $\Dshift$ and $\Dtrue$ resp., the coefficients $\hat{\balpha}$ output by Algorithm \ref{algo:weight-coeff} satisfies:
\begin{eqnarray*}
\|\hat{\balpha} - \bar{\balpha}\| \,\leq\,
\mathcal{O}\bigg(
\frac{Lm}{\gamma^2}\bigg(\sqrt{\frac{L\log(\textstyle \frac{Lm|\H|}{\delta})}{n^\tr}} + 
 \sqrt{\frac{L\log(\textstyle \frac{Lm}{\delta})}{c^2 n^\val}}\bigg) +  \frac{\nu\sqrt{Lm}}{\gamma}\bigg),
\end{eqnarray*}
where the term $|\H|$ can be replaced by a measure of capacity of the hypothesis class $\H$.
\end{thm}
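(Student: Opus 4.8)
The plan is to read off $\hat{\balpha}$ as the solution of the empirical linear system $\hat{\bSigma}\hat{\balpha} = \hat{\bcE}$, compare it to the population system $\bSigma\balpha = \bcE$ with $\Sigma_{(\ell,i),(\ell',i')} = \Phi^{\Dshift,\ell'}_{i'}[h^{\ell,i}]$ and $\cE_{(\ell,i)} = \perf^\Dtrue[h^{\ell,i}]$, and control two ingredients: the conditioning of $\hat{\bSigma}$, and the sampling errors $\hat{\bSigma}-\bSigma$ (on $S^\tr$) and $\hat{\bcE}-\bcE$ (on $S^\val$). Observe that the vector $\bar{\balpha}$ of Assumption \ref{asp:alpha-star}, instantiated at the probing classifiers $h = h^{\ell,i}$, is an approximate solution of the \emph{population} system: $\|\bSigma\bar{\balpha}-\bcE\|_\infty \le \nu$, hence $\|\bSigma\bar{\balpha}-\bcE\| \le \sqrt{Lm}\,\nu$; and since each $|\perf^\Dtrue[h^{\ell,i}]| \le 1$ we have $\|\bcE\| \le \sqrt{Lm}$.

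\textbf{Conditioning and an a priori bound on $\bar{\balpha}$.} Because the probing classifiers satisfy \eqref{eq:con-opt} on $S^\tr$, the matrix $\hat{\bSigma}$ has diagonal entries $\ge \gamma$ and off-diagonal entries in $[0,\omega]$. Splitting $\hat{\bSigma} = \hat{D}+\hat{E}$ into diagonal and off-diagonal parts, $\|\hat{E}\| \le (Lm-1)\,\omega$, so a Neumann-series (quantitative diagonal-dominance) argument using $\gamma > 2\sqrt{2}\,Lm\,\omega$ gives $\|\hat{\bSigma}^{-1}\| = O(1/\gamma)$. The first branch of the stated lower bound on $n^\tr$ is calibrated so that the uniform sampling error below is a small fraction of the conditioning margin $\gamma - O(Lm\omega)$; this lets one run a short self-bounding argument --- write $\bar{\balpha} = \hat{\bSigma}^{-1}(\bcE + (\hat{\bSigma}-\bSigma)\bar{\balpha} + (\bSigma\bar{\balpha}-\bcE))$ and absorb the $\bar{\balpha}$ on the right --- to conclude $\|\bar{\balpha}\| = O(\sqrt{Lm}/\gamma)$, which is the $B$-free bound needed because $B$ does not appear in the claimed rate. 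The second branch of the $n^\tr$ condition is a multiplicative Chernoff guarantee that the (assumed positive) confusions of the probing classifiers are adequately sampled, so the empirical system is genuinely solvable.

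\textbf{Sampling errors, error decomposition, and assembly.} Since the probing classifiers depend only on $S^\tr$, they are independent of $S^\val$; conditioning on $S^\tr$, Hoeffding's inequality on each $\hat{C}^\val_{ii}[h^{\ell,i}]$ with a union bound over the $O(Lm^2)$ relevant pairs and $\|\bbeta\| \le 1$ gives $\|\hat{\bcE}-\bcE\| = O\!\big(\sqrt{Lm}\,\sqrt{\log(Lm/\delta)/n^\val}\big)$. For $\hat{\bSigma}-\bSigma$, each entry uses a classifier selected from $\H$ via $S^\tr$, so one needs the uniform deviation $\sup_{h\in\H}|\hat{\Phi}^{\tr,\ell'}_{i'}[h] - \Phi^{\Dshift,\ell'}_{i'}[h]|$, supplied by a standard Rademacher/covering bound (here $|\H|$, or a capacity measure of $\H$, enters), and a union bound over the $O(Lm)$ rows and $O(Lm)$ coordinates yields $\|\hat{\bSigma}-\bSigma\| \le \|\hat{\bSigma}-\bSigma\|_F = O\!\big(Lm\,\sqrt{\log(Lm|\H|/\delta)/n^\tr}\big)$. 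Finally, from $\hat{\balpha} = \hat{\bSigma}^{-1}\hat{\bcE}$ and $\bar{\balpha} = \hat{\bSigma}^{-1}\hat{\bSigma}\bar{\balpha}$,
\[
\hat{\balpha} - \bar{\balpha} \;=\; \hat{\bSigma}^{-1}\big[(\hat{\bcE}-\bcE) + (\bcE-\bSigma\bar{\balpha}) + (\bSigma-\hat{\bSigma})\bar{\balpha}\big].
\]
The middle term contributes $O(\sqrt{Lm}\,\nu/\gamma)$ by Assumption \ref{asp:alpha-star}; the first contributes $\|\hat{\bSigma}^{-1}\|$ times the validation deviation; and in the last term, using the a priori bound $\|\bar{\balpha}\| = O(\sqrt{Lm}/\gamma)$, it is at most $\|\hat{\bSigma}^{-1}\|\cdot\|\hat{\bSigma}-\bSigma\|\cdot O(\sqrt{Lm}/\gamma)$ --- the source of the $1/\gamma^2$. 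Substituting $\|\hat{\bSigma}^{-1}\| = O(1/\gamma)$ and the displayed bounds, and noting $|\H|$ only ever entered the uniform deviation, collects to the claimed estimate.

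\textbf{Main obstacle.} The hard part will be the matrix concentration together with the constant-tracking: because the probing classifiers are produced by solving the constrained problem \eqref{eq:con-opt} over $\H$ on the training sample, $\hat{\bSigma}-\bSigma$ cannot be handled by a plain union bound and requires a uniform deviation over $\H$; and one must simultaneously propagate the conditioning factor $1/\gamma$, the dimension factors $L,m$, and the a priori bound on $\|\bar{\balpha}\|$ through the perturbation identity so that the $1/\gamma^2$ and $L^{3/2}$ dependencies emerge as stated --- including checking that the $n^\tr$ lower bound indeed makes all the cross terms (those carrying both a $\nu$ and a vanishing sampling factor) negligible against the leading $\nu\sqrt{Lm}/\gamma$ term.
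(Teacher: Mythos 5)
Your proof is correct and follows the same blueprint as the paper's: linear-system perturbation analysis, with training-side concentration requiring a uniform deviation bound over $\H$ (because the probing classifiers are selected from $\H$ using $S^\tr$), validation-side concentration needing only a union bound over the $Lm$ fixed probing classifiers (by independence from $S^\val$), and the conditioning of the system controlled by the constraint-satisfaction margin $\gamma$ together with the $n^\tr$ threshold. The difference worth noting is the perturbation identity. The paper introduces an intermediate $\balpha = \bSigma^{-1}\bcE$, splits $\|\hat{\balpha}-\bar{\balpha}\| \le \|\hat{\balpha}-\balpha\| + \|\balpha-\bar{\balpha}\|$, and applies the classical relative-condition-number inequality to bound $\|\hat{\balpha}-\balpha\|$; this pushes a factor of $\|\bSigma\|\le L\sqrt{m}$ onto the validation term, producing the stated $\frac{Lm}{\gamma^2}\sqrt{L\log(Lm/\delta)/n^\val}$. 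Your single-step identity
\[
\hat{\balpha}-\bar{\balpha} \;=\; \hat{\bSigma}^{-1}\big[(\hat{\bcE}-\bcE)+(\bcE-\bSigma\bar{\balpha})+(\bSigma-\hat{\bSigma})\bar{\balpha}\big]
\]
avoids the intermediate $\balpha$ and does not incur $\|\bSigma\|$ on the validation term, so that piece comes out as the slightly sharper $O\big(\tfrac{\sqrt{Lm}}{\gamma}\sqrt{\log(Lm/\delta)/n^\val}\big)$, still of course dominated by the claimed bound. Two further small variations: you bound $\|\hat{\bSigma}^{-1}\|$ directly from the empirical constraints (which hold on $S^\tr$ by construction), whereas the paper first transfers the constraints to the population matrix $\bSigma$ by concentration and then bounds $\|\bSigma^{-1}\|$ via a Weyl-type argument; both give $O(1/\gamma)$ under the stated $n^\tr$ condition. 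And your self-bounding argument for $\|\bar{\balpha}\| = O(\sqrt{Lm}/\gamma)$ works but is more roundabout than necessary: since the entries of $\bSigma\bar{\balpha}$ are within $\nu$ of $\perf^\Dtrue[h^{\ell,i}]\in[0,1]$, one has $\|\bSigma\bar{\balpha}\| \le (1+\nu)\sqrt{Lm}$ directly, and then $\|\bar{\balpha}\| \le \|\bSigma^{-1}\|\,\|\bSigma\bar{\balpha}\| = O(\sqrt{Lm}/\gamma)$ with no iteration needed.
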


Because the probing classifiers are chosen using the training set alone, it is only the sampling errors from the training set that depend on the complexity of $\H$, and not those from the validation set. 
This suggests robustness of our approach to  a small validation set as long as the training set is sufficiently large and the number of basis functions is reasonably small. 

For the iterative plug-in method in Algorithm \ref{algo:FW}, we bound the gap between the metric value $\perf^D[\hat{h}]$ for the output classifier $\hat{h}$ on the true distribution $\Dtrue$, and the optimal value. We handle the case where the function $\psi$ is \textit{known} and its gradient $\nabla\psi$ can be computed in closed-form. The more general case of an unknown $\psi$ is handled in Appendix \ref{app:complex-unknown}.
The above bound depends on the gap between the estimated  class probabilities $\hat{\eta}_i^\tr(x)$ for the training distribution and true class probabilities $\eta_i^\tr(x) = \P(y=i|x)$, as well as the quality of the coefficients $\hat{\balpha}$ provided by the weight estimation subroutine, as measured by $\kappa(\cdot)$. 
One can substitute $\kappa(\cdot)$ with, e.g., the error bound provided in Theorem \ref{thm:alpha-diagonal-linear-conopt}. 

\begin{thm}[\textbf{Error Bound for FW-EG}]
\label{thm:iterative-plugin}
Let $\perf^\Dtrue[h] = \psi(C^\Dtrue_{11}[h],\ldots, C^\Dtrue_{mm}[h])$ for a \emph{known} concave function $\psi: [0,1]^m \>\R_+$, which is $Q$-Lipschitz and $\lambda$-smooth.  Fix $\delta \in (0, 1)$.
Suppose Assumption \ref{asp:alpha-star} holds, and for any linear metric $\sum_i\beta_i C^D_{ii}[h]$, whose associated weight coefficients is $\bar{\balpha}$ with $\|\bar{\balpha}\| \leq B$, 
 w.p. $\geq 1-\delta$ over draw of $S^\tr$ and $S^\val$, the weight estimation routine in Alg.\ \ref{algo:weight-coeff} outputs coefficients $\hat{\balpha}$ with
 $\|\hat{\balpha} -\bar{\balpha}\| \leq 
 \kappa(\delta, n^\tr, n^\val)
 $, for some function $\kappa(\cdot) > 0$. Let $B' = B + \sqrt{Lm}\,\kappa(\delta/T, n^\tr, n^\val).$ Then w.p.\  $\geq 1 - \delta$ over draws of $S^\tr$ and $S^\val$ from $\Dtrue$ and $\Dshift$ resp., the classifier $\hat{h}$ output by Algorithm \ref{algo:FW} after $T$ iterations satisfies:
\begin{eqnarray*}
\max_{h}\perf^D[h] - \perf^D[\hat{h}]\,\leq&
2QB'\E_x\left[\|\eta^\tr(x)-
\hat{\eta}^\tr(x)\|_1\right] +
 4Q\sqrt{Lm}\,\kappa(\textstyle\frac{\delta}{T}, n^\tr, n^\val) +\\
 &
 \hspace{-1.5cm}
\mathcal{O}\left(\lambda m\sqrt{\frac{m\log(m)\log(n^\val) + \log(m/\delta)}{n^\val}} + \frac{\lambda}{T} + Q\nu\right).
\end{eqnarray*}
\vskip -0.2cm
\end{thm}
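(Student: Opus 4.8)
The proof follows the classical Frank-Wolfe analysis for maximizing a smooth concave function over a convex set, with the twist that each linear maximization step is solved only approximately via the plug-in subroutine and the elicited weights. The plan is to (i) show the convex set $\cC = \{\diag(\C^\Dtrue[h]) : h \text{ randomized}\}$ is indeed convex and bounded (deferred to Appendix \ref{app:proof-FW}), (ii) quantify the per-step error incurred by \textbf{PI-EW} relative to the exact linear maximizer $\tilde{\c} \in \argmax_{\c\in\cC}\langle\bbeta^t,\c\rangle$, (iii) feed that per-step error into the standard Frank-Wolfe recursion to get a bound on $\psi(\c^*) - \psi(\c^T)$, and (iv) relate $\c^T$ (the running average of validation confusions) to the true confusion $\diag(\C^\Dtrue[\hat h])$ of the output classifier via concentration.

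\textbf{Step 1: per-step suboptimality of PI-EW.} Fix iteration $t$ with linear metric $\bbeta^t = \nabla\psi(\c^t)$, $\|\bbeta^t\|\le Q$ since $\psi$ is $Q$-Lipschitz. By Assumption \ref{asp:alpha-star} there is $\bar\balpha$ with $\|\bar\balpha\|_1\le B$ and $|\sum_{\ell,i}\bar\alpha^\ell_i\Phi^{\Dshift,\ell}_i[h] - \langle\bbeta^t,\diag(\C^\Dtrue[h])\rangle|\le\nu$ for all $h$; the elicitation routine returns $\hat\balpha$ with $\|\hat\balpha-\bar\balpha\|\le\kappa(\delta/T,n^\tr,n^\val)$ (union-bounding over the $T$ iterations, hence $\delta/T$). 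The weighted training objective $\E_{\Dshift}[\sum_i\widehat W_i(x)\1(y=i)h_i(x)] = \sum_{\ell,i}\hat\alpha^\ell_i\Phi^{\Dshift,\ell}_i[h]$ therefore approximates $\langle\bbeta^t,\diag(\C^\Dtrue[h])\rangle$ uniformly in $h$ up to $\nu + \sqrt{Lm}\,\kappa$ (using $\|\Phi\|\le 1$ coordinatewise and Cauchy–Schwarz on the $\hat\balpha-\bar\balpha$ term). The plug-in classifier $\hat f = \argmax_i \widehat W_i(x)\hat\eta^\tr_i(x)$ is the exact maximizer of the weighted objective if $\hat\eta^\tr$ were replaced by the true $\eta^\tr$; the estimation error $\E_x\|\eta^\tr(x)-\hat\eta^\tr(x)\|_1$ costs an extra $\|\widehat W\|_\infty$ factor — here $\|\widehat W\|_\infty \le \|\hat\balpha\|_1 \le B' := B + \sqrt{Lm}\,\kappa$ — giving a plug-in regret of order $B'\,\E_x\|\eta^\tr(x)-\hat\eta^\tr(x)\|_1$ (this is the standard cost-sensitive plug-in bound of \citet{narasimhan2015consistent,yang2020fairness}). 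Collecting terms, $\langle\bbeta^t,\tilde\c\rangle - \langle\bbeta^t,\diag(\C^\Dtrue[\hat f])\rangle$ is bounded by $2B'Q\,\E_x\|\eta^\tr(x)-\hat\eta^\tr(x)\|_1 + 2\sqrt{Lm}\,\kappa + 2\nu$ (a constant factor absorbs the two-sided approximation).

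\textbf{Step 2: Frank-Wolfe recursion.} With step size $\frac{2}{t+1}$ and the exact iterate $\c^{t+1} = (1-\frac{2}{t+1})\c^t + \frac{2}{t+1}\tilde\c^t$, $\lambda$-smoothness of $\psi$ gives the textbook recursion $\psi(\c^*) - \psi(\c^{t+1}) \le (1-\frac{2}{t+1})(\psi(\c^*)-\psi(\c^t)) + \frac{2}{t+1}\Delta_t + O(\frac{\lambda\,\mathrm{diam}(\cC)^2}{(t+1)^2})$, where $\Delta_t$ is the per-step error from Step 1 and $\mathrm{diam}(\cC) = O(\sqrt{m})$. Unrolling yields $\psi(\c^*) - \psi(\c^T) \le \max_t\Delta_t + O(\lambda m/T)$. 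Here $\c^*=\diag(\C^\Dtrue[h^*])$ is the optimum; since $\cC$ is exactly the set of achievable diagonal confusions, $\max_\c\psi(\c)$ over $\cC$ equals $\max_h\perf^\Dtrue[h]$.

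\textbf{Step 3: from $\c^T$ to the output classifier, plus sampling noise.} The algorithm actually tracks $\c^t$ as the running average of validation-sample confusions $\diag(\widehat\C^\val[\hat f])$, not population confusions, so I would (a) use a uniform concentration bound over the relevant (finite or low-complexity) class of plug-in classifiers to control $\sup_h\|\diag(\widehat\C^\val[h]) - \diag(\C^\Dtrue[h])\|$ by $O(m\sqrt{(m\log m\log n^\val + \log(m/\delta))/n^\val})$ — this is where the $\log(n^\val)$ and the $m\log m$ covering-number terms enter — and (b) note that $\psi$ being $\lambda$-smooth (hence its gradient bounded, so $\psi$ is also Lipschitz with the stated $Q$) lets this translate into an $O(\lambda m\sqrt{\cdots/n^\val})$ error on $\psi(\c^T)$ versus $\psi(\diag(\C^\Dtrue[h^T]))$. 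Since $h^T$ is the same convex combination of the plug-in classifiers that defines $\c^T$, linearity of $h\mapsto\C^\Dtrue[h]$ in the randomization gives $\diag(\C^\Dtrue[h^T]) = $ population average, matching $\c^T$ up to the just-bounded sampling error. Assembling Steps 1–3 and absorbing constants gives exactly the claimed inequality.

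\textbf{Main obstacle.} The delicate part is Step 3: carefully setting up the uniform convergence argument for the validation confusions over the data-dependent family of plug-in classifiers $\hat f$ produced across iterations, so that the capacity term is only $m\log m$ (not depending on $\H$ or $n^\tr$) and the $\log(n^\val)$ factor is correctly accounted for — and then propagating that through the smoothness of $\psi$ and the Frank-Wolfe averaging without losing factors of $T$. The per-step plug-in bound in Step 1 (tracking that $\|\widehat W\|_\infty$ is controlled by $B'$, which itself depends on $\kappa$ evaluated at confidence $\delta/T$) also needs care so the $2QB'$ coefficient comes out clean.
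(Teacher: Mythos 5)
Your high-level plan -- Frank-Wolfe on the set of diagonal confusion vectors, with per-step errors from the plug-in/elicitation subroutine and a uniform concentration bound for the validation confusions -- matches the paper's strategy. However, the accounting in Steps 1--3 has a gap that would prevent the argument from closing cleanly, and it concerns exactly the part you flagged as "the delicate part."

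The issue is where the $\mathcal{O}\bigl(\lambda m\sqrt{(m\log m\log n^\val + \log(m/\delta))/n^\val}\bigr)$ term comes from. You propose producing it in Step~3 by bounding $\psi(\c^T) - \psi(\diag(\C^\Dtrue[h^T]))$ ``via smoothness.'' But $\lambda$-smoothness controls \emph{gradient} differences, not function-value differences; a direct bound on $|\psi(\c^T) - \psi(\bar\c^T)|$ would use the $Q$-Lipschitz property and produce a $Q$-dependent term, not a $\lambda$-dependent one. The correct place for the smoothness is \emph{inside the per-step linear-maximization error}: the algorithm computes the gradient at the empirical iterate, $\bbeta^t = \nabla\psi(\c^t)$, but the Frank--Wolfe convergence lemma (the paper's Lemma on approximate linear maximization, applied to \emph{population} confusions) needs $\hat f$ to be near-optimal for the population gradient $\bar\bbeta^t = \nabla\psi(\diag(\C^\Dtrue[h^t]))$. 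Bridging these two requires $\|\bbeta^t - \bar\bbeta^t\|_\infty \le \lambda\|\c^t - \bar\c^t\|_1 \le \lambda m \|\c^t - \bar\c^t\|_\infty$, and the concentration bound on $\|\c^t - \bar\c^t\|_\infty$ (over the Natarajan-dimension-controlled class of averaged plug-in classifiers) then yields the $\lambda m\sqrt{\cdots}$ term inside $\Delta_t$. In your Step~1 you only bound $\langle\bbeta^t,\tilde\c\rangle - \langle\bbeta^t,\diag(\C^\Dtrue[\hat f])\rangle$ w.r.t.\ the \emph{empirical} gradient, and Step~2 then runs a Frank--Wolfe recursion on empirical iterates; but your per-step error and recursion are then measured in two different coordinate systems (empirical gradient / population confusions vs.\ empirical confusions), and Step~3's final conversion only fixes the endpoint $\c^T$, not the accumulated gradient mismatches at every iterate. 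The paper's resolution is cleaner: state the FW convergence lemma entirely in population terms ($\bar\c^t$, $\bar\bbeta^t$, $\C^\Dtrue$) so the output guarantee is automatically about $\perf^\Dtrue[\hat h]$, and absorb all estimation errors -- elicitation $\kappa$, plug-in regret, and the $\lambda m\|\c^t-\bar\c^t\|_\infty$ gradient mismatch -- into the per-step slack $\Delta$, with a union bound over $T$ iterations.

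One further detail your proposal omits: to make the concentration bound on $\hat\C^\val$ legitimately independent of the weight-elicitation step, the paper splits $S^\val$ into two halves (Algorithm~3$^*$), one used by \textbf{PI-EW} and the other to form $\tilde\c$; otherwise the probing classifiers and the empirical confusions are data-dependent in an entangled way. This is minor but worth including so the ``fixed vs.\ data-dependent classifier'' distinction in the concentration argument is airtight.
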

\vspace{-10pt}

The proof in turn derives an error bound for the plug-in classifier in Algorithm \ref{algo:linear-metrics} for linear metrics (see App.\ \ref{app:pi-ew}). 
\section{Related Work}
\label{sec:relatedwork}

\textbf{Methods for closed-form metrics.} There has been a variety of work on optimizing complex evaluation metrics, including both plug-in type algorithms~\cite{ye2012optimizing, narasimhan2014statistical, koyejo2014consistent, narasimhan2015consistent, yan2018binary}, and those that use convex surrogates for the metric~\cite{joachims2005support, kar2014online, kar2016online, narasimhan2015optimizing,eban2017scalable, narasimhan2019optimizing,hiranandani2020optimization}. These methods rely on the test metric having a specific closed-form structure and do not handle black-box metrics.

\textbf{Methods for black-box metrics.} 
Among recent black-box metric learning works, the closest to ours is \citeauthor{jiang2020optimizing}~\citeyearpar{jiang2020optimizing}, who  learn a weighted combination of surrogate losses to approximate the metric on a validation set.  
Like us, they probe the metric at multiple classifiers, but their approach has several drawbacks on both practical and theoretical fronts. Firstly, \citet{jiang2020optimizing} require retraining the model in each iteration, which can be time-intensive, whereas we only post-shift a pre-trained model. Secondly, the procedure they
prescribe for eliciting gradients requires perturbing the model parameters multiple times, which can be very expensive for large deep networks, whereas we only require perturbing the predictions from the model. Moreover, the number of perturbations they need grows \textit{polynomially} with the precision with which they need to estimate the loss coefficients, whereas we only require a \textit{constant} number of them.
Lastly, their approach does not come with strong statistical guarantees, whereas  ours does. Besides these benefits over~\cite{jiang2020optimizing}, we will also see in Section~\ref{sec:experiments} that our method yields better accuracies. Other related black-box learning methods include~\citet{zhao2019metric}, \citet{ren2018learning}, and~\citet{huang2019addressing}, who learn a (weighted) loss  to approximate the metric,
but do so using computationally expensive procedures (e.g.\ meta-gradient descent or RL) that often require retraining the model from scratch, and come with limited theoretical analysis. 

\textbf{Methods for distribution shift.} The literature on distribution shift is vast, and so we cover a few representative papers; see~\cite{frenay2013classification,csurka2017comprehensive} for a comprehensive discussion. For the \textit{independent label noise} setting~\cite{natarajan2013learning}, ~\citeauthor{patrini2017making}~\citeyearpar{patrini2017making} propose a loss correction approach that first trains a model with noisy label, use its predictions to estimate the noise transition matrix, and then re-trains model with the corrected loss. This approach is however tailored to optimize linear metrics; whereas, we can handle more complex metrics as well without re-training the underlying model. A plethora of  approaches exist for tackling \textit{domain shift}, including classical importance weighting (IW) strategies~\cite{sugiyama2008direct,shimodaira2000improving, kanamori2009least, lipton2018detecting} that work in two steps:  estimate the density ratios and train a model with the resulting weighted loss. One such approach is Kernel Mean Matching~\cite{huang2006correcting}, which matches covariate distributions between training and test sets in a high dimensional RKHS feature space. These IW approaches are however prone to over-fitting when used with deep networks~\cite{byrd2019effect}. More recent iterative variants seek to remedy this~\cite{fang2020rethinking}.
\vspace{-0.25cm}
\section{Experiments}
\label{sec:experiments}
\vskip -0.15cm
We run experiments on four classification tasks, with both known and black-box metrics, and under different label noise and domain shift settings. 
All our experiments use a large training sample, which is either noisy or contains missing attributes, and a smaller clean (and complete) validation sample. We always optimize the cross-entropy loss for learning  $\hat\eta^{\tr}(x) \approx \P^\Dshift(Y|x)$ using the training set (or $\hat\eta^{\val}(x) \approx \P^\Dtrue(Y|x)$ for some baselines), where the  models are varied across experiments. For monitoring the quality of $\hat\eta^{\tr}$ and $\hat\eta^{\val}$, we sample small subsets \emph{hyper-train} and \emph{hyper-val} data from the original training and validation data, respectively. We repeat our experiments over 5 random train-vali-test splits, and report the mean and standard deviation for each metric. We will use $^*$, $^{**}$, and $^{***}$ to denote that the differences between our method and the closest baseline are statistically significant (using Welch's t-test)  at a confidence level of 90\%, 95\%, and 99\%, respectively. Table~\ref{tab:stats} in App.~\ref{app:exp} summarizes the datasets used. The source code (along with random seeds) is provided on the link below.\footnote{\url{https://github.com/koyejolab/fweg/}}

\textbf{Common baselines}:
We use representative baselines from the black-box learning \cite{jiang2020optimizing},  
iterative re-weighting \cite{ren2018learning}, label noise correction \cite{patrini2017making}, and importance weighting \cite{huang2006correcting} literatures.
First, we list the ones common to all experiments.

\begin{enumerate}[leftmargin=1cm, itemsep=-2pt]
    \item \textbf{Cross-entropy [train]:} Maximizes accuracy on the training set and predicts:
    $$
    \widehat{h}(x) \,\in\, \argmax_{i \in [m]} \hat{\eta}^\tr_i(x).
    $$
    \item \textbf{Cross-entropy [val]:} 
    Maximizes accuracy on the validation set and predicts:
    $$
    \widehat{h}(x) \,\in\, \argmax_{i \in [m]} \hat{\eta}^\val_i(x).
    $$
    \item \textbf{Fine-tuning:} Fine-tunes the pre-trained $\hat\eta^{\text{tr}}$ using the validation data, monitoring the cross-entropy loss on the hyper-val data for early stopping. 
    \item \textbf{Opt-metric [val]:} 
    For  metrics $\psi(\C^D[h])$, for which $\psi$ is  \textit{known}, trains a model to directly maximize the metric on the small \textit{validation} set using the Frank-Wolfe based algorithm of \cite{narasimhan2015consistent}.
    \item \textbf{Learn-to-reweight} \cite{ren2018learning}:  Jointly  learns example weights, with the model,  to maximize accuracy on the validation set; does not handle specialized metrics.
    \item \textbf{Plug-in [train-val]:} 
    Constructs a classifier $\widehat{h}(x) \,\in\, \argmax_{i} w_i\hat{\eta}^\val_i(x)$, where the weights $w_i \in \R$ are tuned to maximize the given metric on the validation set, using a coordinate-wise line search (details in  Appendix~\ref{ssec:multiclass-plugin}). 
    \item \textbf{Adaptive Surrogates}~\cite{jiang2020optimizing}: Learns a weighted combination of surrogate losses (evaluated on clusters of examples) 
    to approximate the metric on the validation set. Since this method is not directly amenable for use with large neural networks (see Section~\ref{sec:relatedwork}), we compare with it only when using linear models, and present additional comparisons in App.~\ref{app:exp} (Table~\ref{tab:addedexp}).
\end{enumerate}

\textbf{Hyper-parameters:} The learning rate for Fine-tuning is chosen from  $1\text{e}^{\{-6,\dots,-4\}}$. For PI-EW and FW-EG, we tune the parameter $\epsilon$ from $\{1, 0.4, 1\text{e}^{-\{4,3,2,1\}}\}$. The line search for Plug-in is performed with a  spacing of $1\text{e}^{-4}$. The only hyper-parameters the other baselines have are those for training $\hat{\eta}^\tr$ and $\hat{\eta}^\val$, which we state in the individual tasks.

\subsection{Maximizing Accuracy under Label Noise}
\label{ssec:cifar10}

\begin{table}[t]
    \small
    \centering
    \caption{Test accuracy for noisy label experiment on CIFAR-10.}
    \vskip -0.25cm
    \begin{tabular}{ll}
    \hline
        Cross-entropy [train] &  0.582 $\pm$ 0.007\\
        Cross-entropy [val] & 0.386 $\pm$ 0.031 \\
        Learn-to-reweight & 0.651	$\pm$ 0.017\\
        Plug-in [train-val] & 0.733	$\pm$ 0.044\\
        Forward Correction & 0.757 $\pm$	0.005\\
        Fine-tuning & 0.769 $\pm$	0.005\\
        \hline
        PI-EW & $\textbf{0.781} \pm \textbf{0.019}$\\
        \hline
    \end{tabular}
    \label{tab:cifar10assym}
    \vskip -0.3cm
\end{table}

In our first task, we train a 10-class image classifier for the CIFAR-10 dataset~\cite{krizhevsky2009learning}, replicating the independent (asymmetric) label noise setup
from~\cite{patrini2017making}. The evaluation metric we use is accuracy. We take 2\% of original training data as validation data and flip labels in the remaining training set based on the following transition matrix: {\small TRUCK $\rightarrow$ AUTOMOBILE, BIRD $\rightarrow$ PLANE, DEER $\rightarrow$ HORSE, CAT $\leftrightarrow$ DOG}, with a flip probability of 0.6. For $\hat\eta^{\text{tr}}$ and $\hat\eta^{\text{val}}$, we use the same ResNet-14 architecture as~\cite{patrini2017making}, trained using SGD with momentum 0.9, weight decay $1\text{e}^{-4}$, 
and  learning rate 0.01, which we divide by 10 after 40 and 80 epochs (120 in total). 

We additionally compare with the \emph{Forward Correction} method of~\cite{patrini2017making}, a specialized method for correcting independent label noise, which estimates the noise transition matrix $\T$ using predictions from $\hat\eta^{\text{tr}}$ on the training set, and retrains it with the corrected loss, thus  training the ResNet twice. 
We saw a notable drop with this method when we used the (small) validation set to estimate $\T$.

We apply the proposed PI-EW method for linear metrics, using a weighting function $\W$ defined with one of two choices for the basis functions (chosen via cross-validation): (i) a default basis function that clusters all the points together $\phi^{\text{def}}(x) = 1 \, \forall x$, and (ii) 
ten basis functions $\phi^1, \dots, \phi^{10}$, each one being the average of the RBF kernels (see \eqref{eq:softlcuster}) centered at validation points belonging to a true class. The RBF kernels are computed with width 2 on UMAP-reduced 50-dimensional image embeddings~\cite{mcinnes2018umap}.

As shown in Table~\ref{tab:cifar10assym}, PI-EW achieves significantly better test accuracies than all the baselines. The results for Forward Correction matches those in \cite{patrini2017making}; unlike this method, we train the ResNet only once, but achieve  2.4\% higher accuracy. Cross-entropy [val] over-fits badly, and yields the least test accuracy. Surprisingly, the simple fine-tuning yields the second-best accuracy. A possible reason is that the pre-trained model learns a good feature representation, and the fine-tuning step adapts well to the domain change. We also observed that PI-EW achieves better accuracy during cross-validation with ten basis functions,  highlighting the benefit of the underlying modeling in PI-EW. Lastly, in Figure~\ref{fig:weightscifar}, we show the elicited (class) weights with the default basis function  ($\phi^{\text{def}}(x) = 1 \, \forall x$), where e.g. because BIRD $\rightarrow$ PLANE, the weight on BIRD is upweighted and that on PLANE is down-weighted. 

\subsection{Maximizing G-mean with Proxy Labels}
\label{ssec:adult}

Our next experiment borrows the  ``proxy label'' setup from ~\citet{jiang2020optimizing} on the Adult dataset~\cite{Dua:2019}. The task is to predict whether a candidate's gender is male, but the training set contains only a proxy for the true label. 
We  sample 1\% validation data from the original training data, and replace the labels in the remaining sample with the feature `relationship-husband'. The label noise here is instance-dependent (see Example~\ref{ex:instnoise}), and we seek to maximize the G-mean metric: {\small $\psi(\C) \,=\,\big(\prod_i \big({C_{ii}}/\sum_j C_{ij}\big)\big)^{1/m}$}. 

We train $\hat\eta^{\text{tr}}$ and $\hat\eta^{\text{val}}$ using linear logistic regression using SGD with a learning rate of 0.01. As additional baselines, we include the Adaptive Surrogates method of \cite{jiang2020optimizing} and \emph{Forward Correction}~\cite{patrini2017making}. The inner and outer learning rates for Adaptive Surrogates are each cross-validated in $\{0.1, 1.0\}$. We also compare with a simple Importance Weighting strategy, where we first train a  logistic regression model $f$ to predict if an example $(x,y)$ belongs to the validation data, and train a gender classifier with the training examples weighted by {\small $f(x,y)/(1 - f(x,y))$}. 

We choose between three sets of basis functions (using cross-validation): (i) a default basis function 
$\phi^{\text{def}}(x) = 1 \, \forall x$, (ii) $\phi^{\text{def}}, \phi^{\text{pw}}, \phi^{\text{npw}}$, where  $\phi^{\text{pw}}(x) = \1(x_{\text{pw}} = 1)$ and $\phi^{\text{npw}}(x) = \1(x_{\text{npw}} = 1)$ use features  `private-workforce' and `non-private-workforce' to form hard clusters, (iii) $\phi^{\text{def}}$, $\phi^{\text{pw}}, \phi^{\text{npw}}, \phi^{\text{inc}}$, where $\phi^{\text{inc}}(x) = \1(x_{\text{inc}} = 1)$ uses the binary feature `income'. These choices are motivated from those used by~\cite{jiang2020optimizing}, who compute surrogate losses on the individual clusters. We provide their Adaptive Surrogates method with the same clustering choices.

Table~\ref{tab:adultproxy} summarizes our results. We apply both variants of our FW-EG method for a non-linear metric $\psi$, one where $\psi$ is \textit{known} and its gradient is available in closed-form, and the other where $\psi$ is assumed to be \textit{unknown}, and is treated as a general black-box metric. Both variants perform similarly and are better than the baselines. Adaptive Surrogates comes a close second, but underperforms by 0.3\% (with results being statistically significant). While the improvement of FW-EG over Adaptive Surrogates is small, the latter is time intensive as, in each iteration, it re-trains a logistic regression model. We verify this empirically in Figure~\ref{fig:time} by reporting run-times for Adaptive Surrogates and our method FW-EG (including the pre-training time) against the choices of basis functions (clustering features). We see that our approach is 5$\times$ faster for this experiment. Lastly, Forward Correction performs poorly, likely because its loss correction is not aligned with this label noise model.

\begin{table}[t]
    \small
    \centering
    \caption{Test G-mean for proxy label experiment on Adult.}
    \vskip -0.25cm
    \begin{tabular}{ll}
    \hline
        Cross-entropy [train] &  0.654	$\pm$ 0.002\\
        Cross-entropy [val] & 0.394 $\pm$	0.064 \\
        Opt-metric [val] & 0.652 $\pm$	0.027\\
        Learn-to-reweight & 0.668	$\pm$ 0.003\\
        Plug-in [train-val] & 0.672 $\pm$ 	0.013\\
        Forward Correction & 0.214 $\pm$	0.004\\
        Fine-tuning & 0.631	$\pm$ 0.017\\
        Importance Weights & 0.662 $\pm$	0.024\\
        Adaptive Surrogates & 0.682	 $\pm$ 0.002\\
        \hline
        FW-EG [unknown $\psi$] & $\textbf{0.685}	\pm \textbf{0.002}^{**}$\\
        FW-EG [known $\psi$] & $\textbf{0.685}	\pm \textbf{0.001}^{*}$\\
        \hline
    \end{tabular}
    \label{tab:adultproxy}
    \vskip -0.3cm
\end{table}

\subsection{Maximizing F-measure under Domain Shift}
\label{ssec:adience}

We now move on to a domain shift application  (see Example~\ref{ex:ds}). The task is to learn a gender recognizer for the Adience face image dataset~\cite{eidinger2014age}, but with the training and test datasets containing images from different age groups (domain shift based on age). We use images belonging to age buckets 1--5 for training (12.2K images), and evaluate on images from age buckets 6--8 (4K images). For the validation set, we sample 20\% of the 6--8 age bucket images. Here we aim to maximize the F-measure.

For $\hat\eta^{\text{tr}}$ and $\hat\eta^{\text{val}}$, we
use the same ResNet-14 model from the CIFAR-10 experiment, except that the learning rate is divided by 2 after 10 epochs (20 in total). As an additional baseline, we compute importance weights using \emph{Kernel Mean Matching (KMM)}~\cite{huang2006correcting}, and train the same ResNet model with a weighted loss. Since the image size is large for directly applying KMM, we first compute the 2048-dimensional ImageNet embedding~\cite{krizhevsky2012imagenet} for the images and further reduce them to 10-dimensions via UMAP. The KMM weights are learned on the 10-dimensional embedding. For the basis functions, besides the default basis $\phi^{\text{def}}(x) = 1 \, \forall x$, we choose from subsets of six RBF basis functions $\phi^1,\ldots,\phi^6$, centered at points from the validation set, each representing one of six age-gender combinations. We use the same UMAP embedding as KMM to compute the RBF kernels.

Table~\ref{tab:adiencecovs} presents the test F-measure values. Both variants of FW-EG algorithm provide statistically significant improvements over the baselines. Both Fine-tuning and Learning-to-reweight improve over plain cross-entropy optimization (train), however only moderately, likely because of the small size of the validation  set, and because these methods are not tailored to optimize the F-measure. 

\begin{table}[t]
    \small
    \centering
    \caption{Test F-measure for domain shift experiment on Adience.}
    \vskip -0.25cm
    \begin{tabular}{ll}
    \hline
        Cross-entropy [train] & 0.760 $\pm$	0.014\\
        Cross-entropy [val] & 0.708 $\pm$	0.022 \\
        Opt-metric [val] & 0.760 $\pm$	0.014\\
        Plug-in [train-val] & 0.759 $\pm$	0.014\\
        Importance Weights [KMM] & 0.760 $\pm$	0.013\\
        Learn-to-reweight & 0.773	$\pm$ 0.009\\
        Fine-tuning & 0.781	$\pm$ 0.014\\
        \hline
        FW-EG [unknown $\psi$] & $\textbf{0.815}	\pm \textbf{0.013} ^{***}$\\
        FW-EG [known $\psi$] & $\textbf{0.804} \pm	\textbf{0.015}^{***}$\\
        \hline
    \end{tabular}
    \label{tab:adiencecovs}
    \vskip -0.3cm
\end{table}

\subsection{Maximizing Black-box Fairness Metric}
\label{ssec:adultbb}

We next handle a black-box metric given only query access to its value. We consider a fairness application where the goal is to balance classification performance across multiple protected groups.  The groups that one cares about are known,
but due to privacy or legal restrictions, the protected attribute for an individual cannot be  revealed~\cite{awasthi+21}. Instead, we have access to an oracle that reveals the value of the fairness metric for predictions on a validation sample, with the protected attributes absent from the training sample. This setup is different from recent work on learning fair classifiers from incomplete group information \cite{lahoti2019ifair, wang2020robust}, in that the focus here is on optimizing \textit{any} given black-box fairness metric.

We use the Adult dataset, and seek to predict whether the candidate's income is greater than \$50K, with \textit{gender} as the protected group. The black-box  metric we consider (whose form is unknown to the learner) is the geometric mean of the true-positive (TP) and true-negative (TN) rates, evaluated separately on the male and female examples, which promotes equal performance for both groups and classes:
\begin{equation*}
\perf^\Dtrue[h] = \left( \text{TP}^{\text{male}}[h]\,\text{TN}^{\text{male}}[h])\,\text{TP}^{\text{female}}[h]\,\text{TN}^{\text{female}}[h] \right)^{1/4}.
\label{eq:fairmetric}
\end{equation*}

We train the same logistic regression models as in previous Adult experiment in Section \ref{ssec:adult}. Along with the  basis functions $\phi^{\text{def}}$, $\phi^{\text{pw}}$ and $\phi^{\text{npw}}$ we used there, we additionally include two basis $\phi^{\text{hs}}$ and $\phi^{\text{wf}}$  based on features `relationship-husband' and `relationship-wife', which we expect to have correlations with gender.\footnote{The only domain knowledge we use is that the protected group is ``gender"; beyond this, the
form of the metric is unknown, and importantly, an individual's gender is not available.} 
We include two baselines that can handle black-box metrics: Plug-in [train-val], which tunes a threshold on $\hat{\eta}^\tr$ by querying the metric on the validation set, and Adaptive Surrogates. The latter is cross-validated on the same set of clustering features (i.e., basis functions in our method) for computing the surrogate losses. 

As seen in Table~\ref{tab:adultbb}, FW-EG yields the highest black-box metric on the test set, Adaptive Surrogates comes in second, and surprisingly the simple plug-in approach fairs better than the other baselines. During cross-validation, we also observed that the performance of FW-EG improves with more basis functions, particularly with the ones that are better correlated with gender. Specifically, 
FW-EG with basis functions  $\{\phi^{\text{def}}, \phi^{\text{pw}}, \phi^\text{npw},  \phi^\text{wf}, \phi^\text{hs}\}$ achieves
approximately
1\% better performance than both
FW-EG with $\phi^{\text{def}}$ basis function and FW-EG with basis functions $\{\phi^{\text{def}},\phi^{\text{pw}}, \phi^\text{npw}\}$.

\subsection{Ablation Studies}
\label{ssec:ablation}

\begin{figure*}[t]
	\centering 
	\subfigure[]{
        \hspace{-0.3cm}
		{\includegraphics[width=4cm]{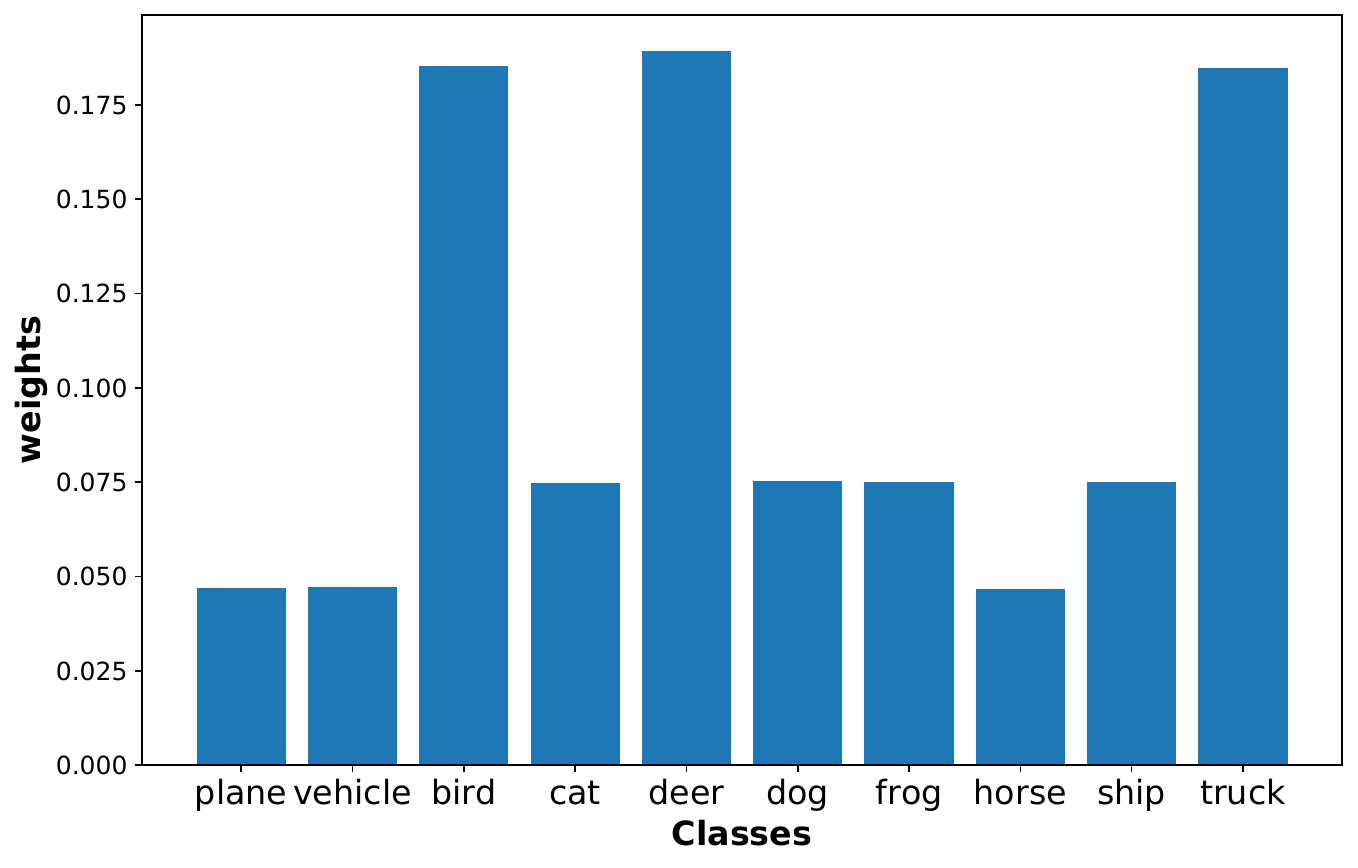}}
		\label{fig:weightscifar}
	}
	\subfigure[]{
        \hspace{-0.375cm}
		{\includegraphics[width=4cm]{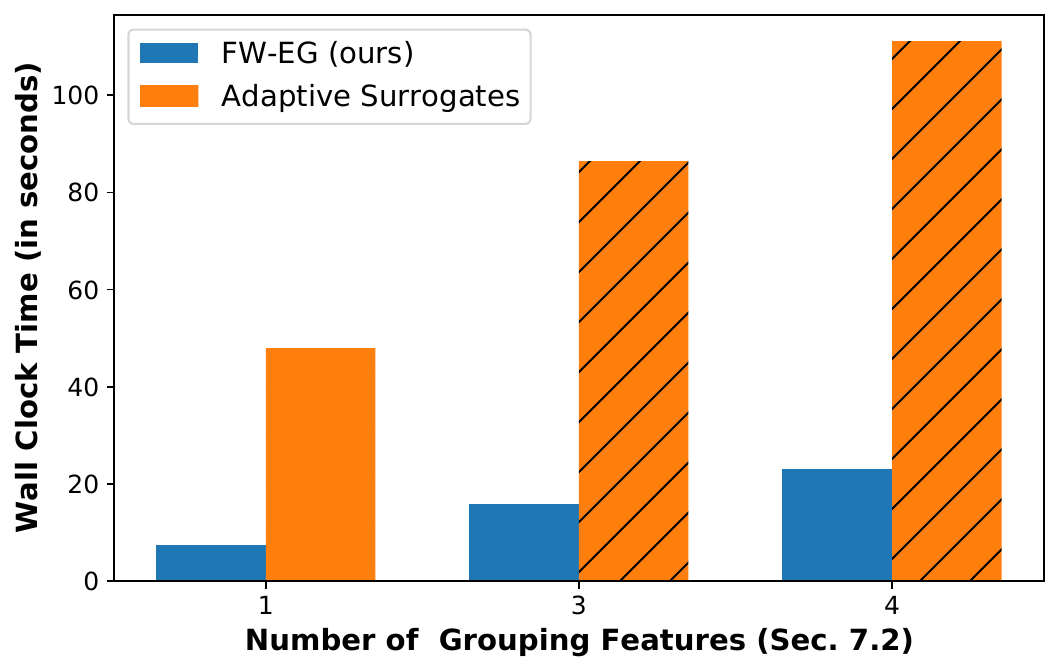}}
		\label{fig:time}
	}
	\subfigure[]{
        \hspace{-0.375cm}
		{\includegraphics[width=4cm]{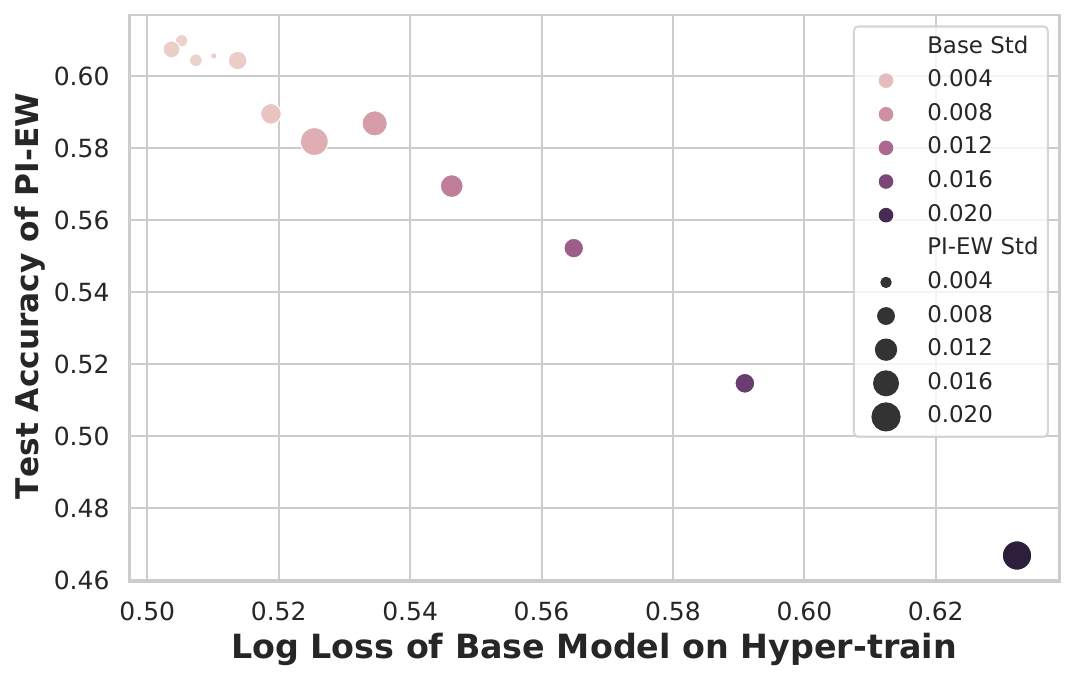}}
		\label{fig:periodicmain}
	}
	\subfigure[]{
        \hspace{-0.375cm}
		{\includegraphics[width=4cm]{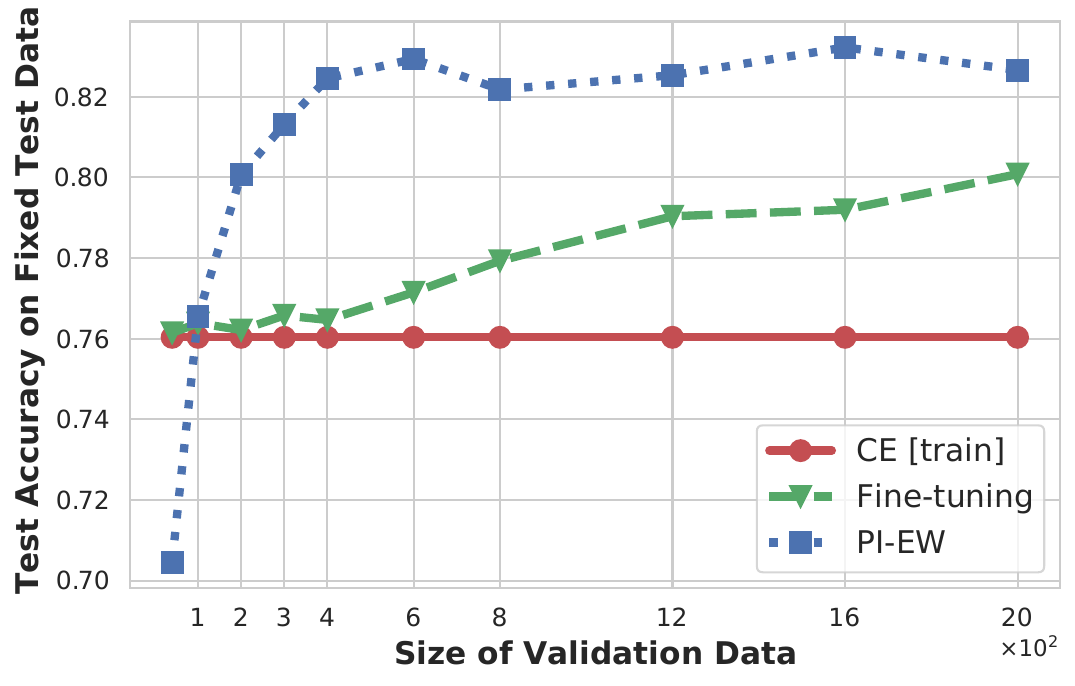}}
		\label{fig:valsizemain}
	}
	\vskip -0.2cm
	\caption{(a) Elicited (class) weights for CIFAR-10 by PI-EW for the default basis (Sec.~\ref{ssec:cifar10}); (b) Run-time for FW-EG and Adaptive Surrogates~\cite{jiang2020optimizing}
    vs no.\ of grouping features on proxy label task (Sec.~\ref{ssec:adult}); (c)  Effect of quality of the base model $\hat\eta^{\tr}$ on Adult (Sec.~\ref{ssec:adult}): as the base model's quality improves, the test accuracies of PI-EW also improves; (d) Effect of the validation set size on Adience (Sec.~\ref{ssec:adience}): PI-EW performs better than fine-tuning even for small validation sets, while both improve with larger ones.
	}
	\label{fig:mainabalation}
\end{figure*}

\begin{table}[t]
    \small
    \centering
    \caption{Black-box fairness metric on the test set for Adult.}
    \vskip -0.25cm
    \begin{tabular}{ll}
    \hline
        Cross-entropy [train] & 0.736 $\pm$	0.005\\
        Cross-entropy [val] & 0.610 $\pm$	0.020 \\
        Learn-to-reweight &  0.729	$\pm$ 0.007 \\
        Fine-tuning & 0.738 $\pm$	0.005\\
        Adaptive Surrogates & 0.812	$\pm$ 0.004\\
        Plug-in [train-val] & 0.812 $\pm$	0.005\\
        \hline
        FW-EG & $\textbf{0.822}	\pm \textbf{0.002}^{***}$\\
        \hline
    \end{tabular}
    \label{tab:adultbb}
    \vskip -0.3cm
\end{table}

We close with two sets of experiments. First, we analyze how the performance of PI-EW, while optimizing accuracy for the Adult experiment (Section~\ref{ssec:adult}), varies with the quality of the base model $\hat{\eta}^\tr$. We save an estimate of $\hat\eta^{\tr}$ after every 50 batches (batch size 32) while training the logistic regression model, and use these estimates as inputs to PI-EW. As shown in Figure~\ref{fig:periodicmain}, the test accuracies for PI-EW improves with the quality of $\hat{\eta}^\tr$ (as measured by the log loss on the hyper-train set).  This is in accordance with Theorem~\ref{thm:iterative-plugin}. One can further improve the quality of the estimate $\eta^{\text{tr}}$ by using calibration techniques~\cite{guo2017calibration}, which will likely enhance the performance of PI-EW as well.

Next, we show that PI-EW is robust to changes in the validation set size when trained on the Adience experiment in Section~\ref{ssec:adience} to optimize accuracy. We set aside 50\% of 6--8 age bucket data for testing, and sample varying sizes of validation data from the rest. As shown in Figure~\ref{fig:valsizemain}, PI-EW generally performs better than fine-tuning even for small validation sets, while both improve with larger ones. The only exception is 100-sized validation set (0.8\% of training data), where we see overfitting due to small validation size.

\vspace{-0.2cm}
\section{Conclusion and Discussion}
\label{sec:discussion}
\vskip -0.1cm

We have proposed the FW-EG method for optimizing black-box metrics given query access to the evaluation metric on a small validation set. Our framework includes common distribution shift settings as special cases, and unlike prior distribution correction strategies, is able to handle general non-linear metrics. A key benefit of our method is that it is agnostic to the choice of  $\hat\eta^{\tr}$, and can thus be used to post-shift 
pre-trained deep networks, without having to retrain them. We showed that the post-shift example weights can be flexibly modeled with various choices of basis functions (e.g., hard clusters, RBF kernels, etc.) and  empirically demonstrated their efficacies. We look forward to further improving the results with more nuanced basis functions. 

\vspace{-0.2cm}
\section*{Acknowledgements}

We thank the anonymous reviewers for their helpful and constructive feedback.
We also thank Google Cloud for supporting this research with cloud computing credits.

\bibliography{main}

\begin{thebibliography}{53}
\providecommand{\natexlab}[1]{#1}
\providecommand{\url}[1]{\texttt{#1}}
\expandafter\ifx\csname urlstyle\endcsname\relax
  \providecommand{\doi}[1]{doi: #1}\else
  \providecommand{\doi}{doi: \begingroup \urlstyle{rm}\Url}\fi

\bibitem[Awasthi et~al.(2021)Awasthi, Beutel, Kleindessner, Morganstern, and
  Wang]{awasthi+21}
Pranjal Awasthi, Alex Beutel, Matthaus Kleindessner, Jamie Morganstern, and
  Xuezhi Wang.
\newblock Evaluating fairness of machine learning models under uncertain and
  incomplete information.
\newblock In \emph{FAccT}, 2021.

\bibitem[Byrd and Lipton(2019)]{byrd2019effect}
Jonathon Byrd and Zachary Lipton.
\newblock What is the effect of importance weighting in deep learning?
\newblock In \emph{International Conference on Machine Learning}, pages
  872--881. PMLR, 2019.

\bibitem[Cotter et~al.(2019{\natexlab{a}})Cotter, Gupta, and
  Narasimhan]{cotter19stochastic}
Andrew Cotter, Maya Gupta, and Harikrishna Narasimhan.
\newblock On making stochastic classifiers deterministic.
\newblock In \emph{Advances in Neural Information Processing Systems},
  2019{\natexlab{a}}.

\bibitem[Cotter et~al.(2019{\natexlab{b}})Cotter, Jiang, Wang, Narayan, You,
  Sridharan, and Gupta]{cotter2019optimization}
Andrew Cotter, Heinrich Jiang, Serena Wang, Taman Narayan, Seungil You, Karthik
  Sridharan, and Maya~R. Gupta.
\newblock Optimization with non-differentiable constraints with applications to
  fairness, recall, churn, and other goals.
\newblock \emph{Journal of Machine Learning Research (JMLR)}, 20\penalty0
  (172):\penalty0 1--59, 2019{\natexlab{b}}.

\bibitem[Csurka(2017)]{csurka2017comprehensive}
Gabriela Csurka.
\newblock A comprehensive survey on domain adaptation for visual applications.
\newblock \emph{Domain adaptation in computer vision applications}, pages
  1--35, 2017.

\bibitem[Daniely et~al.(2011)Daniely, Sabato, Ben-David, and
  Shalev-Shwartz]{daniely2011multiclass}
Amit Daniely, Sivan Sabato, Shai Ben-David, and Shai Shalev-Shwartz.
\newblock Multiclass learnability and the erm principle.
\newblock In \emph{Proceedings of the 24th Annual Conference on Learning
  Theory}, pages 207--232. JMLR Workshop and Conference Proceedings, 2011.

\bibitem[Daniely et~al.(2015)Daniely, Sabato, Ben-David, and
  Shalev-Shwartz]{daniely2015multiclass}
Amit Daniely, Sivan Sabato, Shai Ben-David, and Shai Shalev-Shwartz.
\newblock Multiclass learnability and the erm principle.
\newblock \emph{Journal of Machine Learning Research}, 16:\penalty0 2377--2404,
  2015.

\bibitem[Daskalaki et~al.(2006)Daskalaki, Kopanas, and Avouris]{Daskalaki+06}
S.~Daskalaki, I.~Kopanas, and N.~Avouris.
\newblock Evaluation of classifiers for an uneven class distribution problem.
\newblock \emph{Applied Artificial Intelligence}, 20:\penalty0 381--417, 2006.

\bibitem[Demmel(1997)]{demmel1997applied}
James~W Demmel.
\newblock \emph{Applied numerical linear algebra}.
\newblock SIAM, 1997.

\bibitem[Dua and Graff(2017)]{Dua:2019}
Dheeru Dua and Casey Graff.
\newblock {UCI} machine learning repository, 2017.
\newblock URL \url{http://archive.ics.uci.edu/ml}.

\bibitem[Eban et~al.(2017)Eban, Schain, Mackey, Gordon, Rifkin, and
  Elidan]{eban2017scalable}
Elad Eban, Mariano Schain, Alan Mackey, Ariel Gordon, Ryan Rifkin, and Gal
  Elidan.
\newblock Scalable learning of non-decomposable objectives.
\newblock In \emph{Artificial intelligence and statistics}, pages 832--840.
  PMLR, 2017.

\bibitem[Eidinger et~al.(2014)Eidinger, Enbar, and Hassner]{eidinger2014age}
Eran Eidinger, Roee Enbar, and Tal Hassner.
\newblock Age and gender estimation of unfiltered faces.
\newblock \emph{IEEE Transactions on Information Forensics and Security},
  9\penalty0 (12):\penalty0 2170--2179, 2014.

\bibitem[Fang et~al.(2020)Fang, Lu, Niu, and Sugiyama]{fang2020rethinking}
Tongtong Fang, Nan Lu, Gang Niu, and Masashi Sugiyama.
\newblock Rethinking importance weighting for deep learning under distribution
  shift.
\newblock \emph{arXiv preprint arXiv:2006.04662}, 2020.

\bibitem[Fr{\'e}nay and Verleysen(2013)]{frenay2013classification}
Beno{\^\i}t Fr{\'e}nay and Michel Verleysen.
\newblock Classification in the presence of label noise: a survey.
\newblock \emph{IEEE transactions on neural networks and learning systems},
  25\penalty0 (5):\penalty0 845--869, 2013.

\bibitem[Guo et~al.(2017)Guo, Pleiss, Sun, and Weinberger]{guo2017calibration}
Chuan Guo, Geoff Pleiss, Yu~Sun, and Kilian~Q Weinberger.
\newblock On calibration of modern neural networks.
\newblock In \emph{International Conference on Machine Learning}, pages
  1321--1330. PMLR, 2017.

\bibitem[Hiranandani et~al.(2019{\natexlab{a}})Hiranandani, Boodaghians, Mehta,
  and Koyejo]{hiranandani2018eliciting}
Gaurush Hiranandani, Shant Boodaghians, Ruta Mehta, and Oluwasanmi Koyejo.
\newblock Performance metric elicitation from pairwise classifier comparisons.
\newblock In \emph{The 22nd International Conference on Artificial Intelligence
  and Statistics}, pages 371--379, 2019{\natexlab{a}}.

\bibitem[Hiranandani et~al.(2019{\natexlab{b}})Hiranandani, Boodaghians, Mehta,
  and Koyejo]{hiranandani2019multiclass}
Gaurush Hiranandani, Shant Boodaghians, Ruta Mehta, and Oluwasanmi~O Koyejo.
\newblock Multiclass performance metric elicitation.
\newblock In \emph{Advances in Neural Information Processing Systems}, pages
  9351--9360, 2019{\natexlab{b}}.

\bibitem[Hiranandani et~al.(2020)Hiranandani, Vijitbenjaronk, Koyejo, and
  Jain]{hiranandani2020optimization}
Gaurush Hiranandani, Warut Vijitbenjaronk, Sanmi Koyejo, and Prateek Jain.
\newblock Optimization and analysis of the pap@ k metric for recommender
  systems.
\newblock In \emph{International Conference on Machine Learning}, pages
  4260--4270. PMLR, 2020.

\bibitem[Huang et~al.(2019)Huang, Zhai, Talbott, Martin, Sun, Guestrin, and
  Susskind]{huang2019addressing}
Chen Huang, Shuangfei Zhai, Walter Talbott, Miguel~Bautista Martin, Shih-Yu
  Sun, Carlos Guestrin, and Josh Susskind.
\newblock Addressing the loss-metric mismatch with adaptive loss alignment.
\newblock In \emph{International Conference on Machine Learning}, pages
  2891--2900. PMLR, 2019.

\bibitem[Huang et~al.(2006)Huang, Gretton, Borgwardt, Sch{\"o}lkopf, and
  Smola]{huang2006correcting}
Jiayuan Huang, Arthur Gretton, Karsten Borgwardt, Bernhard Sch{\"o}lkopf, and
  Alex Smola.
\newblock Correcting sample selection bias by unlabeled data.
\newblock \emph{Advances in neural information processing systems},
  19:\penalty0 601--608, 2006.

\bibitem[Jaggi(2013)]{Jaggi13}
M.~Jaggi.
\newblock Revisiting {F}rank-{W}olfe: {P}rojection-free sparse convex
  optimization.
\newblock In \emph{ICML}, 2013.

\bibitem[Jiang et~al.(2020)Jiang, Adigun, Narasimhan, Fard, and
  Gupta]{jiang2020optimizing}
Qijia Jiang, Olaoluwa Adigun, Harikrishna Narasimhan, Mahdi~Milani Fard, and
  Maya Gupta.
\newblock Optimizing black-box metrics with adaptive surrogates.
\newblock In \emph{ICML}, 2020.

\bibitem[Joachims(2005)]{joachims2005support}
Thorsten Joachims.
\newblock A support vector method for multivariate performance measures.
\newblock In \emph{Proceedings of the 22nd international conference on Machine
  learning}, pages 377--384. ACM, 2005.

\bibitem[Kanamori et~al.(2009)Kanamori, Hido, and Sugiyama]{kanamori2009least}
Takafumi Kanamori, Shohei Hido, and Masashi Sugiyama.
\newblock A least-squares approach to direct importance estimation.
\newblock \emph{The Journal of Machine Learning Research}, 10:\penalty0
  1391--1445, 2009.

\bibitem[Kar et~al.(2014)Kar, Narasimhan, and Jain]{kar2014online}
Purushottam Kar, Harikrishna Narasimhan, and Prateek Jain.
\newblock Online and stochastic gradient methods for non-decomposable loss
  functions.
\newblock \emph{arXiv preprint arXiv:1410.6776}, 2014.

\bibitem[Kar et~al.(2016)Kar, Li, Narasimhan, Chawla, and
  Sebastiani]{kar2016online}
Purushottam Kar, Shuai Li, Harikrishna Narasimhan, Sanjay Chawla, and Fabrizio
  Sebastiani.
\newblock Online optimization methods for the quantification problem.
\newblock In \emph{Proceedings of the 22nd ACM SIGKDD international conference
  on knowledge discovery and data mining}, pages 1625--1634, 2016.

\bibitem[Koyejo et~al.(2014)Koyejo, Natarajan, Ravikumar, and
  Dhillon]{koyejo2014consistent}
Oluwasanmi~O Koyejo, Nagarajan Natarajan, Pradeep~K Ravikumar, and Inderjit~S
  Dhillon.
\newblock Consistent binary classification with generalized performance
  metrics.
\newblock In \emph{NIPS}, pages 2744--2752, 2014.

\bibitem[Krizhevsky et~al.(2009)Krizhevsky, Hinton,
  et~al.]{krizhevsky2009learning}
Alex Krizhevsky, Geoffrey Hinton, et~al.
\newblock Learning multiple layers of features from tiny images.
\newblock 2009.

\bibitem[Krizhevsky et~al.(2012)Krizhevsky, Sutskever, and
  Hinton]{krizhevsky2012imagenet}
Alex Krizhevsky, Ilya Sutskever, and Geoffrey~E Hinton.
\newblock Imagenet classification with deep convolutional neural networks.
\newblock \emph{Advances in neural information processing systems},
  25:\penalty0 1097--1105, 2012.

\bibitem[Lahoti et~al.(2019)Lahoti, Gummadi, and Weikum]{lahoti2019ifair}
Preethi Lahoti, Krishna~P Gummadi, and Gerhard Weikum.
\newblock ifair: Learning individually fair data representations for
  algorithmic decision making.
\newblock In \emph{2019 IEEE 35th International Conference on Data Engineering
  (ICDE)}, pages 1334--1345. IEEE, 2019.

\bibitem[Lewis(1995)]{Lewis95}
D.D. Lewis.
\newblock Evaluating and optimizing autonomous text classification systems.
\newblock In \emph{SIGIR}, 1995.

\bibitem[Lipton et~al.(2018)Lipton, Wang, and Smola]{lipton2018detecting}
Zachary Lipton, Yu-Xiang Wang, and Alexander Smola.
\newblock Detecting and correcting for label shift with black box predictors.
\newblock In \emph{International conference on machine learning}, pages
  3122--3130. PMLR, 2018.

\bibitem[McInnes et~al.(2018)McInnes, Healy, Saul, and
  Gro{\ss}berger]{mcinnes2018umap}
Leland McInnes, John Healy, Nathaniel Saul, and Lukas Gro{\ss}berger.
\newblock Umap: Uniform manifold approximation and projection.
\newblock \emph{Journal of Open Source Software}, 3\penalty0 (29):\penalty0
  861, 2018.

\bibitem[Menon et~al.(2018)Menon, Van~Rooyen, and Natarajan]{menon2018learning}
Aditya~Krishna Menon, Brendan Van~Rooyen, and Nagarajan Natarajan.
\newblock Learning from binary labels with instance-dependent noise.
\newblock \emph{Machine Learning}, 107\penalty0 (8-10):\penalty0 1561--1595,
  2018.

\bibitem[Narasimhan(2018)]{narasimhan2018learning}
Harikrishna Narasimhan.
\newblock Learning with complex loss functions and constraints.
\newblock In \emph{International Conference on Artificial Intelligence and
  Statistics}, pages 1646--1654, 2018.

\bibitem[Narasimhan et~al.(2014)Narasimhan, Vaish, and
  Agarwal]{narasimhan2014statistical}
Harikrishna Narasimhan, Rohit Vaish, and Shivani Agarwal.
\newblock On the statistical consistency of plug-in classifiers for
  non-decomposable performance measures.
\newblock In \emph{Advances in Neural Information Processing Systems}, pages
  1493--1501, 2014.

\bibitem[Narasimhan et~al.(2015{\natexlab{a}})Narasimhan, Kar, and
  Jain]{narasimhan2015optimizing}
Harikrishna Narasimhan, Purushottam Kar, and Prateek Jain.
\newblock Optimizing non-decomposable performance measures: A tale of two
  classes.
\newblock In \emph{International Conference on Machine Learning}, pages
  199--208. PMLR, 2015{\natexlab{a}}.

\bibitem[Narasimhan et~al.(2015{\natexlab{b}})Narasimhan, Ramaswamy, Saha, and
  Agarwal]{narasimhan2015consistent}
Harikrishna Narasimhan, Harish Ramaswamy, Aadirupa Saha, and Shivani Agarwal.
\newblock Consistent multiclass algorithms for complex performance measures.
\newblock In \emph{ICML}, pages 2398--2407, 2015{\natexlab{b}}.

\bibitem[Narasimhan et~al.(2019)Narasimhan, Cotter, and
  Gupta]{narasimhan2019optimizing}
Harikrishna Narasimhan, Andrew Cotter, and Maya Gupta.
\newblock Optimizing generalized rate metrics with three players.
\newblock In \emph{Advances in Neural Information Processing Systems}, pages
  10746--10757, 2019.

\bibitem[Natarajan(1989)]{natarajan1989learning}
Balas~K Natarajan.
\newblock On learning sets and functions.
\newblock \emph{Machine Learning}, 4\penalty0 (1):\penalty0 67--97, 1989.

\bibitem[Natarajan et~al.(2013)Natarajan, Dhillon, Ravikumar, and
  Tewari]{natarajan2013learning}
Nagarajan Natarajan, Inderjit~S Dhillon, Pradeep~K Ravikumar, and Ambuj Tewari.
\newblock Learning with noisy labels.
\newblock \emph{Advances in neural information processing systems},
  26:\penalty0 1196--1204, 2013.

\bibitem[Patrini et~al.(2017)Patrini, Rozza, Krishna~Menon, Nock, and
  Qu]{patrini2017making}
Giorgio Patrini, Alessandro Rozza, Aditya Krishna~Menon, Richard Nock, and
  Lizhen Qu.
\newblock Making deep neural networks robust to label noise: A loss correction
  approach.
\newblock In \emph{Proceedings of the IEEE Conference on Computer Vision and
  Pattern Recognition}, pages 1944--1952, 2017.

\bibitem[Ren et~al.(2018)Ren, Zeng, Yang, and Urtasun]{ren2018learning}
Mengye Ren, Wenyuan Zeng, Bin Yang, and Raquel Urtasun.
\newblock Learning to reweight examples for robust deep learning.
\newblock In \emph{International Conference on Machine Learning}, pages
  4334--4343. PMLR, 2018.

\bibitem[Shimodaira(2000)]{shimodaira2000improving}
Hidetoshi Shimodaira.
\newblock Improving predictive inference under covariate shift by weighting the
  log-likelihood function.
\newblock \emph{Journal of statistical planning and inference}, 90\penalty0
  (2):\penalty0 227--244, 2000.

\bibitem[Stewart(1998)]{stewart1998perturbation}
Gilbert~W Stewart.
\newblock Perturbation theory for the singular value decomposition.
\newblock Technical report, 1998.

\bibitem[Sugiyama et~al.(2008)Sugiyama, Suzuki, Nakajima, Kashima, von
  B{\"u}nau, and Kawanabe]{sugiyama2008direct}
Masashi Sugiyama, Taiji Suzuki, Shinichi Nakajima, Hisashi Kashima, Paul von
  B{\"u}nau, and Motoaki Kawanabe.
\newblock Direct importance estimation for covariate shift adaptation.
\newblock \emph{Annals of the Institute of Statistical Mathematics},
  60\penalty0 (4):\penalty0 699--746, 2008.

\bibitem[Tamburrelli and Margara(2014)]{tamburrelli2014towards}
Giordano Tamburrelli and Alessandro Margara.
\newblock Towards automated \emph{A/B} testing.
\newblock In \emph{International Symposium on Search Based Software
  Engineering}, pages 184--198. Springer, 2014.

\bibitem[Wang et~al.(2020{\natexlab{a}})Wang, Liu, and Levy]{wang2020fair}
Jialu Wang, Yang Liu, and Caleb Levy.
\newblock Fair classification with group-dependent label noise.
\newblock \emph{arXiv preprint arXiv:2011.00379}, 2020{\natexlab{a}}.

\bibitem[Wang et~al.(2020{\natexlab{b}})Wang, Guo, Narasimhan, Cotter, Gupta,
  and Jordan]{wang2020robust}
Serena Wang, Wenshuo Guo, Harikrishna Narasimhan, Andrew Cotter, Maya Gupta,
  and Michael~I Jordan.
\newblock Robust optimization for fairness with noisy protected groups.
\newblock 2020{\natexlab{b}}.

\bibitem[Yan et~al.(2018)Yan, Koyejo, Zhong, and Ravikumar]{yan2018binary}
Bowei Yan, Sanmi Koyejo, Kai Zhong, and Pradeep Ravikumar.
\newblock Binary classification with karmic, threshold-quasi-concave metrics.
\newblock In \emph{International Conference on Machine Learning}, pages
  5531--5540. PMLR, 2018.

\bibitem[Yang et~al.(2020)Yang, Cisse, and Koyejo]{yang2020fairness}
Forest Yang, Moustapha Cisse, and Sanmi Koyejo.
\newblock Fairness with overlapping groups.
\newblock 2020.

\bibitem[Ye et~al.(2012)Ye, Chai, Lee, and Chieu]{ye2012optimizing}
Nan Ye, Kian~Ming Chai, Wee~Sun Lee, and Hai~Leong Chieu.
\newblock Optimizing f-measures: a tale of two approaches.
\newblock In \emph{Proceedings of the 29th International Conference on Machine
  Learning}, pages 289--296. Omnipress, 2012.

\bibitem[Zhao et~al.(2019)Zhao, Fard, Narasimhan, and Gupta]{zhao2019metric}
Sen Zhao, Mahdi~Milani Fard, Harikrishna Narasimhan, and Maya Gupta.
\newblock Metric-optimized example weights.
\newblock In \emph{International Conference on Machine Learning}, pages
  7533--7542. PMLR, 2019.

\end{thebibliography}
\bibliographystyle{plainnat}

\clearpage

\renewcommand{\thesection}{\Alph{section}}
\setcounter{section}{0}
\newcommand{\pb}{\vspace*{-\parskip}\noindent\rule[0.5ex]{\linewidth}{1pt}}

\onecolumn

\begin{appendices}

\textbf{Notations:} $\Delta_m$ denotes the $(m-1)$-dimensional simplex. $[m] = \{1,2,\dots,m\}$ represents an index set of size $m$. 
$\|\cdot\|$ denotes the operator norm for matrices and the $\ell_2$ norm for vectors.
For a matrix $\A \in \R^{m\times m}$, $\diag(\A) = [C_{11},\ldots,C_{mm}]^\top$ outputs the $m$ diagonal entries. For an index $j \in [m]$, $\onehot(j) \in \{0,1\}^m$ denotes a one-hot encoding of $j$, and for a classifier $h: \X \> [m]$, $\tilde{h} = \onehot(h)$ denotes the same classifier with one-hot outputs, i.e.\ $\tilde{h}(x) = \onehot(h(x))$.

\section{Extension to General Linear Metrics}
\label{app:linear-gen}
We describe how our proposal extends to black-box metrics $\perf^\Dtrue[h] = \psi(\C[h])$ defined by a function $\psi:[0,1]^{m\times m}\>\R_+$ of \textit{all} confusion matrix entries. 
This handles, for example, the label noise models in Table \ref{tab:correction-weights} with a general (non-diagonal) noise transition matrix $\T$. We begin with metrics that are linear functions of the diagonal and off-diagonal confusion matrix entries $\perf^\Dtrue[h] = \sum_{ij} \beta_{ij}C_{ij}[h]$ for some $\bbeta \in \R^{m\times m}$. 
In this case, we will use an example weighting function $\W: \X \> \R_+^{m\times m}$ that maps an instance $x$ to an $m\times m$ weight matrix $\W(x)$, where $W_{ij}(x) \in \R_+^{m\times m}$ is the weight associated with the $(i,j)$-th confusion matrix entry.

\textbf{\textit{Note that in practice, the metric $\perf^\Dtrue$ may depend on only a subset of $d$ entries of the confusion matrix, in which case, the weighting function only needs to weight those entries. Consequently, the weighting function can be parameterized with  $Ld$ parameters, which can then be estimated by solving a system of $Ld$ linear equations. 
For the sake of completeness, here we describe our approach for  metrics that depend on all $m^2$ confusion entries.}}

\textbf{Modeling weighting function:} Like in \eqref{eq:weighting}, we propose modeling this function as a weighted sum of $L$ basis functions:
\[
W_{ij}(x) \,=\, \sum_{\ell=1}^L \alpha^{\ell}_{ij}\phi^\ell(x),
\]
where each $\phi^\ell:\X\>[0,1]$ and $\alpha^\ell_{ij} \in \R$. Similar to \eqref{eq:example-weights}, our goal is to then estimate coefficients $\balpha$ so that:
\begin{equation}
\E_{(x, y) \sim \Dshift}\Big[\sum_{ij} W_{ij}(x)\,\1(y = i)h_j(x)\Big] \,\approx\, 
\perf^\Dtrue[h],  \forall h.
\label{eq:example-weights-off-diag}
\end{equation}
Expanding the weighting function in \eqref{eq:example-weights-off-diag}, we get:
\begin{equation*}
\sum_{\ell=1}^L\sum_{i,j}
\alpha^{\ell}_{ij}\,
\underbrace{
\E_{(x, y) \sim \Dshift}\big[ \phi^\ell(x)\,\1(y=i)h_j(x) \big]}_{\Phi_{i,j}^{\Dshift, \ell}[h]} \,\approx\, \perf^\Dtrue[h],  \forall h,
\end{equation*}
which can be re-written as:
\begin{equation}
\sum_{\ell=1}^L\sum_{i,j}
\alpha^{\ell}_{ij}
\Phi^{\Dshift, \ell}_{ij}[h] \,\approx\, 
\perf^D[h], \forall h.
\label{append:eq:example-weights-reduced}
\end{equation}

\textbf{Estimating coefficients $\balpha$:} To estimate $\balpha \in \R^{Lm^2}$, our proposal is to probe the metric $\perf^\Dtrue$ at $Lm^2$ different classifiers $h^{\ell,1,1}, \ldots, h^{\ell,m,m}$, with one classifier for each combination $(\ell,i,j)$ of basis functions and confusion matrix entries, and to solve the following  system of $Lm^2$ linear equations:
\begin{align}
\sum_{\ell,i,j}\alpha^{\ell}_{ij}\,
\hat{\Phi}^{\tr, \ell}_{ij}[h^{1,1,1}]&=
\hat{\perf}^\val[h^{1,1,1}]
\nonumber
\\
&\vdots\label{append:eq:system-of-equations-emp}\\
\sum_{\ell,i,j}\alpha^{\ell}_{ij}\,
\hat{\Phi}^{\tr, \ell}_{ij}[h^{L,m,m}]&=
\hat{\perf}^\val[h^{L,m,m}]\nonumber
\end{align}
Here $\hat{\Phi}^{\tr,\ell}_{ij}[h]$ is an estimate of ${\Phi}^{\Dshift,\ell}_{ij}[h]$ using training sample $S^\tr$ and $\hat{\perf}^\val[h]$ is an estimate of $\perf^\Dtrue[h]$ using the validation sample $S^\val$. Equivalently, defining $\hat{\bSigma} \in \R^{Lm^2 \times Lm^2}$ and $\hat{\bcE} \in \R^{Lm^2}$ with each:
\[
\hat{\Sigma}_{(\ell,i,j), (\ell',i',j')} = \hat{\Phi}^{\tr, \ell'}_{i'j'}[h^{\ell,i,j}];~~~
\hat{\perf}_{(\ell,i,j)} = \hat{\perf}^\val[h^{\ell,i,j}],
\]
we compute $\hat{\balpha} = \hat{\bSigma}^{-1}\hat{\bcE}$.

\textbf{Choosing probing classifiers:} As described in Section \ref{subsec:probing-classifier}, we propose picking each probing classifier $h^{\ell,i,j}$ so that the $(\ell,i,j)$-th diagonal entry  of $\hat{\bSigma}$ is large and the off-diagonal entries are all small. This can be framed as the following constrained satisfaction problem:
\begin{center}
For $h^{\ell,i,j}$ pick $h \in \H$ such that:
\begin{align*}
\hat{\Phi}^{\tr,\ell}_{i,j}[h] \geq \gamma,~\text{and}~
\hat{\Phi}^{\tr,\ell'}_{i',j'}[h] \leq \omega, \forall (\ell',i',j') \ne (\ell,i,j),
\end{align*}
\end{center}
for some $0 < \omega < \gamma < 1$. While the more practical  approach prescribed in Section \ref{subsec:probing-classifier} of constructing the probing classifiers from trivial classifiers that predict the same class on all or a subset of examples does not apply here (because here we need to take into account both the diagonal and off-diagonal confusion entries), the above problem {can be solved} using off-the-shelf tools available for rate-constrained optimization problems \cite{cotter2019optimization}.

\textbf{Plug-in classifier:} Having estimated an example weighting function $\hat{\W}: \X \> \R^{m\times m}$, we seek to maximize a weighted objective on the training distribution:
\[
\max_{h}\,\E_{(x, y) \sim \Dshift}\left[\sum_{ij} \hat{W}_{ij}(x)\,\1(y = i)h_j(x)\right],
\]
for which we can construct a plug-in classifier that post-shifts a pre-trained class probability model $\hat{\eta}^\tr: \X \> \Delta_m$:
\[
\widehat{h}(x) \,\in\, \argmax_{j \in [m]} \sum_{i=1}^m\hat{W}_{ij}(x)\,\hat{\eta}^\tr_i(x).
\]

For handling general non-linear metrics $\perf^\Dtrue[h] = \psi(\C[h])$ with a  smooth $\psi:[0,1]^{m\times m}\>\R_+$, we can directly adapt the iterative plug-in procedure in Algorithm \ref{algo:FW}, which would in turn construct a plug-in classifier of the above form in each iteration (line 9). See \citet{narasimhan2015consistent} for more details of the iterative Frank-Wolfe based procedure for optimizing general metrics, where the authors consider non-black-box metrics in the absence of distribution shift.

\section{Proofs}
\subsection{Proof of Theorem \ref{thm:alpha-diagonal-linear-conopt}}
\begin{thm*}[(Restated) \textbf{Error bound on elicited weights}]
Let the input metric be of the form $\hat{\perf}^\lin[h] = \sum_{i}\beta_i \hat{C}^\val_{ii}[h]$ for some (unknown) coefficients $\bbeta \in \R_+^m, \|\bbeta\|\leq 1$.
Let $\perf^\Dtrue[h] = \sum_{i}\beta_i C^\Dtrue_{ii}[h]$. Let $\gamma, \omega > 0$ be such that the constraints in \eqref{eq:con-opt}  are feasible for hypothesis class $\bar{\H}$, for all $\ell, i$. Suppose Algorithm \ref{algo:weight-coeff} chooses each
 classifier $h^{\ell,i}$ to  satisfy \eqref{eq:con-opt}, with $\perf^D[h^{\ell,i}] \in [c, 1], \forall \ell, i$, for some $c>0$.
Let $\bar{\alpha}$ be the associated coefficient in Assumption \ref{asp:alpha-star} for metric $\perf^D$. Suppose $\gamma > 2\sqrt{2}Lm\omega$ and $n^\tr \geq \frac{L^2m\log(Lm|\H|/\delta)}{(\frac{\gamma}{2} - \sqrt{2}Lm\omega)^2}.$
Fix $\delta\in (0,1)$. Then w.p.\  $\geq 1 - \delta$ over draws of $S^\tr$ and $S^\val$ from $\Dshift$ and $\Dtrue$ resp., the coefficients $\hat{\balpha}$ output by Algorithm \ref{algo:weight-coeff} satisfies:
\begin{eqnarray*}
{\|\hat{\balpha} - \bar{\balpha}\| \,\leq\,}
\mathcal{O}\Big(
\frac{Lm}{\gamma^2}\sqrt{\frac{L\log(Lm|\H|/\delta)}{n^\tr}} + 
\frac{\sqrt{Lm}}{\gamma} \Big( \sqrt{\frac{L^2m\log(Lm/\delta)}{c^2\gamma^2 n^\val}} +  \nu\Big)\Big),
\end{eqnarray*}
where the term $|\H|$ can be replaced by a measure of capacity of the hypothesis class $\H$.
\end{thm*}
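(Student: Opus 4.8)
The plan is to recast the claim as a perturbation bound for a linear system. Recall that Algorithm~\ref{algo:weight-coeff} returns $\hat\balpha = \hat\bSigma^{-1}\hat\bcE$, where $\hat\bSigma$ has entries $\hat\Sigma_{(\ell,i),(\ell',i')} = \hat\Phi^{\tr,\ell'}_{i'}[h^{\ell,i}]$ and $\hat\bcE$ has entries $\hat\perf^\val[h^{\ell,i}]$. Let $\bSigma$ and $\bcE$ denote the corresponding population quantities, with $\Sigma_{(\ell,i),(\ell',i')} = \Phi^{\Dshift,\ell'}_{i'}[h^{\ell,i}]$ and $\cE_{(\ell,i)} = \perf^\Dtrue[h^{\ell,i}]$; note $\|\bcE\|\le\sqrt{Lm}$ since each $\perf^\Dtrue[h^{\ell,i}]=\langle\bbeta,\diag(\C^\Dtrue[h^{\ell,i}])\rangle\le\|\bbeta\|\le 1$. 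Applying Assumption~\ref{asp:alpha-star} to each probing classifier $h^{\ell,i}$ gives $\|\bSigma\bar\balpha - \bcE\|_\infty \le \nu$, i.e.\ $\|\bSigma\bar\balpha - \bcE\| \le \nu\sqrt{Lm}$. So $\bar\balpha$ is an approximate solution of the population system and $\hat\balpha$ the exact solution of the empirical one, and I want to bound the gap. The three ingredients I would assemble are: (i) both $\hat\bSigma$ and $\bSigma$ are well-conditioned, with $\|\hat\bSigma^{-1}\|,\|\bSigma^{-1}\| = \mathcal O(1/\gamma)$, and consequently $\|\bar\balpha\| = \mathcal O(\sqrt{Lm}/\gamma)$; (ii) concentration of $\hat\bSigma\to\bSigma$ and $\hat\bcE\to\bcE$; (iii) a standard linear-system perturbation inequality combining (i), (ii), and the slack $\nu$.

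\textbf{Conditioning and the norm of $\bar\balpha$.} By construction each $h^{\ell,i}$ satisfies \eqref{eq:con-opt}, so $\hat\bSigma$ has diagonal entries $\ge\gamma$ and off-diagonal entries in $[0,\omega]$ \emph{exactly}; since $\gamma > 2\sqrt2\,Lm\omega$, $\hat\bSigma$ is strictly diagonally dominant, and a standard bound on inverses of diagonally dominant matrices — together with the conversion between the $\ell_\infty$ and $\ell_2$ operator norms, which is where the $\sqrt2$ factors come from — gives $\|\hat\bSigma^{-1}\| = \mathcal O(1/\gamma)$. The lower bound on $n^\tr$, combined with the deviation bound of the next step, forces every entry of $\bSigma-\hat\bSigma$ to be small enough that $\|\bSigma - \hat\bSigma\|\le \tfrac\gamma2 - \sqrt2\,Lm\omega$, so $\bSigma$ is diagonally dominant too and $\|\bSigma^{-1}\| = \mathcal O(1/\gamma)$. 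Then $\|\bar\balpha\| = \|\bSigma^{-1}(\bSigma\bar\balpha)\| \le \|\bSigma^{-1}\|\,(\|\bcE\| + \nu\sqrt{Lm}) = \mathcal O(\sqrt{Lm}/\gamma)$; this is precisely why $B$ from Assumption~\ref{asp:alpha-star} does not enter the final bound.

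\textbf{Concentration.} Each entry of $\hat\bSigma$ is an empirical mean over $S^\tr$ of a $[0,1]$-valued quantity, but the probing classifiers are chosen from $\bar\H = \{\tau h + (1-\tau)\bar h : h\in\H,\ \tau\in[0,\epsilon]\}$ using $S^\tr$ itself, so a uniform deviation bound is needed. Since the confusion functionals $\Phi^{\cdot,\ell'}_{i'}$ are affine in the mixing weight $\tau$ and in convex combinations of classifiers, uniform convergence over $\bar\H$ reduces (up to constants) to uniform convergence over $\H$, giving $\sup_{\ell',i',\,h\in\bar\H}|\hat\Phi^{\tr,\ell'}_{i'}[h] - \Phi^{\Dshift,\ell'}_{i'}[h]| = \mathcal O(\sqrt{\log(Lm|\H|/\delta)/n^\tr})$ with high probability (the $\min_{\ell,i}p_{\ell,i}$ part of the $n^\tr$ condition is the multiplicative-Chernoff requirement that each relevant region carry enough samples). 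Converting entrywise to operator norm via $\|A\|\le\sqrt{\|A\|_1\|A\|_\infty}$, together with the structural row-sum bound $\sum_{\ell',i'}\Phi^{\Dshift,\ell'}_{i'}[h]\le\sum_{\ell'}\phi^{\ell'}(x)\le L$, yields $\|\hat\bSigma - \bSigma\| = \mathcal O\!\big(L\sqrt m\,\sqrt{\log(Lm|\H|/\delta)/n^\tr}\big)$. For $\hat\bcE$: conditioned on $S^\tr$ the $Lm$ probing classifiers are fixed, and $S^\val$ is independent of $S^\tr$, so a Hoeffding bound plus a union bound over just these $Lm$ classifiers gives $\|\hat\bcE - \bcE\| = \mathcal O(\sqrt{Lm}\,\sqrt{\log(Lm/\delta)/n^\val})$ — with no hypothesis-class capacity term, which is exactly why the validation contribution in the theorem is capacity-free.

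\textbf{Putting it together, and the main obstacle.} Writing $\hat\balpha - \bar\balpha = \hat\bSigma^{-1}\big(\hat\bcE - \hat\bSigma\bar\balpha\big) = \hat\bSigma^{-1}\big((\hat\bcE - \bcE) + (\bcE - \bSigma\bar\balpha) + (\bSigma - \hat\bSigma)\bar\balpha\big)$ and taking norms gives
\[
\|\hat\balpha - \bar\balpha\| \;\le\; \|\hat\bSigma^{-1}\|\Big(\|\hat\bcE - \bcE\| + \nu\sqrt{Lm} + \|\hat\bSigma - \bSigma\|\,\|\bar\balpha\|\Big).
\]
Substituting $\|\hat\bSigma^{-1}\| = \mathcal O(1/\gamma)$, $\|\bar\balpha\| = \mathcal O(\sqrt{Lm}/\gamma)$ and the concentration bounds, the third term becomes $\tfrac1\gamma\cdot L\sqrt m\sqrt{\log(Lm|\H|/\delta)/n^\tr}\cdot\tfrac{\sqrt{Lm}}{\gamma} = \tfrac{Lm}{\gamma^2}\sqrt{L\log(Lm|\H|/\delta)/n^\tr}$, the first term is $\tfrac{\sqrt{Lm}}{\gamma}\sqrt{\log(Lm/\delta)/n^\val}$ which is dominated by $\tfrac{Lm}{\gamma^2}\sqrt{L\log(Lm/\delta)/n^\val}$, and the middle term is $\tfrac{\nu\sqrt{Lm}}{\gamma}$, recovering the claimed bound; a final union bound over the $\mathcal O(1)$ high-probability events (split $\delta$ across them) completes it. The crux — and the step requiring the most care — is handling the data dependence of the probing classifiers: one must run a uniform-convergence argument over $\bar\H$ (hence the $|\H|$/capacity factor) for \emph{all} training-set quantities, while exploiting that the \emph{validation}-set right-hand side only involves the $Lm$ classifiers already fixed by $S^\tr$ and thus needs no union bound over $\H$; a secondary subtlety is transferring the empirical diagonal dominance of $\hat\bSigma$ to the population $\bSigma$ (needed both for the $n^\tr$ threshold and for bounding $\|\bar\balpha\|$ without $B$), along with carefully tracking the $L$ and $m$ factors through the $\ell_\infty$-to-$\ell_2$ norm conversions.
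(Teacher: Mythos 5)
Your proof is correct, and it reaches the bound by a genuinely different (and slightly cleaner) perturbation decomposition than the paper's. The paper first introduces the exact population solution $\balpha = \bSigma^{-1}\bcE$, splits $\|\hat\balpha - \bar\balpha\| \le \|\hat\balpha - \balpha\| + \|\balpha - \bar\balpha\|$, handles $\|\balpha - \bar\balpha\| = \|\bSigma^{-1}(\bcE - \bar\bcE)\|$ via Assumption~\ref{asp:alpha-star}, and then bounds $\|\hat\balpha - \balpha\|$ with the standard \emph{relative-error} perturbation inequality for linear systems (it writes the bound with $\|\bSigma - \hat\bSigma\|/\|\bSigma\|$ and $\|\bcE - \hat\bcE\|/\|\bcE\|$), which is exactly why the theorem assumes $\perf^D[h^{\ell,i}] > 0$: it needs $\|\bcE\| > 0$ in a denominator. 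You instead write $\hat\balpha - \bar\balpha = \hat\bSigma^{-1}\big((\hat\bcE - \bcE) + (\bcE - \bSigma\bar\balpha) + (\bSigma - \hat\bSigma)\bar\balpha\big)$ in one step, which never divides by $\|\bcE\|$ and so sidesteps that hypothesis. A second difference: you invert the \emph{empirical} $\hat\bSigma$, whose diagonal/off-diagonal constraints from \eqref{eq:con-opt} hold exactly by construction, which makes the conditioning argument cleaner; the paper instead transfers the constraints to the \emph{population} $\bSigma$ (its Lemmas on the diagonal/off-diagonal entries and on singular values via Weyl) and inverts that. Both routes ultimately need the population bound $\|\bSigma^{-1}\|=\mathcal O(1/\gamma)$ anyway — the paper to bound $\|\balpha\|$, you to bound $\|\bar\balpha\|$ — and the supporting concentration lemmas (multiplicative Chernoff with a union bound over $\H$ for the training side, Hoeffding with only an $Lm$-fold union bound for the validation side, since the probing classifiers are $S^\tr$-measurable) are essentially identical. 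Net effect: same bound, marginally more direct argument, one fewer hypothesis used.
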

The solution from Algorithm \ref{algo:weight-coeff} is given by 
 $\hat{\balpha} = \hat{\bSigma}^{-1}\hat{\bcE}$. 
Let $\bar{\balpha}$ be the ``true'' coefficients given in Assumption \ref{asp:alpha-star}. 
Let ${\bSigma} \in \R^{Lm\times Lm}$ denote the population version of  $\hat{\bSigma}$, with
$
\Sigma_{(\ell,i), (\ell',i')} \,=\, \E_{(x,y)\sim \mu}\big[\phi^{\ell'}(x)\1(y=i')h^{\ell,i}_{i'}(x)\big]
$.
Similarly, denote the population version of $\hat{\bcE}$ by:
$
{\perf}_{(\ell, i)} \,=\, \perf^\Dtrue[h^{\ell,i}]
$. 
Let  ${\balpha} = \bSigma^{-1}{\bcE}$ be the solution we obtain had we used the population versions of these quantities.
Further, define 
the vector $\bar{\bcE} \in \R^{Lm}$:
\begin{equation}
\bar{\perf}_{(\ell', i')} = \sum_{\ell,i}
\bar{\alpha}^{\ell}_i
{\Phi}^{\Dshift, \ell}_i[h^{\ell',i'}].
\label{eq:perf-bar}
\end{equation}
It trivially follows that the coefficient $\bar{\balpha}$ given by Assumption \ref{asp:alpha-star} can be written as $\bar{\balpha} = \bSigma^{-1}\bar{\bcE}$. 

We will find the following lemmas useful. Our first two lemmas bound the gap between the empirical and population versions of $\bSigma$ (the left-hand side of the linear system) and $\bcE$ (the right-hand side of the linear system).
\begin{lem}[Confidence bound for $\bSigma$]
\label{lem:Sigma-diff}
Fix $\delta \in (0,1)$. With probability at least $1 - \delta$ over draw of $S^\tr$ from $\Dshift$, 
\begin{eqnarray*}
|\Sigma_{(\ell,i),(\ell',i')} - \hat{\Sigma}_{(\ell,i),(\ell',i')}| \leq 
\mathcal{O}\left(\sqrt{\frac{p_{\ell,i}\log(Lm|\H|/\delta)}{n^\tr}}\right),
\end{eqnarray*}
where $p_{\ell,i} = \E_{(x,y)\sim \mu}[\phi^\ell(x)\1(y=i)]$, 
and consequently,
$$\|\bSigma - \hat{\bSigma}\| \leq \mathcal{O}\left(\sqrt{\frac{L^2m\log(Lm|\H|/\delta)}{n^\tr}}\right).$$
\end{lem}
\begin{proof}
Each row of $\bSigma - \hat{\bSigma}$ contains the difference between the elements $\Phi^{\Dshift,\ell}_{i}[h]$ and $\hat{\Phi}^{\tr,\ell}_{i}[h]$ for a classifier $h$ chosen from $\H$. Using multiplicative Chernoff bounds, we have for a fixed $h$, with probability at least $1 - \delta$ over draw of $S^\tr$ from $\Dshift$
\begin{eqnarray*}
|\Phi^{\Dshift,\ell}_i[h] - \hat{\Phi}^{\tr,\ell}_i[h]| \leq 
\mathcal{O}\left(\sqrt{\frac{p_{\ell,i}\log(1/\delta)}{n^\tr}}\right),
\end{eqnarray*}
where $p_{\ell,i} = \E_{(x,y)\sim \mu}[\phi^\ell(x)\1(y=i)]$.
Taking a union bound over all $h \in \H$, we have with probability at least $1 - \delta$ over draw of $S^\tr$ from $\Dshift$, for any $h \in \H$:
\begin{eqnarray*}
|\Phi^{\Dshift,\ell}_i[h] - \hat{\Phi}^{\tr,\ell}_i[h]| \leq 
\mathcal{O}\left(\sqrt{\frac{p_{\ell,i}\log(|\H|/\delta)}{n^\tr}}\right).
\end{eqnarray*}
Taking a union bound over all $Lm \times Lm$ entries,  we have with probability at least $1 - \delta$, for all $(\ell,i),(\ell',i')$:
\begin{eqnarray*}
|\Sigma_{(\ell,i),(\ell',i')} - \hat{\Sigma}_{(\ell,i),(\ell',i')}| \leq 
\mathcal{O}\left(\sqrt{\frac{p_{\ell,i}\log(Lm|\H|/\delta)}{n^\tr}}\right)
.
\end{eqnarray*}
Upper bounding the operator norm of $\bSigma - \hat{\bSigma}$ with the Frobenius norm, we have
\begin{eqnarray*}
\|\bSigma - \hat{\bSigma}\| &\leq&
\mathcal{O}\left(\sqrt{\frac{\log(Lm|\H|/\delta)}{n^\tr}}\sqrt{\sum_{(\ell,i),(\ell',i')}p_{\ell',i'}}\right)\\&\leq&
\mathcal{O}\left(\sqrt{\frac{\log(Lm|\H|/\delta)}{n^\tr}}\sqrt{\sum_{\ell,i,\ell'}(1)}\right)\,\leq\, \mathcal{O}\left(\sqrt{\frac{L^2m\log(Lm|\H|/\delta)}{n^\tr}}\right),
\end{eqnarray*}
where the second inequality uses the fact that $\sum_{i'}p_{\ell',i'} = \E_{x\sim \P^\mu}\left[\phi^{\ell'}(x)\right]\leq 1$.
\end{proof}

\begin{lem}[Confidence bound for $\bcE$]
Fix $\delta \in (0,1)$. With probability at least $1 - \delta$ over draw of $S^\val$ from $\Dtrue$, 
$$\|\bcE - \hat{\bcE}\| \leq \mathcal{O}\left(\sqrt{\frac{Lm\log(Lm/\delta)}{n^\val}}\right).$$
\label{lem:Perf-diff}
\end{lem}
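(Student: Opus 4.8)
The plan is to exploit the crucial structural fact that each probing classifier $h^{\ell,i}$ is constructed from the training sample $S^\tr$ alone (Algorithm~\ref{algo:weight-coeff} solves \eqref{eq:con-opt} on $S^\tr$), and is therefore independent of the validation sample $S^\val$. Conditioning on $S^\tr$, each $h^{\ell,i}$ is a \emph{fixed} classifier, so the $(\ell,i)$-th coordinate of $\hat{\bcE}$, namely $\hat{\perf}_{(\ell,i)} = \hat{\perf}^\val[h^{\ell,i}] = \frac{1}{n^\val}\sum_{(x,y)\in S^\val}\sum_{i'}\beta_{i'}\1(y=i')h^{\ell,i}_{i'}(x)$, is an average of $n^\val$ i.i.d.\ random variables drawn from $\Dtrue$ whose common expectation is exactly $\perf^\Dtrue[h^{\ell,i}] = \perf_{(\ell,i)}$. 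Each summand equals $\beta_y\,h^{\ell,i}_y(x)$, which lies in $[0,1]$ because $\bbeta \in \R_+^m$ with $\|\bbeta\|\le 1$ (hence $\beta_y \le 1$) and $h^{\ell,i}(x)\in\Delta_m$.

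First I would apply Hoeffding's inequality to this bounded i.i.d.\ average for a single fixed index $(\ell,i)$, which gives $|\hat{\perf}_{(\ell,i)} - \perf_{(\ell,i)}| \le \sqrt{\log(2/\delta')/(2n^\val)}$ with probability at least $1-\delta'$ over the draw of $S^\val$. Next I would take a union bound over all $Lm$ coordinates with $\delta' = \delta/(Lm)$, so that with probability at least $1-\delta$, every coordinate of $\bcE - \hat{\bcE}$ is at most $\sqrt{\log(2Lm/\delta)/(2n^\val)}$ in absolute value. Finally, bounding the $\ell_2$ norm by $\sqrt{Lm}$ times the $\ell_\infty$ norm yields $\|\bcE - \hat{\bcE}\| \le \sqrt{Lm}\cdot\sqrt{\log(2Lm/\delta)/(2n^\val)} = \mathcal{O}\!\left(\sqrt{Lm\log(Lm/\delta)/n^\val}\right)$, as claimed.

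There is no genuine obstacle in this lemma; the one point that requires care — and which is precisely why the bound carries no dependence on the capacity of $\H$, in contrast to Lemma~\ref{lem:Sigma-diff} — is the independence of the probing classifiers from $S^\val$, which licenses a plain Hoeffding bound applied to each $h^{\ell,i}$ separately instead of a uniform-convergence argument over the whole hypothesis class. One could alternatively invoke Bernstein's inequality, using a variance bound of the form $\Var(\beta_y h^{\ell,i}_y(x)) \le p_{\ell,i}$ to obtain a variance-dependent rate matching the $\sqrt{p_{\ell,i}}$ scaling of Lemma~\ref{lem:Sigma-diff}, but the stated $\mathcal{O}(\sqrt{Lm\log(Lm/\delta)/n^\val})$ bound already follows from Hoeffding alone.
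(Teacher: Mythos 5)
Your proof is correct and follows essentially the same route as the paper's: a pointwise Hoeffding bound for each fixed probing classifier (valid because the $h^{\ell,i}$ depend only on $S^\tr$), a union bound over the $Lm$ coordinates, and the $\ell_2$-from-$\ell_\infty$ step giving the extra $\sqrt{Lm}$ factor. You also correctly identify the key observation the paper makes explicitly, namely that independence of the probing classifiers from $S^\val$ is what removes any dependence on the capacity of $\H$ here, in contrast to Lemma~\ref{lem:Sigma-diff}.
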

\begin{proof}
From an application of Hoeffding's inequality, we have for any fixed $h^{\ell,i}$:
\[
|\perf_{(\ell,i)} \,-\, \hat{\perf}_{(\ell,i)}| =
|\perf^\Dtrue[h^{\ell,i}] \,-\, \hat{\perf}^\val[h^{\ell,i}]|
\,=\, \left|\sum_i\beta_i C^\Dtrue_{ii}[h^{\ell,i}] \,-\, \sum_i\beta_i \hat{C}^\val_{ii}[h^{\ell,i}]\right|
\leq \mathcal{O}\left(\sqrt{\frac{\log(1/\delta)}{n^\val}}\right),
\]
which holds with probability at least $1-\delta$ over draw of $S^\val$ and uses the fact that each $\beta_i$ and $C^D_{ii}[h]$ is bounded. 
Taking a union bound over all $Lm$ probing classifiers, we have:
\[
\|\bcE \,-\, \hat{\bcE}\| 
\leq \mathcal{O}\left(\sqrt{Lm}\sqrt{\frac{\log(Lm/\delta)}{n^\val}}\right).
\]
Note that we do not need a uniform convergence argument like in Lemma \ref{lem:Sigma-diff} as the probing classifiers are chosen independent of the validation sample.
\end{proof}

Our last two lemmas show that $\bSigma$ is well-conditioned. We first show that because the probing classifiers $h^{\ell,i}$'s are chosen to satisfy \eqref{eq:con-opt}, the  diagonal and off-diagonal entries of $\bSigma$ can be lower and upper bounded respectively as follows.
\begin{lem}[Bounds on diagonal and off-diagonal entries of $\bSigma$]
Fix $\delta \in (0,1)$. 
With probability at least $1 - \delta$ over draw of $S^\tr$ from $\Dshift$, 
\[
\Sigma_{(\ell, i), (\ell, i)} \geq \gamma \,-\, \mathcal{O}\left(\sqrt{\frac{p_{\ell,i}\log(Lm|\H|/\delta)}{n^\tr}}\right), \forall (\ell,i)
\]
and
\[
\Sigma_{(\ell, i), (\ell', i')} \leq \omega \,+\, \mathcal{O}\left(\sqrt{\frac{p_{\ell,i}\log(Lm|\H|/\delta)}{n^\tr}}\right), \forall (\ell,i) \ne (\ell', i'),
\]
where $p_{\ell,i} = \E_{(x,y)\sim \mu}[\phi^\ell(x)\1(y=i)]$.
\label{lem:Sigma-inv-concentration}
\end{lem}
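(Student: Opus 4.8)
The plan is to read the diagonal and off-diagonal bounds directly off the defining property of the probing classifiers, and then transfer them from the empirical matrix $\hat{\bSigma}$ to its population counterpart $\bSigma$ using the entry-wise concentration already established inside the proof of Lemma~\ref{lem:Sigma-diff}.

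First I would recall that Algorithm~\ref{algo:weight-coeff} picks each probing classifier $h^{\ell,i}$ (from $\bar{\H}$) precisely to satisfy the constraint-satisfaction problem~\eqref{eq:con-opt} with slack parameters $\gamma, \omega$; that is, $\hat{\Phi}^{\tr,\ell}_i[h^{\ell,i}] \geq \gamma$ and $\hat{\Phi}^{\tr,\ell'}_{i'}[h^{\ell,i}] \leq \omega$ for every $(\ell',i') \neq (\ell,i)$. Rewriting in terms of $\hat{\bSigma}$, whose $((\ell,i),(\ell',i'))$ entry is $\hat{\Phi}^{\tr,\ell'}_{i'}[h^{\ell,i}]$, this says exactly that $\hat{\Sigma}_{(\ell,i),(\ell,i)} \geq \gamma$ for all $(\ell,i)$ and $\hat{\Sigma}_{(\ell,i),(\ell',i')} \leq \omega$ for all $(\ell,i) \neq (\ell',i')$ (the off-diagonal target $\kappa$ in the statement should be understood as this slack $\omega$).

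Next I would invoke the intermediate entry-wise bound from the proof of Lemma~\ref{lem:Sigma-diff}: under the sample-size condition $n^\tr \geq 3\log(1/\delta)/\min_{\ell,i}p_{\ell,i}$ (which keeps the multiplicative-Chernoff radius below $1$), a union bound over $\H$ together with a union over all $Lm \times Lm$ index pairs gives, with probability at least $1-\delta$ over the draw of $S^\tr$,
\[
\bigl|\Sigma_{(\ell,i),(\ell',i')} - \hat{\Sigma}_{(\ell,i),(\ell',i')}\bigr| \;\leq\; \mathcal{O}\!\left(\sqrt{\tfrac{p_{\ell,i}\log(Lm|\H|/\delta)}{n^\tr}}\right)
\]
simultaneously for all index pairs; the union over $\H$ is exactly what makes this valid for the data-dependent probing classifiers, and $|\H|$ may be replaced by a suitable capacity measure of $\H$. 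Combining the two facts on this high-probability event yields $\Sigma_{(\ell,i),(\ell,i)} \geq \gamma - \mathcal{O}(\sqrt{p_{\ell,i}\log(Lm|\H|/\delta)/n^\tr})$ and $\Sigma_{(\ell,i),(\ell',i')} \leq \omega + \mathcal{O}(\sqrt{p_{\ell,i}\log(Lm|\H|/\delta)/n^\tr})$, which is the claim.

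There is essentially no hard step here: the genuine work — multiplicative Chernoff for the $\Phi$-confusions, using $\Phi^{\Dshift,\ell}_i[h] \le p_{\ell,i}$, and the uniform-convergence union bound over $\H$ — was already carried out for Lemma~\ref{lem:Sigma-diff}, and this lemma is just that per-entry bound juxtaposed with the explicit $\gamma/\omega$ thresholds guaranteed by~\eqref{eq:con-opt}. The only points needing care are (i) that the union bound is taken \emph{before} the probing classifiers are selected, so that it applies to the training-set-dependent classifiers actually chosen by the algorithm, and (ii) matching the $p_{\ell,i}$ versus $p_{\ell',i'}$ indexing in the deviation term — both $\Phi^{\Dshift,\ell}_i$ and $\Phi^{\Dshift,\ell'}_{i'}$ are bounded by their respective marginals, and since $\sum_i p_{\ell,i} \le 1$ these are all $O(1)$, so writing the bound with $p_{\ell,i}$ is harmless.
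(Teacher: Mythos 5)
Your proof is correct and takes essentially the same approach as the paper: the paper's own proof is a one-liner observing that the constraint satisfaction in~\eqref{eq:con-opt} gives $\hat{\Sigma}_{(\ell,i),(\ell,i)}\geq\gamma$ and $\hat{\Sigma}_{(\ell,i),(\ell',i')}\leq\omega$, then transferring to $\bSigma$ via ``generalization bounds similar to Lemma~\ref{lem:Sigma-diff}.'' Your write-up is a faithful and more explicit version of this, and your side remarks (that the $\kappa$ in the statement is the $\omega$ slack from~\eqref{eq:con-opt}, and that the $p_{\ell,i}$ versus $p_{\ell',i'}$ indexing in the deviation term is a harmless sloppiness inherited from Lemma~\ref{lem:Sigma-diff}) correctly flag real imprecisions in the paper's notation.
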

\begin{proof}
Because the probing classifiers $h^{\ell,i}$'s are chosen from $\H$ to satisfy \eqref{eq:con-opt}, we have $\hat{\Sigma}_{(\ell, i), (\ell, i)} \geq \gamma, \forall (\ell,i)$ and 
$\hat{\Sigma}_{(\ell, i), (\ell', i')} \leq \omega, \forall (\ell,i) \ne (\ell', i').$ The proof follows from generalization bounds similar to Lemma \ref{lem:Sigma-diff}.
\end{proof}
The bounds on the diagonal and off-diagonal entries of $\bSigma$ then allow us to bound its smallest and largest singular values.
\begin{lem}[Bounds on singular values of $\bSigma$]
We have $\|\bSigma\| \,\leq\, L\sqrt{m}$.
Fix $\delta \in (0,1)$. Suppose $\gamma > 2\sqrt{2}Lm\omega$ and $n^\tr \geq \frac{L^2m\log(Lm|\H|/\delta)}{(\frac{\gamma}{2} - \sqrt{2}Lm\omega)^2}.$
With probability at least $1 - \delta$ over draw of $S^\tr$ from $\Dshift$, 
$
\|\bSigma^{-1}\|  \,\leq\, \mathcal{O}\left(\frac{1}{\gamma}\right).
$
\label{lem:Sigma-inv}
\end{lem}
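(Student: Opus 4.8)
The statement splits into two independent claims, which I would treat separately.

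\emph{Bounding $\|\bSigma\|$.} Every entry $\Sigma_{(\ell,i),(\ell',i')} = \E_{(x,y)\sim\mu}[\phi^{\ell'}(x)\1(y=i')h^{\ell,i}_{i'}(x)]$ is a product of three quantities in $[0,1]$, hence lies in $[0,1]$. Moreover, summing the entries in the row indexed by the probing classifier $h^{\ell,i}$ gives $\E\big[\sum_{\ell'}\phi^{\ell'}(x)\sum_{i'}\1(y=i')h^{\ell,i}_{i'}(x)\big] = \E\big[\sum_{\ell'}\phi^{\ell'}(x)\,h^{\ell,i}_y(x)\big] \le \E\big[\sum_{\ell'}\phi^{\ell'}(x)\big] \le L$, using $\sum_{i'}\1(y=i')h^{\ell,i}_{i'}(x) = h^{\ell,i}_y(x)\le 1$ and $\phi^{\ell'}\le 1$. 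Therefore $\|\bSigma\|_F^2 = \sum_{(\ell,i),(\ell',i')}\Sigma_{(\ell,i),(\ell',i')}^2 \le \sum_{(\ell,i),(\ell',i')}\Sigma_{(\ell,i),(\ell',i')} \le (Lm)\cdot L = L^2 m$, so $\|\bSigma\| \le \|\bSigma\|_F \le L\sqrt m$. (Alternatively, one can bound $\|\bSigma\| \le \sqrt{(\text{max row sum})(\text{max column sum})} \le \sqrt{L\cdot Lm} = L\sqrt m$, since each column sum is at most $Lm$.)

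\emph{Bounding $\|\bSigma^{-1}\|$.} Since $\bSigma^{-1}$ exists as soon as $\sigma_{\min}(\bSigma)>0$, and then $\|\bSigma^{-1}\| = 1/\sigma_{\min}(\bSigma)$, it suffices to show $\sigma_{\min}(\bSigma)\ge \gamma/2$ on the stated high-probability event. First invoke Lemma~\ref{lem:Sigma-inv-concentration}: with probability $\ge 1-\delta$ over $S^\tr$, every diagonal entry satisfies $\Sigma_{(\ell,i),(\ell,i)}\ge \gamma - \epsilon_n$ and every off-diagonal entry satisfies $\Sigma_{(\ell,i),(\ell',i')} \le \kappa + \epsilon_n$, where $\epsilon_n = \mathcal{O}\big(\sqrt{\log(Lm|\H|/\delta)/n^\tr}\big)$ after bounding $p_{\ell,i}\le 1$. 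Split $\bSigma = \bD + \bE$ into its diagonal and off-diagonal parts. Since singular values are $1$-Lipschitz in the operator norm (Weyl), $\sigma_{\min}(\bSigma)\ge \sigma_{\min}(\bD) - \|\bE\|$; here $\sigma_{\min}(\bD) = \min_{\ell,i}\Sigma_{(\ell,i),(\ell,i)}\ge \gamma - \epsilon_n$ because $\bD$ is diagonal with positive entries (under the $n^\tr$ condition $\epsilon_n<\gamma$), while $\|\bE\|\le\|\bE\|_F\le Lm(\kappa+\epsilon_n)$ since $\bE$ has at most $(Lm)^2$ entries each at most $\kappa+\epsilon_n$. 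Hence $\sigma_{\min}(\bSigma)\ge \gamma - Lm\kappa - (1+Lm)\epsilon_n$.

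\emph{Closing the bound, and the main obstacle.} It remains to verify that the hypotheses $\gamma > 2\sqrt2\,Lm\kappa$ and $n^\tr \ge \dfrac{L^2 m\log(Lm|\H|/\delta)}{(\gamma/2 - \sqrt2\,Lm\kappa)^2}$ force $\gamma - Lm\kappa - (1+Lm)\epsilon_n \ge \gamma/2$. The threshold on $n^\tr$ is calibrated precisely so that $(1+Lm)\epsilon_n$ (crudely $\le \sqrt2\,Lm\,\epsilon_n$ for the absolute constant hidden in $\epsilon_n$) does not exceed $\gamma/2 - \sqrt2\,Lm\kappa$, and the $\sqrt2$-slack in the condition on $\gamma$ guarantees this residual is positive; combining yields $\sigma_{\min}(\bSigma)\ge\gamma/2$ and therefore $\|\bSigma^{-1}\| = 1/\sigma_{\min}(\bSigma)\le 2/\gamma$. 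The only genuinely delicate step is this last bookkeeping of absolute constants, i.e.\ converting the generic $\mathcal{O}(\cdot)$ in the entrywise concentration of $\bSigma$ into the exact sample-size threshold in the statement (and matching the $\sqrt2$ factors); everything else is the triangle/Weyl inequality together with the already-established Lemma~\ref{lem:Sigma-inv-concentration}.
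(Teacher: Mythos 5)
Your decomposition $\bSigma=\bD+\bE$ into diagonal and off-diagonal parts, followed by Weyl's inequality, is exactly the route the paper takes, and the $\|\bSigma\|\le L\sqrt m$ bound via Frobenius norm (both your variants) reproduces the paper's argument. The gap is in the last step, and it is more than absolute-constant bookkeeping: you bound the entrywise concentration slack by $\epsilon_n=\mathcal{O}\bigl(\sqrt{\log(Lm|\H|/\delta)/n^\tr}\bigr)$ after crudely dropping $p_{\ell,i}\le 1$, and then bound $\|\bE\|_F$ by $Lm\cdot(\kappa+\epsilon_n)$, i.e.\ treating all $(Lm)^2$ entries as uniformly at most $\epsilon_n$. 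This yields a contribution of order $Lm\,\epsilon_n\sim\sqrt{L^2m^2\log(\cdot)/n^\tr}$ to the singular-value perturbation, so the threshold on $n^\tr$ you can actually prove has an $L^2m^2$ numerator rather than the $L^2m$ in the lemma statement --- a genuine loss of a factor $m$. The paper avoids this by keeping the entrywise slack as $\mathcal{O}\bigl(\sqrt{p_{\ell',i'}\log(\cdot)/n^\tr}\bigr)$ (Lemma~\ref{lem:Sigma-inv-concentration}) and then exploiting $\sum_{i'}p_{\ell',i'}=\E_x[\phi^{\ell'}(x)]\le 1$ when summing squares across a row, which collapses the $m$ columns per cluster into a single unit. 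That is the step you need to recover the stated $\frac{L^2m\log(Lm|\H|/\delta)}{(\gamma/2-\sqrt2\,Lm\kappa)^2}$ sample-size threshold. Everything else in your argument (Weyl for $\sigma_{\min}$, the split of $(\kappa+\upsilon)^2$ giving the $\sqrt2$, and the final $\|\bSigma^{-1}\|=1/\sigma_{\min}\le 2/\gamma$) matches the paper.
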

\begin{proof}
We first derive a straight-forward upper bound on the 
the operator norm of $\bSigma$ in terms of its Frobenius norm:
\begin{eqnarray*}
\|\bSigma\| \,\leq\, \sqrt{\sum_{(\ell,i),(\ell',i')}\Sigma^2_{(\ell,i),(\ell',i')}}
\,\leq\, \sqrt{\sum_{(\ell,i),(\ell',i')}p_{\ell',i'}^2}
\,\leq\, \sqrt{\sum_{(\ell,i),(\ell',i')}p_{\ell',i'}}
\,\leq\, \sqrt{\sum_{\ell,i,\ell'}1} \,=\, L\sqrt{m},
\end{eqnarray*}
where $p_{\ell,i} = \E_{(x,y)\sim \mu}[\phi^\ell(x)\1(y=i)]$ and the last inequality uses the fact that $\sum_{i'}p_{\ell',i'} = \E_{x\sim \P^\mu}\left[\phi^{\ell'}(x)\right]\leq 1$.

To bound the operator norm of $\|\bSigma^{-1}\|$, denote
 $\upsilon_{\ell, i} = \mathcal{O}\left(\sqrt{\frac{p_{\ell,i}\log(Lm|\H|/\delta)}{n^\tr}}\right)$.  
From Lemma \ref{lem:Sigma-inv-concentration}, we can express $\bSigma$ as a sum of a matrix $\A$ and a diagonal matrix $\D$, i.e.\ $\bSigma = \A + \D$, where each 
$A_{(\ell, i), (\ell, i)} = 0$,
$A_{(\ell, i), (\ell', i')} \leq \omega + \upsilon_{\ell,i}, \forall (\ell, i) \ne (\ell', i')$ and $D_{(\ell, i), (\ell, i)} \geq \gamma - \upsilon_{\ell,i}$. 
Let $\sigma_{\ell,i}(\bSigma)$ denote the $(\ell,i)$-th largest singular value of $\bSigma$.
By Weyl's inequality, we have that the singular values of $\bSigma$ can be bounded
in terms of the singular values $\D$
(see e.g., \citet{stewart1998perturbation}):
\[
|\sigma_{\ell,i}(\bSigma) - \sigma_{\ell,i}(\D)| \leq \|\A\|,
\]
or
$$
\sigma_{\ell,i}(\D) - \sigma_{\ell,i}(\bSigma) \leq \|\A\|.
$$
We further have:
\allowdisplaybreaks
\begin{eqnarray*}
\sigma_{\ell,i}(\bD)  - \sigma_{\ell,i}(\bSigma) &\leq& \|\A\| \leq \sqrt{\sum_{(\ell,i) \ne (\ell',i')}(\omega+\upsilon_{\ell,i})^2}
+ \upsilon_{\ell, i}
\leq
\sqrt{2}\sqrt{\sum_{(\ell,i) \ne (\ell',i')}\omega^2 + \sum_{(\ell,i) \ne (\ell',i')}\upsilon_{\ell,i}^2}
+ \upsilon_{\ell, i}
\\
&\leq&
\sqrt{2}\sqrt{\sum_{(\ell,i) \ne (\ell',i')}\omega^2} + \sqrt{2}\sqrt{\sum_{(\ell,i) \ne (\ell',i')}\upsilon_{\ell,i}^2}\\
&\leq&
\sqrt{2}Lm\omega \,+\, \mathcal{O}\left(\sqrt{\frac{\log(Lm|\H|/\delta)}{n^\tr}}\right)\sqrt{\sum_{(\ell,i) \ne (\ell',i')}p_{\ell,i}}\\
&\leq& 
\sqrt{2}Lm\omega \,+\, \mathcal{O}\left(\sqrt{\frac{L^2m\log(Lm|\H|/\delta)}{n^\tr}}\right).
\end{eqnarray*}
Since $\sigma_{\ell,i}(\D) \geq \gamma - \max_{\ell, i}\upsilon_{\ell, i}$, and 
$$
\sigma_{\ell,i}(\bSigma) \,\geq\,
\gamma \,-\, \sqrt{2}Lm\omega \,-\, \mathcal{O}\left(\sqrt{\frac{L^2m\log(Lm|\H|/\delta)}{n^\tr}}\right) - \max_{\ell, i}\upsilon_{\ell, i}.
$$
Substituting for $\max_{\ell, i}\upsilon_{\ell, i} \leq \mathcal{O}\left(\sqrt{\frac{\log(Lm|\H|/\delta)}{n^\tr}}\right)$, and denoting $\xi = \sqrt{2}Lm\omega \,+\, \mathcal{O}\left(\sqrt{\frac{L^2m\log(Lm|\H|/\delta)}{n^\tr}}\right)$,
we have $\sigma_{\ell,i}(\bSigma) \,\geq\, \xi$.
With this, we can bound operator norm of $\|\bSigma^{-1}\|$ as:
\[
\|\bSigma^{-1}\| = \frac{1}{\min_{\ell,i} \sigma_{\ell,i}(\bSigma)} \,\leq\,
\frac{1}{\gamma - \xi}\,\leq\,
\mathcal{O}\left(\frac{1}{\gamma}\right),
\]
where the last inequality follows from 
the assumption that $n^\tr \geq \frac{L^2m\log(Lm|\H|/\delta)}{(\frac{\gamma}{2} - \sqrt{2}Lm\omega)^2}$ and hence
$\xi \leq \mathcal{O}\left(\gamma/2\right)$. 
\end{proof}

We are now ready to prove Theorem \ref{thm:alpha-diagonal-linear-conopt}.
\begin{proof}[Proof of Theorem \ref{thm:alpha-diagonal-linear-conopt}]
\allowdisplaybreaks
The solution from Algorithm \ref{algo:weight-coeff} is given by 
 $\hat{\balpha} = \hat{\bSigma}^{-1}\hat{\bcE}$. 
Recall we can write the ``true'' coefficients by
$\bar{\balpha} = \bSigma^{-1}\bar{\bcE}$, 
where $\bar{\bcE}$ is defined in \eqref{eq:perf-bar},
and we also defined $\balpha = \bSigma^{-1}{\bcE}$.
The left-hand side of Theorem \ref{thm:alpha-diagonal-linear-conopt} can then be expanded as:
\begin{eqnarray}
    \|\hat{\balpha} - \bar{\balpha}\| &\leq&
    \|\hat{\balpha} - {\balpha}\| + \|{\balpha} - \bar{\balpha}\|\nonumber\\
    &\leq&
    \|\hat{\balpha} - {\balpha}\| + \|\bSigma^{-1}({\bcE} - \bar{\bcE})\|\nonumber\\
    &\leq&
    \|\hat{\balpha} - {\balpha}\| + \|\bSigma^{-1}\|\|({\bcE} - \bar{\bcE})\|\nonumber\\
    &\leq&
    \|\hat{\balpha} - {\balpha}\| + \nu\sqrt{Lm} \|\bSigma^{-1}\|\label{eq:lhs-penultimate}\\
    &\leq& \|\hat{\balpha} - {\balpha}\| \,+\, \frac{2\nu\sqrt{Lm}}{\gamma}.
    \label{eq:lhs}
\end{eqnarray}

Here the second-last step follows from Assumption \ref{asp:alpha-star}, in particular from
$\left|\sum_{\ell,i}
\bar{\alpha}^{\ell}_i
\Phi^{\Dshift, \ell}_i[h] -
\perf^\Dtrue[h]\right| \,\leq\, \nu, \forall h$, which gives us that
$\left|\sum_{\ell,i}
\bar{\alpha}^{\ell}_i
\Phi^{\Dshift, \ell}_i[h^{\ell',i'}] -
\perf^\Dtrue[h^{\ell',i'}]\right| \,\leq\, \nu$, for all $\ell', i'$. 
The last step follows from Lemma \ref{lem:Sigma-inv} and holds with probability at least $1-\delta$ over draw of $S^\tr$.

All that remains is to bound the term $\|\hat{\balpha} - {\balpha}\|$. 
Given that 
$\hat{\balpha} = \hat{\bSigma}^{-1}\hat{\bcE}$. 
and $\balpha = \bSigma^{-1}{\bcE}$, we can
use standard error analysis for linear systems (see e.g.,
\citet{demmel1997applied}) to bound:

\begin{eqnarray*}
{\|\hat{\balpha} - {\balpha}\|}
&\leq&
\|{\balpha}\|\|\bSigma\|\|\bSigma^{-1}\|\left(
\frac{\|\bSigma - \hat{\bSigma}\|}{\|\bSigma\|}
\,+\,
\frac{\|\bcE - \hat{\bcE}\|}{\|\bcE\|}\right)\\
&\leq&
\|\bSigma^{-1}\|^2\|\bcE\|\left(
\|\bSigma - \hat{\bSigma}\|
\,+\,
\|\bSigma\|\frac{\|\bcE - \hat{\bcE}\|}{\|\bcE\|}\right)
~~(\text{from  $\balpha = \bSigma^{-1}\bcE$})\\
&\leq&
\|\bSigma^{-1}\|^2\|\bcE\|\left(
\|\bSigma - \hat{\bSigma}\|
\,+\,
L\sqrt{m}\frac{\|\bcE - \hat{\bcE}\|}{\|\bcE\|}\right)
~(\text{from Lemma \ref{lem:Sigma-inv}})\\
&\leq&
\|\bSigma^{-1}\|^2\sqrt{Lm}\left(
\|\bSigma - \hat{\bSigma}\|
\,+\,
\frac{L\sqrt{m}}{\sqrt{Lm}c}{\|\bcE - \hat{\bcE}\|}\right)
~(\text{using  $\perf_{(\ell,i)} \in (c,1]$})
\\
&\leq&
\|\bSigma^{-1}\|^2\sqrt{Lm}\left(
\|\bSigma - \hat{\bSigma}\|
\,+\,
\frac{\sqrt{L}}{c}{\|\bcE - \hat{\bcE}\|}\right)\\
&\leq&
\mathcal{O}\left(\frac{\sqrt{Lm}}{\gamma^2}\left(\sqrt{\frac{L^2m\log(Lm|\H|/\delta)}{n^\tr}}
\,+\,
\frac{\sqrt{L}}{c}\sqrt{\frac{Lm\log(Lm/\delta)}{n^\val}}\right)\right)\\
&=&
\mathcal{O}\left(\frac{Lm}{\gamma^2}\left(\sqrt{\frac{L\log(Lm|\H|/\delta)}{n^\tr}}
\,+\,
\frac{1}{c}\sqrt{\frac{L\log(Lm/\delta)}{n^\val}}\right)\right),
\end{eqnarray*}
where the last two steps follow from Lemmas \ref{lem:Sigma-diff}--\ref{lem:Perf-diff} and Lemma \ref{lem:Sigma-inv},
and  hold with probability at least $1-\delta$ over draws of $S^\tr$ and $S^\val$.
Plugging this back into \eqref{eq:lhs} completes the proof.
\end{proof}

\subsection{Error Bound for PI-EW}
\label{app:pi-ew}
When the metric  is linear, we have the following bound on the gap between the metric value achieved by classifier $\hat{h}$ output by Algorithm \ref{algo:linear-metrics}, and the optimal  value. This result will then be useful in proving an error bound for Algorithm \ref{algo:FW} in the next section.
\begin{lem}[\textbf{Error Bound for PI-EW}]
\label{lem:plugin-linear}
Let the input metric be of the form $\hat{\perf}^\lin[h] = \sum_{i}\beta_i \hat{C}^\val_{ii}[h]$ for some (unknown) coefficients $\bbeta \in \R_+^m, \|\bbeta\|\leq 1$, and denote $\perf^\lin[h] = \sum_{i}\beta_i C^\Dtrue_{ii}[h]$.
Let $\bar{\balpha}$ be the associated weighting coefficient for $\perf^\lin$  in Assumption \ref{asp:alpha-star}, with $\|\bar{\balpha}\|_1 \leq B$ and with slack $\nu$. Fix $\delta>0$.
Suppose w.p. $\geq 1-\delta$ over draw of $S^\tr$ and $S^\val$, the weight elicitation routine in line 2 of Algorithm \ref{algo:linear-metrics}
provides coefficients $\hat{\balpha}$ with
 $\|\hat{\balpha} -\bar{\balpha}\| \leq 
 \kappa(\delta, n^\tr, n^\val)
 $, for some function $\kappa(\cdot) > 0$. Let $B' = B + \sqrt{Lm}\,\kappa(\delta, n^\tr, n^\val).$
 Then with the same probability, the classifier $\hat{h}$ output by Algorithm \ref{algo:linear-metrics} satisfies:
\begin{eqnarray*}
{
\max_{h}\perf^\lin[h] - \perf^\lin[\hat{h}]} &\leq 
B'\E_x\left[\|\eta^\tr(x)-
\hat{\eta}^\tr(x)\|_1\right]
\,+\,
 2\sqrt{Lm}\,\kappa(\delta, n^\tr, n^\val)
  \,+\, 2\nu,
\end{eqnarray*}
where $\eta_i^\tr(x) = \P^\mu(y=i|x)$. Furthermore, when the metric coefficients $\|\bbeta\| \leq Q$, for some $Q > 0$, then 
\begin{eqnarray*}
{
\max_{h}\perf^\lin[h] - \perf^\lin[\hat{h}]} &\leq 
Q\left(B'\E_x\left[\|\eta^\tr(x)-
\hat{\eta}^\tr(x)\|_1\right]
\,+\,
 2\sqrt{Lm}\,\kappa(\delta, n^\tr, n^\val)\,+\, 2\nu\right).
\end{eqnarray*}
\end{lem}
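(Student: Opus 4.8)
The plan is to use the weighted training objective
\[
G[h] \;:=\; \E_{(x,y)\sim\Dshift}\Big[\textstyle\sum_i \widehat{W}_i(x)\,\1(y=i)h_i(x)\Big] \;=\; \textstyle\sum_{\ell,i}\widehat{\alpha}^\ell_i\,\Phi^{\Dshift,\ell}_i[h]
\]
as a bridge between the target metric $\perf^\lin$ and the plug-in classifier $\widehat h$ returned by Algorithm~\ref{algo:linear-metrics}. Writing $h^*$ for a maximizer of $\perf^\lin$ over (randomized) classifiers, I would decompose
\[
\max_h\perf^\lin[h]-\perf^\lin[\widehat h] \;=\; \big(\perf^\lin[h^*]-G[h^*]\big) + \big(G[h^*]-G[\widehat h]\big) + \big(G[\widehat h]-\perf^\lin[\widehat h]\big),
\]
and bound the three pieces. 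The first and third are controlled by how well the elicited weighting reproduces the metric (Assumption~\ref{asp:alpha-star} together with the elicitation error $\kappa$); the middle one is a pure plug-in/post-shift estimation term.

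\textbf{Step 1: the two approximation terms.} For an arbitrary $h$, write $G[h]-\perf^\lin[h] = \sum_{\ell,i}(\widehat\alpha^\ell_i-\bar\alpha^\ell_i)\Phi^{\Dshift,\ell}_i[h] + \big(\sum_{\ell,i}\bar\alpha^\ell_i\Phi^{\Dshift,\ell}_i[h]-\perf^\lin[h]\big)$. By Cauchy--Schwarz the first sum is at most $\|\widehat\balpha-\bar\balpha\|\cdot\|\bPhi^{\Dshift}[h]\| \le \sqrt{Lm}\,\kappa$, since every entry of $\bPhi^{\Dshift}[h]$ lies in $[0,1]$ so the vector has $\ell_2$-norm $\le\sqrt{Lm}$; the second term is $\le\nu$ by Assumption~\ref{asp:alpha-star}. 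Hence $|G[h]-\perf^\lin[h]|\le \sqrt{Lm}\,\kappa+\nu$ \emph{uniformly over $h$}, and applying this at $h^*$ and at $\widehat h$ disposes of the first and third pieces of the decomposition.

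\textbf{Step 2: the plug-in term.} Since $G[h]=\E_x[\sum_i\widehat W_i(x)\eta^\tr_i(x)h_i(x)]$ is linear in $h(x)\in\Delta_m$, it is maximised by the deterministic rule $h^{**}(x)=\argmax_i\widehat W_i(x)\eta^\tr_i(x)$, whereas Algorithm~\ref{algo:linear-metrics} returns $\widehat h(x)=\argmax_i\widehat W_i(x)\widehat\eta^\tr_i(x)$. A standard pointwise argmax-mismatch argument (add and subtract $\widehat W_i(x)\widehat\eta^\tr_i(x)$, use optimality of $\widehat h$ for the estimated scores) gives, for each $x$,
\[
\textstyle\sum_i\widehat W_i(x)\eta^\tr_i(x)\big(h^{**}_i(x)-\widehat h_i(x)\big) \;\le\; \sum_i|\widehat W_i(x)|\,|\eta^\tr_i(x)-\widehat\eta^\tr_i(x)| \;\le\; \big(\max_i|\widehat W_i(x)|\big)\,\|\eta^\tr(x)-\widehat\eta^\tr(x)\|_1 .
\]
I would then bound $\max_i|\widehat W_i(x)| \le \max_i\sum_\ell|\widehat\alpha^\ell_i|\,\phi^\ell(x) \le \|\widehat\balpha\|_1 \le \|\bar\balpha\|_1 + \sqrt{Lm}\,\|\widehat\balpha-\bar\balpha\| \le B+\sqrt{Lm}\,\kappa = B'$ (using $\phi^\ell\le 1$ and $\|\cdot\|_1\le\sqrt{Lm}\,\|\cdot\|_2$ on $\R^{Lm}$). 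Taking expectations over $x$ and using $G[h^*]\le G[h^{**}]$ gives $G[h^*]-G[\widehat h]\le B'\,\E_x[\|\eta^\tr(x)-\widehat\eta^\tr(x)\|_1]$. Adding the three bounds yields the first displayed inequality; since the only stochastic ingredient invoked is the event $\|\widehat\balpha-\bar\balpha\|\le\kappa$, the conclusion holds on exactly that event, i.e.\ with probability $\ge 1-\delta$.

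\textbf{Step 3 and the main obstacle.} For $\|\bbeta\|\le Q$ I would apply the first bound to the rescaled metric with coefficients $\bbeta/Q$ (which has unit-norm coefficients, an associated $\bar\balpha/Q$ whose $\ell_1$-norm is still $\le B$, the same slack $\nu$, and for which the routine returns $\widehat\balpha/Q$, so $\widehat h$ is unchanged because the $\argmax$ is invariant to positive scaling of $\widehat W$), and then multiply through by $Q$ using homogeneity of $\perf^\lin$ in $\bbeta$. The only real subtlety is in Step~1: the plug-in guarantee of Step~2 is stated against the surrogate objective $G$, not against $\perf^\lin$, so the whole argument hinges on Assumption~\ref{asp:alpha-star} supplying a single weighting $\bar\balpha$ that approximates $\perf^\lin[h]$ for \emph{all} $h$ (not merely at the probing classifiers); everything else reduces to Cauchy--Schwarz, the triangle inequality, and the elementary argmax-mismatch lemma.
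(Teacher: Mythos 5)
Your proof is correct and takes essentially the same route as the paper. You organize the argument more explicitly than the paper does -- you name the surrogate objective $G[h]=\sum_{\ell,i}\widehat\alpha^\ell_i\Phi^{\Dshift,\ell}_i[h]$, decompose the regret into three pieces around it, and explicitly introduce the $G$-optimal classifier $h^{**}$ before running the argmax-mismatch argument -- whereas the paper carries out the same replacements ($\perf^\lin\to\bar W$-objective $\to\hat W$-objective $\to$ plug-in term) in a single chain of inequalities. But the ingredients are identical: Assumption~\ref{asp:alpha-star} for the $\nu$ slack, Cauchy--Schwarz with $\|\bPhi^{\Dshift}[h]\|\le\sqrt{Lm}$ (respectively $\sum_\ell\phi^\ell(x)^2\le Lm$) for the $\sqrt{Lm}\,\kappa$ terms, $\|\hat\balpha\|_1\le B+\sqrt{Lm}\,\kappa$ for the $B'$ factor, H\"older plus optimality of $\hat h$ under $\hat W_i\hat\eta^\tr_i$ for the $\E_x[\|\eta^\tr-\hat\eta^\tr\|_1]$ term, and rescaling by $Q$ for the final part.
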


\begin{proof}
For the proof, we will treat $\widehat{h}$ as a classifier that outputs one-hot labels,
i.e.\ as classifier  $\widehat{h}: \X \> \{0,1\}^m$ with
\begin{equation}
    \widehat{h}(x) \,=\, \onehot\left(\argmax^*_{i \in [m]} \widehat{W}_{i}(x)\hat{\eta}^\tr_i(x)\right),
    \label{eq:h-hat-onehot}
\end{equation}
where $\argmax^*$ breaks ties in favor of the largest class.

Let
$\bar{W}_i(x)  = \sum_{\ell=1}^L \bar{\alpha}^{\ell}_{i}\phi^\ell(x)$ and $\hat{W}_i(x) = \sum_{\ell=1}^L \hat{\alpha}^{\ell}_{i}\phi^\ell(x)$. It is easy to see that 
\begin{equation}
\label{eq:w-bar-w-hat}
|\bar{W}_i(x) - \hat{W}_i(x)| \leq 
\|\bar{\balpha} - \hat{\balpha}\|\sqrt{\sum_{\ell=1}^L \phi^\ell(x)^2} \leq
\sqrt{Lm}\|\bar{\balpha} - \hat{\balpha}\| \leq \sqrt{Lm}\kappa,
\end{equation}
where in the second inequality we use $|\phi^\ell(x)| \leq 1$, and in the last inequality, we have shortened the notation $\kappa(\delta, n^\tr, n^\val)$ to $\kappa$ and for simplicity will avoid mentioning that this holds with high probability.

\allowdisplaybreaks
Further, recall from Assumption \ref{asp:alpha-star} that
\begin{equation*}
|\bar{W}_i(x)| \leq \|\bar{\balpha}\|_1\max_{\ell}|\phi^\ell(x)| \leq B(1) = B
\end{equation*}
and so from \eqref{eq:w-bar-w-hat},
\begin{equation}
|\hat{W}_i(x)| \leq B + \sqrt{Lm}\kappa.
\label{eq:W-bound}
\end{equation}
We also have from Assumption \ref{asp:alpha-star} that
\begin{equation*}
\left|\perf^\lin[h] \,-\,  \E_{(x,y)\sim \mu}\left[\sum_{i=1}^m\bar{W}_i(x)\1(y=i)h_i(x)\right]\right| \leq \nu, \forall h.
\end{equation*}
Equivalently, this can be re-written in terms of the conditional class probabilities $\eta^\tr(x) = \P^\mu(y=1|x)$:
\begin{equation}
   \left|\perf^\lin[h] \,-\,  \E_{x\sim \P^\mu}\left[\sum_{i=1}^m\bar{W}_i(x)\eta^\tr_i(x)h_i(x)\right]\right| \leq \nu, \forall h,
\label{eq:perf-ex-over-P-mu} 
\end{equation}
where $\P^\mu$ denotes the marginal distribution of $\Dshift$ over $\X$.
Denoting $h^* \in \argmax_{h}\,\perf^\lin[h]$, we then have from \eqref{eq:perf-ex-over-P-mu},
\begin{eqnarray*}
\allowdisplaybreaks
\lefteqn{\max_{h}\,\perf^\lin[h] \,-\, \perf^\lin[\hat{h}]}\\
&=&
\sum_{i=1}^m \E_x\left[\bar{W}_{i}(x)\eta^\tr_{i}(x)h^*_i(x))\right]
\,-\, \sum_{i=1}^m \E_x\left[ \bar{W}_{i}(x)\eta^\tr_{i}(x)\hat{h}_i(x)\right] \,+\, 2\nu
\\
&\leq&
\sum_{i=1}^m \E_x\left[\hat{W}_{i}(x)\eta^\tr_{i}(x)h^*_i(x))\right]
\,-\, \sum_{i=1}^m \E_x\left[ \hat{W}_{i}(x)\eta^\tr_{i}(x)\hat{h}_i(x)\right] \,+\, 2\nu \,+\,
2\sqrt{Lm}\kappa\\
&&
\hspace{6cm}\text{(from \eqref{eq:w-bar-w-hat}, $\textstyle\sum_{i=1}^m\eta^\tr_i(x) = 1$ and $h_i(x) \leq 1$)}\\
&\leq&
\sum_{i=1}^m \E_x\left[\hat{W}_{i}(x)\eta^\tr_{i}(x)h^*_i(x))\right]
\,-\, \sum_{i=1}^m\E_x\left[\hat{W}_{i}(x)\hat{\eta}^\tr_{i}(x)h^*_i(x))\right]\\
&&
\hspace{2cm}
\,+\, \sum_{i=1}^m\E_x\left[\hat{W}_{i}(x)\hat{\eta}^\tr_{i}(x)h^*_i(x))\right]
\,-\, \sum_{i=1}^m \E_x\left[ \hat{W}_{i}(x)\eta^\tr_{i}(x)\hat{h}_i(x)\right] \,+\, 2\nu \,+\,
2\sqrt{Lm}\kappa
\end{eqnarray*}
From the definition of $\widehat{h}$ in \eqref{eq:h-hat-onehot}, we 
have that $\sum_{i=1}^m\hat{W}_{i}(x)\hat{\eta}^\tr_{i}(x)\hat{h}_i(x) \geq \sum_{i=1}^m\hat{W}_{i}(x)\hat{\eta}^\tr_{i}(x)h_i(x),$ for all $h: \X \> \Delta_m$. Therefore,
\begin{eqnarray*}
\lefteqn{\max_{h}\,\perf^\lin[h] \,-\, \perf^\lin[\hat{h}]}\\
&\leq&
\sum_{i=1}^m \E_x\left[\hat{W}_{i}(x)\eta^\tr_{i}(x)h^*_i(x))\right]
\,-\, \sum_{i=1}^m\E_x\left[\hat{W}_{i}(x)\hat{\eta}^\tr_{i}(x)h^*_i(x))\right]\\
&&
\hspace{2cm}
\,+\, \sum_{i=1}^m\E_x\left[\hat{W}_{i}(x)\hat{\eta}^\tr_{i}(x)\hat{h}_i(x))\right]
\,-\, \sum_{i=1}^m \E_x\left[ \hat{W}_{i}(x)\eta^\tr_{i}(x)\hat{h}_i(x)\right] \,+\, 2\nu \,+\,
2\sqrt{Lm}\kappa
\\
&\leq&
\sum_{i=1}^m \E_x\left[\hat{W}_{i}(x)|\eta^\tr_{i}(x) - \hat{\eta}^\tr_i(x)|h^*_i(x)\right]
\,+\, \sum_{i=1}^m\E_x\left[\hat{W}_{i}(x)|\eta^\tr_{i}(x) - \hat{\eta}^\tr_i(x)|\hat{h}_i(x)\right]\,+\, 2\nu \,+\,
2\sqrt{Lm}\kappa
\\
&\leq&
\sum_{i=1}^m \E_x\left[\hat{W}_{i}(x)|\eta^\tr_{i}(x) - \hat{\eta}^\tr_i(x)||h^*_i(x) - \hat{h}_i(x)|\right]\,+\, 2\nu \,+\,
2\sqrt{Lm}\kappa
\\
&\leq&
\E_x\left[\max_i \left(\hat{W}_{i}(x)|h^*_i(x) - \hat{h}_i(x)|\right)\|\eta(x)-
\hat{\eta}(x)\|_1\right]\,+\, 2\nu \,+\,2\sqrt{Lm}\kappa
\\
&\leq&
(B + \sqrt{Lm}\kappa)\,\E_x\left[\|\eta(x)-
\hat{\eta}(x)\|_1\right]\,+\, 2\nu \,+\, 2\sqrt{Lm}\kappa,
\end{eqnarray*}
where the last step follows from \eqref{eq:W-bound} and $|h_i(x) - \hat{h}_i(x)|\leq 1$. This completes the proof. The second part, where $\|\bbeta\| \leq Q$, follows by applying Assumption \ref{asp:alpha-star} to normalized coefficients $\bbeta/\|\bbeta\|$, and scaling the associated slack $\nu$ by $Q$.
\end{proof}

\subsection{Proof of Theorem \ref{thm:iterative-plugin}}
\label{app:proof-FW}
We will make a couple of minor changes to the algorithm to simplify the analysis. Firstly, instead of using the same sample $S^\val$ for both estimating the example weights (through call to \textbf{PI-EW} in line 9) and estimating confusion matrices $\hat{\C}^{\val}$ (in line 10), we split $S^\val$ into two halves, use one half for the first step and the other half for the second step. Using independent samples for the two steps, we will be able to derive straight-forward confidence bounds on the estimated confusion matrices in each case. In our experiments however, we find the algorithm to be effective even when a common sample is used for both steps. Secondly, we modify line 8 to include a shifted version of the metric $\hat{\perf}^\val$, so that later in Appendix \ref{app:complex-unknown} when we handle the case of ``unknown $\psi$'', we can avoid having to keep track of an additive constant in the gradient coefficients.
\renewcommand{\thealgorithm}{3$^*$}
\begin{algorithm}[H]
\caption{\hspace{-0.075cm}\textbf{:} \textbf{F}rank-\textbf{W}olfe with \textbf{E}licited \textbf{G}radients (\textbf{FW-EG}) for General Diagonal Metrics} \label{algo:FW-modified}
\begin{algorithmic}[1]
\STATE \textbf{Input:} $\hat{\perf}^\val$, Basis functions $\phi^1, \ldots, \phi^L: \X \> [0,1]$, Pre-trained $\hat{\eta}^\tr: \X \> \Delta_m$,
$S^\tr \sim \Dshift$, 
$S^\val \sim \Dtrue$ split into two halves
$S^\val_1$ and $S^\val_2$ of sizes $\lceil n^\val/2\rceil$ and $\lfloor n^\val/2\rfloor$ respectively, $T$, $\epsilon$
\STATE Initialize classifier $h^0$ and $\c^0 = \diag(\widehat{\C}^\val[h^0])$
\STATE \textbf{For} $t =  0$ \textbf{to} $T-1$ \textbf{do}
\STATE ~~~\textbf{if} $\perf^\Dtrue[h] = \psi(C_{11}^D[h],\ldots,C_{mm}^D[h])$ for known $\psi$:
\STATE ~~~~~~~$\bbeta^{t}\,=\, \nabla\psi(\c^{t})$
\STATE ~~~~~~~$\hat{\perf}^\lin[h]\,=\, \sum_i \beta^t_i \hat{C}^\val_{ii}[h],$  evaluated using $S^\val_1$
\STATE ~~~\textbf{else}
\STATE ~~~~~~~$\hat{\perf}^\lin[h]= \hat{\perf}^\val[h]- \hat{\perf}^\val[h^t]$, evaluated using $S^\val_1$ \hspace{0.1cm}\COMMENT{small $\epsilon$ recommended}
\STATE ~~~$\widehat{f} = \text{\textbf{PI-EW}}(\hat{\perf}^\lin, \phi^1,..., \phi^L, \hat{\eta}^\tr, S^\tr, S^\val_1, h^{t}, \epsilon)$
\STATE ~~~$\tilde{\c} = \diag(\widehat{\C}^\val[\widehat{f}])$, evaluated using $S^\val_2$
\STATE ~~~${h}^{t+1} = \big(1-\frac{2}{t+1}\big) {h}^{t} + \frac{2}{t+1} \onehot(\widehat{f})$
\STATE ~~~${\c}^{t+1} = \big(1-\frac{2}{t+1}\big) {\c}^{t} + \frac{2}{t+1}\tilde{\c}$
\STATE \textbf{End For}
\STATE \textbf{Output:} $\hat{h} = h^T$
\end{algorithmic}
\end{algorithm}

\begin{thm*}[(Restated) \textbf{Error Bound for FW-EG with known $\psi$}]
Let $\perf^\Dtrue[h] = \psi(C^\Dtrue_{11}[h],\ldots, C^\Dtrue_{mm}[h])$ for a \emph{known} concave function $\psi: [0,1]^m \>\R_+$, which is $Q$-Lipschitz, and  $\lambda$-smooth w.r.t.\ the $\ell_1$-norm.  
Let $\hat{\perf}^\val[h]=\psi(\hat{C}^\val_{11}[h],\ldots, \hat{C}^\val_{mm}[h])$. 
Fix $\delta \in (0, 1)$.
Suppose Assumption \ref{asp:alpha-star} holds with slack $\nu$, and for any linear metric $\sum_i\beta_i C^\Dtrue_{ii}[h]$ with $\|\bbeta\|\leq 1$, whose associated weight coefficients is $\bar{\balpha}$ with $\|\bar{\balpha}\| \leq B$, 
 w.p. $\geq 1-\delta$ over draw of $S^\tr$ and $S^\val_1$, the weight elicitation routine in Algorithm \ref{algo:weight-coeff} outputs coefficients $\hat{\balpha}$ with
 $\|\hat{\balpha} -\bar{\balpha}\| \leq 
 \kappa(\delta, n^\tr, n^\val)
 $, for some function $\kappa(\cdot) > 0$. Let $B' = B + \sqrt{Lm}\,\kappa(\delta/T, n^\tr, n^\val).$ Assume $m \leq n^\val$.
 Then w.p.\  $\geq 1 - \delta$ over draws of $S^\tr$ and $S^\val$ from $\Dtrue$ and $\Dshift$ resp., the classifier $\hat{h}$ output by Algorithm \ref{algo:FW-modified} after $T$ iterations satisfies:
\begin{eqnarray*}
\lefteqn{
\max_{h}\perf^D[h] - \perf^D[\hat{h}]\,\leq\,
2QB'\E_x\left[\|\eta^\tr(x)-
\hat{\eta}^\tr(x)\|_1\right] +
4Q\nu +
 4Q\sqrt{Lm}\,\kappa(\delta/T, n^\tr, n^\val)}\\
 &&
 \hspace{4.5cm}
+\, \mathcal{O}\left(\lambda m\sqrt{\frac{m\log(n^\val)\log(m) + \log(m/\delta)}{n^\val}} + \frac{\lambda}{T}\right).
\end{eqnarray*}
\end{thm*}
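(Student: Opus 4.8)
The plan is to read Algorithm \ref{algo:FW-modified} as an inexact Frank--Wolfe scheme for maximizing the concave $\psi$ over the convex set $\cC = \{\diag(\C^\Dtrue[h]) : h\text{ randomized}\}\subseteq[0,1]^m$ (convexity of $\cC$ is recalled in this appendix), but to run the analysis on the \emph{true} confusion trajectory $\bar{\c}^t := \diag(\C^\Dtrue[h^t])$ rather than on the empirical iterate $\c^t$ the algorithm maintains. Writing $\c^* \in \argmax_{\c\in\cC}\psi(\c)$, so that $\psi(\c^*) = \max_h\perf^\Dtrue[h]$ and $\psi(\bar{\c}^T) = \perf^\Dtrue[\hat{h}]$, the goal reduces to controlling $\psi(\c^*)-\psi(\bar{\c}^T)$, collecting errors from three sources: the per-step suboptimality of \textbf{PI-EW}, the error in estimating confusions on $S^\val_2$, and the Frank--Wolfe curvature term.

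First I would set up the two good events. Since the probing classifiers chosen inside Algorithm \ref{algo:weight-coeff} do not depend on the linear metric, the weight-elicitation guarantee holds simultaneously for all linear metrics with $\|\bbeta\|\le 1$; in particular it applies to each data-dependent gradient $\bbeta^t = \nabla\psi(\c^t)$ (which satisfies $\|\bbeta^t\|\le Q$ by $Q$-Lipschitzness, after rescaling), and together with Lemma \ref{lem:plugin-linear} it gives, with probability $\ge 1-\delta/2$ (a union bound over the $T$ iterations producing the $\delta/T$ inside $\kappa$), classifiers $\hat{f}^t$ (line 9) with
\[
\max_{\c\in\cC}\langle\bbeta^t,\c\rangle - \langle\bbeta^t,\diag(\C^\Dtrue[\hat{f}^t])\rangle \le Q\Delta,\quad \Delta := B'\,\E_x\big[\|\eta^\tr(x)-\hat{\eta}^\tr(x)\|_1\big] + 2\sqrt{Lm}\,\kappa(\delta/T,n^\tr,n^\val) + 2\nu.
\]
Separately, the classifiers $\hat{f}^t$, $h^t$ and $h^0$ all lie in the class of post-shifted (plug-in) classifiers $x\mapsto\argmax_i\sum_\ell\hat\alpha^\ell_i\phi^\ell(x)\,\hat\eta^\tr_i(x)$, whose growth function over the $m$-dimensional score vectors is of order $m\log m$; a single uniform-convergence bound over this class, using only $S^\val_2$ (which enters the algorithm only in line 10 and in $\c^0$), gives with probability $\ge 1-\delta/2$ a bound $\zeta := \mathcal{O}\big(m\sqrt{(m\log m\log n^\val + \log(m/\delta))/n^\val}\big)$ on $\|\diag(\hat{\C}^\val[f])-\diag(\C^\Dtrue[f])\|_1$ for every such $f$. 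Because $\c^t$ and $\bar{\c}^t$ are the same convex combination of $\diag(\hat{\C}^\val[\hat{f}^s])$ and $\diag(\C^\Dtrue[\hat{f}^s])$ respectively (plus the $h^0$ terms), this also yields $\|\c^t-\bar{\c}^t\|_1\le\zeta$ for all $t$ on the same event.

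Next is the per-iteration step. By $\lambda$-smoothness of $\psi$ w.r.t.\ $\|\cdot\|_1$ and the update $\bar{\c}^{t+1} = (1-\gamma_t)\bar{\c}^t + \gamma_t\diag(\C^\Dtrue[\hat{f}^t])$ with $\gamma_t = \Theta(1/t)$ and $\|\diag(\C^\Dtrue[\cdot])\|_1\le 1$,
\[
\psi(\bar{\c}^{t+1}) \ge \psi(\bar{\c}^t) + \gamma_t\,\langle\nabla\psi(\bar{\c}^t),\,\diag(\C^\Dtrue[\hat{f}^t]) - \bar{\c}^t\rangle - \mathcal{O}(\lambda\gamma_t^2).
\]
I would lower-bound the inner product by (i) replacing $\nabla\psi(\bar{\c}^t)$ by $\bbeta^t = \nabla\psi(\c^t)$ everywhere, which costs at most $\mathcal{O}(\lambda\|\bar{\c}^t-\c^t\|_1) = \mathcal{O}(\lambda\zeta)$ because $\lambda$-smoothness in $\ell_1$ bounds $\|\nabla\psi(\bar{\c}^t)-\bbeta^t\|_\infty$ (this $\ell_1/\ell_\infty$ duality is exactly what produces the $\lambda$-amplification of the confusion-estimation error); (ii) applying the \textbf{PI-EW} guarantee $\langle\bbeta^t,\diag(\C^\Dtrue[\hat{f}^t])\rangle\ge\langle\bbeta^t,\c^*\rangle - Q\Delta$; and (iii) invoking concavity at $\bar{\c}^t$, $\langle\nabla\psi(\bar{\c}^t),\c^*-\bar{\c}^t\rangle\ge\psi(\c^*)-\psi(\bar{\c}^t)$. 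Writing $\Delta^t := \psi(\c^*)-\psi(\bar{\c}^t)$ this gives $\Delta^{t+1}\le(1-\gamma_t)\Delta^t + \gamma_t\big(Q\Delta + \mathcal{O}(\lambda\zeta)\big) + \mathcal{O}(\lambda\gamma_t^2)$, and the standard induction for inexact Frank--Wolfe with $\gamma_t = \Theta(1/t)$ yields $\Delta^T\le 2Q\Delta + \mathcal{O}(\lambda\zeta + \lambda/T)$. Substituting the definitions of $\Delta$ and $\zeta$ and a union bound over the two events reproduces the four terms of the theorem.

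The main obstacle, and the step I would spend the most care on, is the uniform-convergence bound $\zeta$: because the iterates $\c^t$ (hence $\hat{f}^t$ and $h^t$) are correlated with $S^\val_2$ through earlier iterations, one cannot simply condition, and must instead control the empirical process over the \emph{entire} family of post-shifted classifiers $x\mapsto\argmax_i\sum_\ell\hat\alpha^\ell_i\phi^\ell(x)\hat\eta^\tr_i(x)$ parameterised by $\hat\balpha$ — bounding its capacity and converting it to the stated $m$- and $\log n^\val$-dependence, following the Frank--Wolfe analysis of \citet{narasimhan2015consistent}. The other delicate bookkeeping point is keeping the gradient-mismatch error (item (i)) at order $\lambda\zeta$ rather than letting it compound across iterations; everything else — the smoothness-based descent inequality, items (ii)--(iii), and the inexact-FW induction — is routine once the two events are in hand.
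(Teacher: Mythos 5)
Your proposal follows the same decomposition as the paper's proof: a uniform convergence bound on validation confusion estimates over the family of (randomized combinations of) post-shifted plug-in classifiers (the paper's Lemma \ref{lem:C-genbound}, via Natarajan dimension), the PI-EW per-step linear-maximization error (the paper's Lemma \ref{lem:plugin-linear}, combined with the $\delta/T$ union bound), a smoothness-based gradient-mismatch term scaling as $\lambda m\zeta$, and an inexact Frank--Wolfe convergence argument over the convex set of achievable confusion vectors. The only notable difference is presentational: you inline the inexact-FW induction on $\bar{\c}^t$, whereas the paper cleanly separates it by first bounding the linear-maximization error against the \emph{true} gradient $\bar{\bbeta}^t=\nabla\psi(\bar{\c}^t)$ (Lemma \ref{lem:lmo}, absorbing the gradient swap there) and then invoking a black-box inexact-FW convergence lemma (Lemma \ref{lem:fw}, from \citet{narasimhan2015consistent}); this modularization also makes plain that your item (i) gradient swap is needed twice (once in the PI-EW step, once in the concavity step), a $2\times$ constant that your sketch elides but that does not affect the result.
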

The proof adapts techniques from \citet{narasimhan2015consistent}, who
show guarantees for a Frank-Wolfe based learning algorithm with a known $\psi$ in the \textit{absence} of distribution shift.
The main proof steps are listed below:
\begin{itemize}
    \item Prove a generalization bound for the confusion matrices $\hat{\C}^\val$ evaluated in line 10 on the validation sample (Lemma \ref{lem:C-genbound})
    \item Establish an error bound for the call to \textbf{PI-EW} in line 9 (Lemma \ref{lem:plugin-linear} in previous section)
    \item Combine the above two results to show that the classifier $\hat{f}$ returned in line 9 is an approximate linear maximizer needed by the Frank-Wolfe algorithm (Lemma \ref{lem:lmo})
    \item Combine Lemma \ref{lem:lmo} with a convergence guarantee for the outer Frank-Wolfe algorithm  \cite{narasimhan2015consistent, Jaggi13} (using convexity of the space of confusion matrices $\cC$) to complete the proof (Lemmas \ref{lem:C-convexity}--\ref{lem:fw}).
\end{itemize}

\begin{lem}[Generalization bound for $\C^\Dtrue$]
\label{lem:C-genbound}
Fix $\delta \in (0,1)$. 
Let $\hat{\eta}^\tr: \X \> \Delta_m$ be a fixed class probability estimator. Let $\mathcal{G} = \{h:\X\>[m]\,|\,h(x) \in \argmax_{i\in[m]}\beta_i\hat{\eta}^\tr_i(x) \text{ for some }\bbeta \in \R_+^m\}$ be the set of plug-in classifiers defined with $\hat{\eta}^\tr$. 
Let $$\bar{\mathcal{G}} = \{h(x) = \textstyle\sum_{t=1}^T u_t h_t(x)\,|\, T \in \N, h_1,\ldots,h_T \in \mathcal{G}, \mathbf{u} \in \Delta_T\}$$ be the set of all randomized classifiers constructed from a finite number of plug-in classifiers in $\mathcal{G}$. 
Assume $m \leq n^\val$. 
Then with probability at least $1 - \delta$ over draw of $S^\val$ from $\Dtrue$, then for $h\in \bar{\mathcal{G}}$:
$$\|\C^\Dtrue[h] - \hat{\C}^\val[h]\|_\infty \leq \mathcal{O}\left(\sqrt{\frac{m\log(m)\log(n^\val) + \log(m/\delta)}{n^\val}}\right).$$
\end{lem}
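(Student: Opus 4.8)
The plan is to reduce from the randomized class $\bar{\mathcal{G}}$ to the deterministic plug-in class $\mathcal{G}$, and then apply a standard Vapnik--Chervonenkis uniform-convergence bound; the only non-routine ingredient is a complexity estimate for $\mathcal{G}$.

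First I would use the linearity of the confusion matrix in the randomization weights. For any $h = \sum_{t=1}^T u_t h_t \in \bar{\mathcal{G}}$ with $h_1,\dots,h_T \in \mathcal{G}$ and $\mathbf{u}\in\Delta_T$, both $\C^\Dtrue[h] = \sum_t u_t \C^\Dtrue[h_t]$ and $\hat{\C}^\val[h] = \sum_t u_t \hat{\C}^\val[h_t]$ hold entrywise, so by the triangle inequality $\|\C^\Dtrue[h] - \hat{\C}^\val[h]\|_\infty \le \max_{t}\|\C^\Dtrue[h_t]-\hat{\C}^\val[h_t]\|_\infty \le \sup_{h'\in\mathcal{G}}\|\C^\Dtrue[h']-\hat{\C}^\val[h']\|_\infty$. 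Hence it suffices to prove the stated bound uniformly over the deterministic class $\mathcal{G}$.

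Next I would control the complexity of $\mathcal{G}$. Since a plug-in classifier depends on $x$ only through the \emph{fixed} vector $\hat{\eta}^\tr(x)\in\Delta_m$, the region $\{x: h(x)=j\}$ equals $\{x: \beta_j\hat{\eta}^\tr_j(x)\ge \beta_k\hat{\eta}^\tr_k(x)\ \forall k\ne j\}$, which is an intersection of $m-1$ sets of the form $\{x:\hat{\eta}^\tr_j(x)\ge r_k\hat{\eta}^\tr_k(x)\}$, each a threshold family of VC dimension $1$; by the standard bound on the VC dimension of intersections of set classes, $\{\{x:h(x)=j\}:h\in\mathcal{G}\}$ has VC dimension $d=O(m\log m)$. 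Intersecting with the fixed event $\{y=i\}$ does not increase it, so each class $\mathcal{A}_{ij}:=\{\{(x,y):y=i,\,h(x)=j\}:h\in\mathcal{G}\}$ --- whose members satisfy $C^\Dtrue_{ij}[h]=\P_\Dtrue(A_{h,i,j})$ and $\hat{C}^\val_{ij}[h]=\hat{\P}_{S^\val}(A_{h,i,j})$ --- has VC dimension at most $d=O(m\log m)$. (Alternatively, one can invoke the analogous complexity estimate for plug-in classifiers from \citet{narasimhan2015consistent}.)

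Finally I would apply the classical VC uniform deviation bound to each $\mathcal{A}_{ij}$: with probability $\ge 1-\delta/m^2$ over the draw of $S^\val$, $\sup_{h\in\mathcal{G}}|C^\Dtrue_{ij}[h]-\hat{C}^\val_{ij}[h]| = O\!\big(\sqrt{(d\log(n^\val)+\log(m^2/\delta))/n^\val}\big)$, where the assumption $m\le n^\val$ keeps $d\le n^\val$ so this bound is in its usual regime. A union bound over the $m^2$ pairs $(i,j)$, together with $d=O(m\log m)$ and $\log(m^2/\delta)=O(\log(m/\delta))$, yields the claimed bound for $\sup_{h\in\mathcal{G}}\|\C^\Dtrue[h]-\hat{\C}^\val[h]\|_\infty$, and the reduction from the first paragraph extends it to all $h\in\bar{\mathcal{G}}$. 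The main obstacle is the middle step --- pinning down that plug-in classifiers defined through the fixed $\hat{\eta}^\tr$ form a class of VC dimension $O(m\log m)$ (equivalently, Natarajan dimension $O(m)$); the reduction and the concentration step are routine.
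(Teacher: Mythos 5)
Your proposal is correct and its overall skeleton matches the paper's: reduce from $\bar{\mathcal{G}}$ to $\mathcal{G}$ by linearity of the confusion matrix in the randomization weights, bound the complexity of the plug-in class $\mathcal{G}$, apply a uniform-deviation bound per entry, and union-bound over entries. The place you genuinely diverge is the complexity step. The paper cites Theorem 21 of Daniely et al.\ (2011) to assert that $\mathcal{G}$ has Natarajan dimension at most $m\log m$, and then applies the multiclass generalization bound of Theorem 13 in Daniely et al.\ (2015) to obtain uniform convergence of the confusion entries with the $m\log(m)\log(n^\val)$ factor. You instead give a direct, self-contained VC argument: each decision region $\{x: h(x)=j\}$ is an intersection of $m-1$ one-parameter threshold sets $\{x:\hat{\eta}^\tr_j(x)\ge r_k\hat{\eta}^\tr_k(x)\}$, and the standard intersection lemma (VC dimension of $k$-fold intersections of VC-dimension-$1$ classes is $O(k\log k)$) gives VC dimension $O(m\log m)$, after which the classical VC deviation bound plus a union bound over $(i,j)$ pairs closes the argument. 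Both routes give the same $O(m\log m)$ complexity and the identical final rate; yours is more elementary and does not require importing the Natarajan-dimension machinery, at the cost of being slightly less general (it exploits the specific one-dimensional parametrization of each pairwise threshold). One cosmetic point worth noting: you union-bound over all $m^2$ entries, which actually matches the lemma's $\|\cdot\|_\infty$ statement more literally than the paper's own proof, which takes the union only over the $m$ diagonal entries (consistent with the paper restricting attention to diagonal metrics throughout the section); since $\log(m^2/\delta)=O(\log(m/\delta))$, this costs you nothing.
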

\begin{proof}
The proof follows from standard convergence based generalization arguments, where we bound the capacity of the class of plug-in classifiers $\mathcal{G}$ in terms of its Natarajan dimension \cite{natarajan1989learning, daniely2011multiclass}.
Applying Theorem 21 from \cite{daniely2011multiclass}, we have that the Natarajan dimension  of $ \mathcal{G}$ is at most $d = m\log(m)$. 
Applying the generalization bound in Theorem 13 in \citet{daniely2015multiclass}, along with the assumption that $m\leq n^\val$, we have for any $i \in [m]$,
with probability at least $1 - \delta$ over draw of $S^\val$ from $\Dtrue$,  for any $h\in \mathcal{G}$:
$$|C_{ii}^\Dtrue[h] - \hat{C}_{ii}^\val[h]| \leq \mathcal{O}\left(\sqrt{\frac{m\log(m)\log(n^\val) + \log(1/\delta)}{n^\val}}\right).$$
Further note that for any randomized classifier $\bar{h}(x) = \sum_{t=1}^T u_t h_t(x) \in\bar{\mathcal{G}},$ for some $\mathbf{u} \in \Delta_T$,
$$|C_{ii}^\Dtrue[\bar{h}] - \hat{C}_{ii}^\val[\bar{h}]| \leq
\sum_{t=1}^Tu_t|C_{ii}^\Dtrue[h_t] - \hat{C}_{ii}^\val[h_t]|
\leq
\mathcal{O}\left(\sqrt{\frac{m\log(m)\log(n^\val) + \log(1/\delta)}{n^\val}}\right),$$
where the first inequality follows from linearity of expectations. 
Taking a union bound over all diagonal entries $i \in[m]$ completes the proof. 
\end{proof}

We next show that the call to \textbf{PI-EW} in line 9 of Algorithm \ref{algo:FW} computes an approximate  maximizer $\hat{f}$ for $\hat{\perf}^\lin$. This is an extension of Lemma 26 in \citet{narasimhan2015consistent}.
\begin{lem}[Approximation error in linear maximizer $\hat{f}$]
\label{lem:lmo} 
For each iteration $t$ in Algorithm \ref{algo:FW}, denote $\bar{\c}^t = \diag(\C^D[h^t])$, 
and $\bar{\bbeta}^t = \nabla\psi(\bar{\c}^t)$. Suppose the assumptions in Theorem \ref{thm:iterative-plugin} hold. Let $B' = B + \sqrt{Lm}\,\kappa(\delta, n^\tr, n^\val)$. 
Assume $m \leq n^\val$. Then w.p.\ $\geq 1 - \delta$ over draw of $S^\tr$ and $S^\val$ from $\Dshift$ and $\Dtrue$ resp., for any $t = 1, \ldots, T$, the classifier $\hat{f}$ returned by \emph{\textbf{PI-EW}} in line 9 satisfies:
\begin{eqnarray*}
\lefteqn{
\max_{h}\sum_{i}\bar{\beta}^t_i C^\Dtrue_{ii}[h] \,-\, 
\sum_{i}\bar{\beta}^t_iC^\Dtrue_{ii}[\hat{f}] \,\leq\,
QB'\E_x\left[\|\eta^\tr(x)-
\hat{\eta}^\tr(x)\|_1\right] 
\,+\, 2Q\nu
}\\
 &&
 \hspace{1cm}
 +\,
 2Q\sqrt{Lm}\,\kappa\left(\textstyle\frac{\delta}{T}, n^\tr, n^\val\right)
 \,+\, 
 \mathcal{O}\left(\lambda m\sqrt{\frac{m\log(m)\log(n^\val) + \log(m/\delta)}{n^\val}}\right).
\end{eqnarray*}
\end{lem}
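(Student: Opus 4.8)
The plan is to combine the \textbf{PI-EW} error bound (Lemma~\ref{lem:plugin-linear}) with the confusion-matrix generalization bound (Lemma~\ref{lem:C-genbound}), paying an extra smoothness penalty for replacing the gradient $\bar{\bbeta}^t = \nabla\psi(\bar{\c}^t)$ evaluated at the \emph{true} confusion $\bar{\c}^t = \diag(\C^D[h^t])$ with the gradient $\bbeta^t = \nabla\psi(\c^t)$ actually used in line~5, where $\c^t$ is the running Frank--Wolfe estimate maintained on $S^\val_2$.

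First I would apply the second part of Lemma~\ref{lem:plugin-linear} to the linear metric $\hat{\perf}^\lin[h] = \sum_i \beta^t_i \hat{C}^\val_{ii}[h]$ passed to \textbf{PI-EW} in line~9. Since $\psi$ is $Q$-Lipschitz, $\|\bbeta^t\| = \|\nabla\psi(\c^t)\| \le Q$, so running the weight-elicitation subroutine with confidence $\delta/T$ and taking a union bound over the $T$ iterations (with the half-sample $S^\val_1$) gives, for every $t$,
\[
\max_h \textstyle\sum_i \beta^t_i C^D_{ii}[h] - \sum_i \beta^t_i C^D_{ii}[\hat{f}] \,\le\, Q\big(B'\,\E_x[\|\eta^\tr(x)-\hat{\eta}^\tr(x)\|_1] + 2\sqrt{Lm}\,\kappa(\delta/T,n^\tr,n^\val) + 2\nu\big),
\]
with $B' = B + \sqrt{Lm}\,\kappa(\delta/T,n^\tr,n^\val)$.

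Next I would bound $\|\c^t - \bar{\c}^t\|$. Both sequences obey the same averaging recursion: by linearity of the confusion matrix in the randomized classifier, $\bar{\c}^{t+1} = (1-\tfrac{2}{t+1})\bar{\c}^t + \tfrac{2}{t+1}\diag(\C^D[\hat{f}])$, while $\c^{t+1} = (1-\tfrac{2}{t+1})\c^t + \tfrac{2}{t+1}\diag(\hat{\C}^\val[\hat{f}])$; the initializations differ only through $\c^0 = \diag(\hat{\C}^\val[h^0])$ versus $\bar{\c}^0 = \diag(\C^D[h^0])$. Unrolling, $\c^t - \bar{\c}^t$ is a convex combination of per-iteration discrepancies $\diag(\hat{\C}^\val[\hat{f}]) - \diag(\C^D[\hat{f}])$ (and the analogous term at $h^0$), each controlled in $\ell_\infty$ by Lemma~\ref{lem:C-genbound} applied to the class $\bar{\mathcal{G}}$ of randomized plug-in classifiers (using the independent half-sample $S^\val_2$, and $m \le n^\val$). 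Hence $\|\c^t - \bar{\c}^t\|_\infty \le \mathcal{O}\big(\sqrt{(m\log m\log n^\val + \log(m/\delta))/n^\val}\big)$ uniformly in $t$, and $\lambda$-smoothness of $\psi$ with respect to $\ell_1$ gives $\|\bbeta^t - \bar{\bbeta}^t\|_\infty = \|\nabla\psi(\c^t)-\nabla\psi(\bar{\c}^t)\|_\infty \le \lambda\|\c^t-\bar{\c}^t\|_1 \le \lambda m\|\c^t-\bar{\c}^t\|_\infty$.

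Finally I would close the argument via the decomposition (writing $\c[h] := \diag(\C^D[h])$):
\[
\max_h \langle\bar{\bbeta}^t,\c[h]\rangle - \langle\bar{\bbeta}^t,\c[\hat{f}]\rangle = \big(\max_h\langle\bar{\bbeta}^t,\c[h]\rangle - \max_h\langle\bbeta^t,\c[h]\rangle\big) + \big(\max_h\langle\bbeta^t,\c[h]\rangle - \langle\bbeta^t,\c[\hat{f}]\rangle\big) + \langle\bbeta^t-\bar{\bbeta}^t,\c[\hat{f}]\rangle.
\]
The middle bracket is exactly the \textbf{PI-EW} bound from the first step; each outer bracket is at most $\|\bbeta^t-\bar{\bbeta}^t\|_\infty\sum_i C^D_{ii}[h] \le \|\bbeta^t-\bar{\bbeta}^t\|_\infty$ since the diagonal confusion entries sum to at most one. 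Substituting the third-step bound on $\|\bbeta^t-\bar{\bbeta}^t\|_\infty$ and folding the factor $2$ into the $\mathcal{O}(\cdot)$ term gives the stated inequality, with the overall failure probability $\delta$ split across the \textbf{PI-EW} union bound and the two invocations of Lemma~\ref{lem:C-genbound}. The main obstacle is the third step's use of Lemma~\ref{lem:C-genbound}: one must ensure the generalization bound holds \emph{uniformly} over all iterates $\hat{f}$ and $h^t$ generated along the $T$ iterations, which works because these are randomized combinations of plug-in classifiers built on the \emph{same} pre-trained $\hat{\eta}^\tr$ and thus lie in a class whose capacity does not grow with $T$, and one must check that the split $S^\val = S^\val_1 \cup S^\val_2$ renders the confusion estimates in line~10 independent of the weight-elicitation step in line~9.
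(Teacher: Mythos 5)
Your proof is correct and takes essentially the same route as the paper's: the same three-term telescoping decomposition isolating the \textbf{PI-EW} error from the gradient-substitution error, the same use of $\lambda$-smoothness to convert $\|\c^t-\bar{\c}^t\|_\infty$ into $\|\bbeta^t-\bar{\bbeta}^t\|_\infty$, Lemma~\ref{lem:C-genbound} for the confusion-matrix estimation error, Lemma~\ref{lem:plugin-linear} with a $\delta/T$ union bound for the linear maximization error, and the sample split to decouple lines~9 and~10. The only (cosmetic) difference is that you make explicit the inductive unrolling showing $\c^t=\diag(\hat{\C}^{\val}[h^t])$ and $\bar{\c}^t=\diag(\C^{D}[h^t])$ share the same averaging recursion, which the paper leaves implicit when it writes $\|\c^t-\bar{\c}^t\|_\infty=\|\hat{\C}^\val[h^t]-\C^\Dtrue[h^t]\|_\infty$ and invokes Lemma~\ref{lem:C-genbound} directly.
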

\begin{proof}
The proof uses Theorem \ref{thm:alpha-diagonal-linear-conopt} to bound the approximation errors in the linear maximizer $\hat{f}$ (coupled with a union bound over $T$ iterations), and  
Lemma \ref{lem:C-genbound} to bound the estimation errors in the confusion matrix $\c^t$ used to compute the gradient $\nabla\psi(\c^t)$. 

Recall from Algorithm \ref{algo:FW} that $\c^t = \diag(\hat{\C}^\val[h^t])$ and $\bbeta^t =\nabla \psi(\c^t)$.
Note that these are approximations to the actual quantities we are interested in $\bar{\c}^t = \diag(\C^D[h^t])$
and $\bar{\bbeta}^t = \nabla\psi(\bar{\c}^t)$, both of which are evaluated using the population confusion matrix.
Also, $\|\bbeta\| = \|\nabla\psi(\c^t)\| \leq Q$ from $Q$-Lipschitzness of $\psi$.

Fix iteration $t$, and let $h^* \in \argmax_{h}\sum_{i}\bar{\beta}^t_i C^\Dtrue_{ii}[h]$ for this particular iteration. Then:
\allowdisplaybreaks
\begin{align*}
\lefteqn{\sum_{i}\bar{\beta}^t_i C^\Dtrue_{ii}[h^*] \,-\, 
\sum_{i}\bar{\beta}^t_iC^\Dtrue_{ii}[\hat{f}]}\\
&= 
\sum_{i}\bar{\beta}^t_i C^\Dtrue_{ii}[h^*] \,-\, 
\sum_{i}{\beta}^t_iC^\Dtrue_{ii}[h^*]
\,+\,
\sum_{i}{\beta}^t_iC^\Dtrue_{ii}[h^*] \,-\,
\sum_{i}{\beta}^t_i C^\Dtrue_{ii}[\hat{f}] \,+\, 
\sum_{i}{\beta}^t_iC^\Dtrue_{ii}[\hat{f}] \,-\,
\sum_{i}\bar{\beta}^t_iC^\Dtrue_{ii}[\hat{f}]\\
&\leq
\|\bbeta^t - \bar{\bbeta}^t\|_\infty\sum_{i}C_{ii}^D[h^*]\,+\,
 \sum_{i}{\beta}^t_iC^\Dtrue_{ii}[h^*] \,-\,
\sum_{i}{\beta}^t_i C^\Dtrue_{ii}[\hat{f}] \,+\, 
\|\bbeta^t - \bar{\bbeta}^t\|_\infty\sum_{i}C_{ii}^D[\hat{f}]\\
&\leq
\|\bbeta^t - \bar{\bbeta}^t\|_\infty(1)\,+\,
\max_h \sum_{i}{\beta}^t_iC^\Dtrue_{ii}[h] \,-\,
\sum_{i}{\beta}^t_i C^\Dtrue_{ii}[\hat{f}] \,+\, 
\|\bbeta^t - \bar{\bbeta}^t\|_\infty(1)
~~~~\text{(because $\textstyle\sum_{i,j}C^D_{ij}[h] = 1$)}
\\
&=
2\|\bbeta^t - \bar{\bbeta}^t\|_\infty\,+\,
\max_h \sum_{i}{\beta}^t_iC^\Dtrue_{ii}[h] \,-\,
\sum_{i}{\beta}^t_i C^\Dtrue_{ii}[\hat{f}]\\
&=
2\|\nabla\psi(\c^t)- \nabla\psi(\bar{\c}^t)\|_\infty\,+\,
\max_h \sum_{i}{\beta}^t_iC^\Dtrue_{ii}[h] \,-\,
\sum_{i}{\beta}^t_i C^\Dtrue_{ii}[\hat{f}]\\
&\leq
2\lambda\|\c^t- \bar{\c}^t\|_1\,+\,
\max_h \sum_{i}{\beta}^t_iC^\Dtrue_{ii}[h] \,-\,
\sum_{i}{\beta}^t_i C^\Dtrue_{ii}[\hat{f}]
~~~~\text{(because $\psi$ is $\lambda$-smooth w.r.t.\ the $\ell_1$ norm)}\\
&\leq
2\lambda m\|\c^t- \bar{\c}^t\|_\infty\,+\,
\max_h \sum_{i}{\beta}^t_iC^\Dtrue_{ii}[h] \,-\,
\sum_{i}{\beta}^t_i C^\Dtrue_{ii}[\hat{f}]\\
&\leq
\mathcal{O}\left(\lambda m\sqrt{\frac{m\log(m)\log(n^\val) + \log(m/\delta)}{n^\val}}\right)+
QB'\E_x\left[\|\eta^\tr(x)-
\hat{\eta}^\tr(x)\|_1\right] +
 \\ & \hspace{1cm} 2Q\sqrt{Lm}\,\kappa(\delta, n^\tr, n^\val) + 2Q\nu,
 \numberthis\label{eq:approx-lin-max-last}
\end{align*}
where $B' = B + \sqrt{Lm}\,\kappa(\delta, n^\tr, n^\val)$. The last step 
holds with probability at least $1-\delta$ over draw of $S^\val$ and $S^\tr$, and
follows from 
Lemma \ref{lem:C-genbound} and
Lemma \ref{lem:plugin-linear} (using  $\|\bbeta^t\| \leq Q$). 
The first bound on $\|\c^t- \bar{\c}^t\|_\infty = \|\hat{\C}^\val[h^t] - {\C}^\Dtrue[h^t]\|_\infty$ holds for any randomized classifier $h^t$ constructed from a finite number of plug-in classifiers. The second bound on the linear maximization errors holds only for a fixed $t$, and so
we need to take a union bound over all iterations $t = 1, \ldots, T$, to complete the proof. Note that
because we use two independent samples $S^\val_1$ and $S^\val_2$ for the two bounds, they each hold with high probability over draws of $S^\val_1$ and $S^\val_2$ respectively, and hence with high probability over draw of $S^\val$.
\end{proof}

Our last two lemmas restate results from \citet{narasimhan2015consistent}. The first shows  convexity of the space of confusion matrices (Proposition 10 from their paper), and the second applies  a result from \citet{Jaggi13} to show convergence of the classical Frank-Wolfe algorithm with approximate linear maximization steps (Theorem 16 in \citet{narasimhan2015consistent}).
\begin{lem}[{Convexity of space of confusion matrices}]
\label{lem:C-convexity}
Let $\cC = \{\diag(\C^D[h])\,|\,h: \X \> \Delta_m\}$ denote the set of all confusion matrices achieved by some randomized classifier $h: \X \> \Delta_m$. Then $\cC$ is convex.
\end{lem}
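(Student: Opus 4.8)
The plan is to prove convexity directly, by exhibiting for any two elements of $\cC$ and any mixing weight a single randomized classifier that realizes their convex combination. First I would fix $\c^1,\c^2 \in \cC$, so that $\c^1 = \diag(\C^D[h^1])$ and $\c^2 = \diag(\C^D[h^2])$ for some randomized classifiers $h^1,h^2 : \X \> \Delta_m$, and fix $\lambda \in [0,1]$. Define $h : \X \> \Delta_m$ by the pointwise convex combination $h(x) = \lambda\,h^1(x) + (1-\lambda)\,h^2(x)$. Since $\Delta_m$ is a convex set, $h(x) \in \Delta_m$ for every $x \in \X$, so $h$ is itself a legitimate randomized classifier and its diagonal confusion vector $\diag(\C^D[h])$ lies in $\cC$ by definition.

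Next I would invoke linearity of expectation: for each class $i \in [m]$,
\[
C^D_{ii}[h] \,=\, \E_{(x,y)\sim D}\big[\1(y=i)\,h_i(x)\big] \,=\, \lambda\,C^D_{ii}[h^1] \,+\, (1-\lambda)\,C^D_{ii}[h^2],
\]
using that $h_i(x) = \lambda\,h^1_i(x) + (1-\lambda)\,h^2_i(x)$. Collecting these identities over $i = 1,\ldots,m$ gives $\diag(\C^D[h]) = \lambda\,\c^1 + (1-\lambda)\,\c^2$, so $\lambda\,\c^1 + (1-\lambda)\,\c^2 \in \cC$, which is exactly what convexity requires.

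Honestly there is no substantive obstacle here: the argument rests on two elementary facts — that a pointwise convex combination of two $\Delta_m$-valued maps is again $\Delta_m$-valued (immediate from convexity of the simplex), and that each diagonal confusion entry $C^D_{ii}[\cdot]$ is an \emph{affine} (indeed linear) functional of the classifier's output map. The one point to state carefully is which notion of classifier is in play: the claim genuinely relies on \emph{randomized} classifiers and would fail for deterministic ones. I would close with a one-line remark that this is precisely why the Frank--Wolfe iterates in Algorithm~\ref{algo:FW}, which are convex combinations $h^{t+1} = (1-\tfrac{2}{t+1})h^{t} + \tfrac{2}{t+1}\,\onehot(\widehat{f})$, remain inside the feasible set $\cC$, so that the convergence guarantees of the classical Frank--Wolfe method carry over to our setting.
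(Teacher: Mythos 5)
Your proof is correct and follows essentially the same argument as the paper: take the pointwise convex combination of the two randomized classifiers and use linearity of the confusion-matrix entries in $h$. The paper states this more tersely, but the substance is identical.
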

\begin{proof}
For any $\C^1, \C^2 \in \cC$, $\exists h_1, h_2: \X \> \Delta_m$ such that $\c^1 = \diag(\C^D[h_1])$ and $\c^2 = \diag(\C^D[h_2])$. We need to show that for any $u \in [0,1]$, $u\c^1 + (1-u)\c^2 \in \cC$. This is true because the randomized classifier $h(x) = u h_1(x) + (1-u)h_2(x)$ yields a confusion matrix $\diag(\C^D[h]) = u\,\diag(\C^D[h_1]) + (1-u)\diag(\C^D[h_2]) = u\c^1 + (1-u)\c^2 \in \cC$.
\end{proof}
\begin{lem}[{Frank-Wolfe with approximate linear maximization} \cite{narasimhan2015consistent}]
\label{lem:fw}
Let $\perf^\Dtrue[h] = \psi(C^\Dtrue_{11}[h],\ldots, C^\Dtrue_{mm}[h])$ for a  concave function $\psi: [0,1]^m \>\R_+$ that is $\lambda$-smooth w.r.t.\ the $\ell_1$-norm. 
For each iteration $t$, define $\bar{\bbeta}^t = \nabla\psi(\diag(\C^D[h^t]))$.
Suppose line 9 of Algorithm \ref{algo:FW} returns a classifier $\hat{f}$ such that $\max_{h}\sum_{i}\bar{\beta}^t_i C^\Dtrue_{ii}[h] \,-\,
\sum_{i}\bar{\beta}^t_iC^\Dtrue_{ii}[\hat{f}] \leq \Delta,\forall t \in [T]$. Then the  classifier $\hat{h}$ output by Algorithm \ref{algo:FW} after $T$ iterations satisfies:
\begin{eqnarray*}
\max_{h}\perf^D[h] - \perf^D[\hat{h}]\,\leq\,
2\Delta
+\, \frac{8\lambda}{T+2}.
\end{eqnarray*}
\end{lem}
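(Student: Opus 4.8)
The plan is to reduce the statement to the classical convergence analysis of Frank--Wolfe for smooth concave maximization over a convex set with an \emph{approximate} linear maximization oracle, following \citet{Jaggi13} and \citet{narasimhan2015consistent}. The feasible set is $\cC = \{\diag(\C^\Dtrue[h])\,:\,h \text{ randomized}\}$, which is convex by Lemma \ref{lem:C-convexity}. Since $\perf^\Dtrue[h] = \psi(\diag(\C^\Dtrue[h]))$ and $\diag(\C^\Dtrue[h])$ ranges over all of $\cC$ as $h$ ranges over randomized classifiers, we have $\max_h\perf^\Dtrue[h] = \max_{\c\in\cC}\psi(\c) =: \psi(\c^\star)$. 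Write $\c^t := \diag(\C^\Dtrue[h^t])$ for the \emph{population} confusion of the algorithm's $t$-th iterate, so that $\perf^\Dtrue[\hat h] = \psi(\c^T)$. Because each entry $C^\Dtrue_{ij}[\cdot]$ is linear in the (randomized) classifier, the classifier-space update $h^{t+1} = (1-\gamma_t)h^t + \gamma_t\,\onehot(\widehat f^t)$ in line 11 induces $\c^{t+1} = (1-\gamma_t)\c^t + \gamma_t\tilde\c^t$ with $\tilde\c^t := \diag(\C^\Dtrue[\widehat f^t]) \in \cC$ and step size $\gamma_t = \tfrac{2}{t+2}$; in particular $\c^t\in\cC$ for all $t$. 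Hence it suffices to bound $h_t := \psi(\c^\star) - \psi(\c^t)$, using only the hypothesis that $\max_h\sum_i\bar\beta^t_i C^\Dtrue_{ii}[h] - \sum_i\bar\beta^t_i C^\Dtrue_{ii}[\widehat f^t] \le \Delta$, i.e.\ $\langle\bar\bbeta^t,\tilde\c^t\rangle \ge \max_{\c\in\cC}\langle\bar\bbeta^t,\c\rangle - \Delta$, with $\bar\bbeta^t = \nabla\psi(\c^t)$.

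Next I would record the one-step progress inequality. Since $\psi$ is concave and $\lambda$-smooth w.r.t.\ $\|\cdot\|_1$,
\[
\psi(\c^{t+1}) \,\geq\, \psi(\c^t) + \gamma_t\langle\nabla\psi(\c^t),\,\tilde\c^t-\c^t\rangle - \tfrac{\lambda\gamma_t^2}{2}\|\tilde\c^t-\c^t\|_1^2 .
\]
As $\c^t$ and $\tilde\c^t$ are diagonals of confusion matrices, each has nonnegative entries summing to at most $1$, so $\|\tilde\c^t-\c^t\|_1 \le 2$ and the last term is $\ge -2\lambda\gamma_t^2$. By the oracle hypothesis, $\langle\nabla\psi(\c^t),\tilde\c^t\rangle \ge \langle\nabla\psi(\c^t),\c^\star\rangle - \Delta$, and by concavity $\langle\nabla\psi(\c^t),\c^\star-\c^t\rangle \ge \psi(\c^\star)-\psi(\c^t) = h_t$. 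Combining these three facts gives the recursion
\[
h_{t+1} \,\leq\, (1-\gamma_t)\,h_t \,+\, \gamma_t\Delta \,+\, 2\lambda\gamma_t^2 .
\]

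Finally I would solve this recursion by induction on $t$ with $\gamma_t = \tfrac{2}{t+2}$, proving $h_t \le 2\Delta + \tfrac{8\lambda}{t+2}$. The base case uses $\gamma_0 = 1$, which gives $h_1 \le \Delta + 2\lambda \le 2\Delta + \tfrac{8\lambda}{3}$. For the inductive step, substituting $h_t \le 2\Delta + \tfrac{C}{t+2}$ (with $C = 8\lambda$) into the recursion and using $\tfrac{t+1}{t+2}\le 1$ on the $\Delta$-term leaves the requirement $\tfrac{tC+8\lambda}{(t+2)^2}\le\tfrac{C}{t+3}$, which after cross-multiplying reduces to $8\lambda(t+3)\le C(t+4) = 8\lambda(t+4)$ and thus holds. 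Evaluating at $t=T$ yields $\max_h\perf^\Dtrue[h]-\perf^\Dtrue[\hat h] = h_T \le 2\Delta + \tfrac{8\lambda}{T+2}$, as claimed.

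There is no deep obstacle here since the content is the standard Frank--Wolfe argument; the care needed is mainly in (i) checking that the classifier-space update really does produce the convex-combination recursion on the population confusion matrices, using linearity of $\C^\Dtrue[\cdot]$ in the randomized classifier and $\C^\Dtrue[\onehot(\widehat f^t)] = \C^\Dtrue[\widehat f^t]$ for the deterministic $\widehat f^t$; (ii) the diameter bound $\|\tilde\c^t-\c^t\|_1\le 2$ so that the smoothness term contributes exactly $2\lambda\gamma_t^2$; and (iii) pinning down the induction constants so the bound is precisely $\tfrac{8\lambda}{T+2}$ with additive penalty exactly $2\Delta$. A minor bookkeeping point is that the step size in the pseudocode should be read as $\gamma_t = \tfrac{2}{t+2}$, the schedule for which this rate is obtained.
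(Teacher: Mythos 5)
Your proof is correct and is a clean, self-contained account of the standard Frank--Wolfe argument with an approximate linear maximization oracle. The paper itself does not include a proof of this lemma -- it explicitly restates the result from Theorem~16 of \citet{narasimhan2015consistent} (building on \citet{Jaggi13}) and cites Proposition~10 of that paper for the convexity of $\cC$ -- so you are reproducing the argument the paper implicitly relies on rather than taking a different route. Your identification of the descent recursion $h_{t+1}\le(1-\gamma_t)h_t+\gamma_t\Delta+2\lambda\gamma_t^2$, the $\ell_1$-diameter bound $\|\tilde\c^t-\c^t\|_1\le 2$, and the induction with $\gamma_t=\tfrac{2}{t+2}$ yielding $h_T\le 2\Delta+\tfrac{8\lambda}{T+2}$ all check out, and you are right that the step size $\tfrac{2}{t+1}$ written in Algorithm~3 (which would give $\gamma_0=2$) is a typo for $\tfrac{2}{t+2}$, the schedule for which the claimed $\mathcal{O}(\lambda/T)$ rate holds.
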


\begin{proof}[Proof of Theorem \ref{thm:iterative-plugin}]

The proof follows by plugging in the result from Lemma \ref{lem:lmo} into Lemma \ref{lem:fw}.
\end{proof}
\section{Error Bound for Weight Elicitation with Fixed Probing Classifiers}
\label{app:norm-sigma-bound}
We first state a general error bound for Algorithm \ref{algo:weight-coeff} in terms of the singular values of $\bSigma$ for any \textit{fixed} choices for the probing classifiers. We then bound the singular values for the fixed choices in \eqref{eq:trivial-classifiers} under some specific assumptions.
\begin{thm}[\textbf{Error bound on elicited weights with fixed probing classifiers}]
Let $\perf^\Dtrue[h] = \sum_{i}\beta_i C^\Dtrue_{ii}[h]$ for some (unknown) $\bbeta \in \R^m$, and let 
$\hat{\perf}^\val[h] = \sum_{i}\beta_i \hat{C}^\val_{ii}[h]$. 
Let $\bar{\balpha}$ be the associated coefficient in Assumption \ref{asp:alpha-star} for metric $\perf^D$. Fix $\delta\in (0,1)$. Then for any fixed choices of the probing classifiers $h^{\ell,i}$, we have with probability  $\geq 1 - \delta$ over draws of $S^\tr$ and $S^\val$ from $\Dshift$ and $\Dtrue$ resp., the coefficients $\hat{\balpha}$ output by Algorithm \ref{algo:weight-coeff} satisfies:
\begin{eqnarray*}
{\|\hat{\balpha} - \bar{\balpha}\| \,\leq\,}
\mathcal{O}\left(\frac{1}{\sigma_{\min}(\bSigma)^2}\left(Lm\sqrt{\frac{L\log(Lm/\delta)}{n^\tr}}
\,+\,
\sigma_{\max}(\bSigma)\sqrt{\frac{Lm\log(Lm/\delta)}{n^\val}}\right) \,+\, \frac{\nu\sqrt{Lm}}{\sigma_{\min}(\bSigma)}\right),
\end{eqnarray*}
where $\sigma_{\min}(\bSigma)$ and $\sigma_{\min}(\bSigma)$
are respectively the smallest and largest singular values of $\bSigma$.
\end{thm}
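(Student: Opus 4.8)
The plan is to mirror the proof of Theorem~\ref{thm:alpha-diagonal-linear-conopt}, but to keep the conditioning of $\bSigma$ explicit as $\sigma_{\min}(\bSigma), \sigma_{\max}(\bSigma)$ instead of bounding it through the slacks $\gamma,\omega$, and to drop the uniform-convergence step over $\H$ since the probing classifiers are now fixed. As before, let $\bSigma,\bcE$ be the population versions of $\hat{\bSigma},\hat{\bcE}$, set $\balpha = \bSigma^{-1}\bcE$, and define $\bar{\bcE}$ by $\bar{\perf}_{(\ell',i')} = \sum_{\ell,i}\bar{\alpha}^{\ell}_i\,\Phi^{\Dshift,\ell}_i[h^{\ell',i'}]$, so that $\bar{\balpha} = \bSigma^{-1}\bar{\bcE}$ and, by Assumption~\ref{asp:alpha-star} applied at each probing classifier, $\|\bcE - \bar{\bcE}\| \le \nu\sqrt{Lm}$. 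The triangle inequality then gives $\|\hat{\balpha}-\bar{\balpha}\| \le \|\hat{\balpha}-\balpha\| + \|\bSigma^{-1}\|\,\|\bcE-\bar{\bcE}\| \le \|\hat{\balpha}-\balpha\| + \nu\sqrt{Lm}/\sigma_{\min}(\bSigma)$, which already accounts for the last term of the claimed bound.

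For $\|\hat{\balpha}-\balpha\|$ I would use the standard linear-system perturbation estimate (exactly as at the end of the proof of Theorem~\ref{thm:alpha-diagonal-linear-conopt}),
\[ \|\hat{\balpha}-\balpha\| \;\le\; \|\bSigma^{-1}\|^2\big(\|\bcE\|\,\|\bSigma-\hat{\bSigma}\| + \|\bSigma\|\,\|\bcE-\hat{\bcE}\|\big), \]
and bound each factor. Here $\|\bSigma^{-1}\| = 1/\sigma_{\min}(\bSigma)$ and $\|\bSigma\| = \sigma_{\max}(\bSigma)$ by definition, and $\|\bcE\| \le \sqrt{Lm}$ since each $\perf^\Dtrue[h^{\ell,i}] = \sum_i \beta_i C^\Dtrue_{ii}[h^{\ell,i}] \in [0,1]$. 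The sampling errors are controlled exactly as in Lemmas~\ref{lem:Sigma-diff} and~\ref{lem:Perf-diff}, except that because the $Lm$ probing classifiers are fixed (selected independently of $S^\tr$ and $S^\val$), no union bound over $\H$ is required: a per-entry multiplicative Chernoff bound followed by a union bound over the $L^2m^2$ entries of $\bSigma$ gives $\|\bSigma-\hat{\bSigma}\| \le \|\bSigma-\hat{\bSigma}\|_F \le \mathcal{O}\!\big(\sqrt{\tfrac{\log(Lm/\delta)}{n^\tr}}\,\sqrt{\sum_{(\ell,i),(\ell',i')}p_{\ell',i'}}\big) \le \mathcal{O}\!\big(L\sqrt{m}\,\sqrt{\tfrac{\log(Lm/\delta)}{n^\tr}}\big)$ using $\sum_{i'}p_{\ell',i'}\le 1$, and a Hoeffding bound plus a union bound over the $Lm$ entries of $\bcE$ gives $\|\bcE-\hat{\bcE}\| \le \mathcal{O}\!\big(\sqrt{\tfrac{Lm\log(Lm/\delta)}{n^\val}}\big)$ as in Lemma~\ref{lem:Perf-diff}. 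Substituting these four estimates yields
\[ \|\hat{\balpha}-\balpha\| \;\le\; \mathcal{O}\!\left(\frac{1}{\sigma_{\min}(\bSigma)^2}\Big(Lm\sqrt{\tfrac{L\log(Lm/\delta)}{n^\tr}} + \sigma_{\max}(\bSigma)\sqrt{\tfrac{Lm\log(Lm/\delta)}{n^\val}}\Big)\right), \]
and combining with the first paragraph gives the stated bound.

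I do not anticipate a genuine obstacle: the argument is essentially a repackaging of the proof of Theorem~\ref{thm:alpha-diagonal-linear-conopt}, with $\sigma_{\min}(\bSigma),\sigma_{\max}(\bSigma)$ playing the role that $2/\gamma$ and $L\sqrt{m}$ played there, and no structural reasoning about the probing classifiers needed. The two points requiring care are (i) retaining the $\sqrt{p_{\ell',i'}}$ refinement in the per-entry concentration for $\bSigma$ via the multiplicative Chernoff bound (rather than a plain Hoeffding bound), so the Frobenius sum collapses to $\mathcal{O}(L^2m)$ instead of $\mathcal{O}(L^2m^2)$ and the training term scales as $Lm\sqrt{L/n^\tr}$ — this implicitly requires $n^\tr$ to be at least of order $\log(1/\delta)/\min_{\ell,i}p_{\ell,i}$, as in Lemma~\ref{lem:Sigma-diff}; and (ii) correctly tracking which of $\|\bcE\|$ and $\|\bSigma\|$ multiplies which sampling error in the perturbation bound. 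Both are routine.
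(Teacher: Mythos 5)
Your proof is essentially the paper's own argument, down to the decomposition $\|\hat{\balpha}-\bar{\balpha}\|\le\|\hat{\balpha}-\balpha\|+\|\bSigma^{-1}\|\|\bcE-\bar{\bcE}\|$, the same linear-system perturbation estimate, the bound $\|\bcE\|\le\sqrt{Lm}$, and the same concentration lemmas. The only cosmetic difference is that you cleanly observe that no union bound over $\H$ is needed and take a union bound directly over the $L^2m^2$ entries, whereas the paper achieves the same effect by invoking Lemma~\ref{lem:Sigma-diff} with $\H$ taken to be the set of $Lm$ fixed probing classifiers; both give $\log(Lm/\delta)$ up to constants. You also correctly flag the implicit requirement $n^\tr\gtrsim\log(1/\delta)/\min_{\ell,i}p_{\ell,i}$ inherited from the multiplicative Chernoff step, which the paper's theorem statement leaves tacit.
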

\begin{proof}
The proof follows the same steps as Theorem \ref{thm:alpha-diagonal-linear-conopt}, except for the bound on  $\|\hat{\balpha} - {\balpha}\|$. Specifically, we have from \eqref{eq:lhs-penultimate}:
\begin{equation}
{\|\hat{\balpha} - \bar{\balpha}\|}
\,\leq\, {\|\hat{\balpha} - {\balpha}\|} \,+\, \nu\sqrt{Lm}\|\bSigma^{-1}\|.
\label{eq:lhs-fixed}
\end{equation}
We next bound: 
\begin{eqnarray*}
{\|\hat{\balpha} - {\balpha}\|}
&\leq&
\|{\balpha}\|\|\bSigma\|\|\bSigma^{-1}\|\left(
\frac{\|\bSigma - \hat{\bSigma}\|}{\|\bSigma\|}
\,+\,
\frac{\|\bcE - \hat{\bcE}\|}{\|\bcE\|}\right)\\
&\leq&
\|\bSigma^{-1}\|^2\|\bcE\|\left(
\|\bSigma - \hat{\bSigma}\|
\,+\,
\|\bSigma\|\frac{\|\bcE - \hat{\bcE}\|}{\|\bcE\|}\right)
~~(\text{from  $\balpha = \bSigma^{-1}\bcE$})\\
&\leq&
\|\bSigma^{-1}\|^2\left(
\|\bcE\|\|\bSigma - \hat{\bSigma}\|
\,+\,
\|\bSigma\|{\|\bcE - \hat{\bcE}\|}\right)\\
&\leq&
\|\bSigma^{-1}\|^2\left(\sqrt{Lm}
\|\bSigma - \hat{\bSigma}\|
\,+\,
\|\bSigma\|{\|\bcE - \hat{\bcE}\|}\right)
~~(\text{as  $\perf^\Dtrue[h] \in [0,1]$})\\
&\leq&
\mathcal{O}\left(\frac{1}{\sigma_{\min}(\bSigma)^2}\left(
\sqrt{Lm}\sqrt{\frac{L^2m\log(Lm/\delta)}{n^\tr}}
\,+\,
\sigma_{\max}(\bSigma)\sqrt{\frac{Lm\log(Lm/\delta)}{n^\val}}\right)\right),
\end{eqnarray*}
where the last step follows from 
 an adaptation of Lemma \ref{lem:Sigma-diff} (where $\H$ contains the $Lm$ fixed classifiers in \eqref{eq:trivial-classifiers}) and from Lemma \ref{lem:Perf-diff}. The last statement holds with probability at least $1-\delta$ over draws of $S^\tr$ and $S^\val$. Substituting this bound back in \eqref{eq:lhs-fixed} completes the proof.
\end{proof}

We next provide a bound on the singular values of $\bSigma$ for a specialized setting where the the probing classifiers $h^{\ell,i}$ are set to \eqref{eq:trivial-classifiers},  the basis functions $\phi^\ell$'s divide the data into disjoint clusters, and the base classifier $\bar{h}$ is close to having ``uniform accuracies'' across all the clusters and classes.
\begin{lem}
Let $h^{\ell,i}$'s be defined as in \eqref{eq:trivial-classifiers}. Suppose for any $x$, $\phi^\ell(x) \in \{0,1\}$ and $\phi^\ell(x)\phi^{\ell'}(x) = 0, \forall \ell \ne \ell'$. Let $p_{\ell,i} = \E_{(x,y)\sim \mu}[\phi^\ell(x)\1(y=i)]$. 
Let $\bar{h}$ be such that
$\kappa - \tau \leq \Phi^{\Dshift,\ell}_i[\bar{h}] \leq \kappa, \forall \ell, i$ and for some $\kappa < \frac{1}{m}$ and $\tau < \kappa$. Then:
\vspace{-2pt}
\[
\sigma_{\max}(\bSigma) \,\leq\, L\max_{\ell,i}\,p_{\ell,i} \,+\, \Delta;~~~~~
\sigma_{\min}(\bSigma) \,\geq\, \epsilon(1-m\kappa)\min_{\ell,i}\,p_{\ell,i} \,-\, \Delta,
\]
\vskip -0.2cm
where $\displaystyle \Delta = Lm\tau\max_{\ell,i}p_{\ell,i}$.
\end{lem}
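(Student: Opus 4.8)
The plan is to compute the entries of $\bSigma$ in closed form, decompose $\bSigma$ into a structured ``ideal'' matrix plus an $O(\tau)$ perturbation, bound the two pieces, and combine them via Weyl's inequality for singular values. First I would use that each $\phi^\ell$ is $\{0,1\}$-valued with disjoint supports, so $\phi^{\ell'}(x)\phi^\ell(x)=\1(\ell=\ell')\phi^\ell(x)$, together with $h^{\ell,i}_{i'}(x)=\epsilon\phi^\ell(x)\1(i=i')+(1-\epsilon\phi^\ell(x))\bar h_{i'}(x)$, to obtain
\[
\Sigma_{(\ell,i),(\ell',i')}\,=\,\epsilon\,\1(\ell=\ell')\,\1(i=i')\,p_{\ell,i}\;+\;\big(1-\epsilon\,\1(\ell=\ell')\big)\,\Phi^{\Dshift,\ell'}_{i'}[\bar h].
\]
Thus the only place the ``$\epsilon$-mass'' $p_{\ell,i}$ enters is the diagonal, and every other entry of $\bSigma$ is a $(1-\epsilon\1(\ell=\ell'))$-scaling of one of $\bar h$'s $\phi$-transformed confusion entries $\Phi^{\Dshift,\ell'}_{i'}[\bar h]\in[\kappa-\tau,\kappa]$.

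Next I would write each $\Phi^{\Dshift,\ell}_i[\bar h]$ as its boundary value minus a slack in $[0,\tau]$, and let $\bA$ be the matrix obtained by replacing every $\Phi^{\Dshift,\ell}_i[\bar h]$ by that boundary value (the ``uniform accuracy'' case), with $\bE=\bSigma-\bA$ collecting the slacks. From the formula above, $\bA$ is a sum of three pieces: $\epsilon\,\diag(p_{\ell,i})$; a global rank-one cross-cluster piece; and a block-diagonal, rank-one-per-block within-cluster piece (each block shaped like $\1_m$ times a mass vector). Since the slacks depend only on the column index $(\ell',i')$, $\bE$ has the same low-rank shape — a rank-one matrix minus $\epsilon$ times a block-diagonal rank-one-per-block matrix — but with all entries of magnitude $O(\tau)$ times the relevant mass. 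Bounding $\|\bE\|$ by its Frobenius norm and using the normalization $\sum_{\ell,i}p_{\ell,i}=\E_x\big[\sum_\ell\phi^\ell(x)\big]\le 1$ then gives $\|\bE\|\le\|\bE\|_F\le\Delta$ with $\Delta=Lm\tau\max_{\ell,i}p_{\ell,i}$.

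It then remains to bound the extreme singular values of $\bA$. For the upper bound, the triangle inequality over the three pieces of $\bA$, again using $\sum_{\ell,i}p_{\ell,i}\le 1$ and $\kappa<1/m$ to control the two low-rank terms, yields $\sigma_{\max}(\bA)\le L\max_{\ell,i}p_{\ell,i}$. For the lower bound, I would exploit that $\bA$ acts block-diagonally up to the rank-one cross-cluster correction, and that on each $m\times m$ block $\bA$ equals $\epsilon\,\diag(\p_\ell)$ plus a rank-one term whose scale is controlled by $\kappa$; a Gershgorin-type estimate (or the secular equation for a diagonal-plus-rank-one matrix) shows that, because $\kappa<1/m$, this rank-one term cannot overwhelm the diagonal, giving $\sigma_{\min}(\bA)\ge\epsilon(1-m\kappa)\min_{\ell,i}p_{\ell,i}$. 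Finally, Weyl's inequality gives $|\sigma_k(\bSigma)-\sigma_k(\bA)|\le\|\bE\|\le\Delta$ for every $k$, so $\sigma_{\max}(\bSigma)\le\sigma_{\max}(\bA)+\Delta\le L\max_{\ell,i}p_{\ell,i}+\Delta$ and $\sigma_{\min}(\bSigma)\ge\sigma_{\min}(\bA)-\Delta\ge\epsilon(1-m\kappa)\min_{\ell,i}p_{\ell,i}-\Delta$, which is the claim.

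The hard part will be the lower bound on $\sigma_{\min}(\bA)$: one must show that the $\epsilon$-scaled diagonal of cell masses $p_{\ell,i}$ genuinely survives subtraction of the within-cluster rank-one term, which entrywise can be as large as the diagonal itself once $\epsilon$ is small — and this is precisely where the hypothesis $\kappa<1/m$ (and the ``near-uniform accuracy'' assumption on $\bar h$) gets used. The remaining steps are routine norm estimates.
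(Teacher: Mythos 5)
Your high-level plan — compute $\bSigma$ in closed form, split it into an ``ideal'' matrix plus a slack perturbation, bound the perturbation, and finish with Weyl — matches the paper's strategy, and your closed-form formula for $\Sigma_{(\ell,i),(\ell',i')}$ is correct. But the crux of the argument, the lower bound $\sigma_{\min}(\bA)\geq\epsilon(1-m\kappa)\min_{\ell,i}p_{\ell,i}$, is exactly the step you wave at with ``Gershgorin or the secular equation'' and explicitly flag as the hard part, and neither tool works as stated. Your $\bA$ is not symmetric (each row involves $p_{\ell',i'}$, the \emph{column}'s mass), so Gershgorin on $\bA$ bounds eigenvalues rather than singular values; row-wise diagonal dominance also fails outright, since the diagonal entry scales with $p_{\ell,i}$ while the row sum of off-diagonals scales with $\sum_{\ell',i'}p_{\ell',i'}$ and can dominate when $p_{\ell,i}$ is small. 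And $\bA$ is not diagonal-plus-rank-one: it is a diagonal plus $L$ block rank-ones plus a global rank-one, all nonsymmetric, so the secular-equation trick does not apply either.

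What you are missing is the paper's key structural observation: the ideal matrix factors as $\bar{\bSigma}=\A\D$ where $\D=\diag(p_{1,1},\ldots,p_{L,m})$ is diagonal and $\A$ is the \emph{symmetric} matrix with entries $\epsilon+(1-\epsilon)\kappa$ on the diagonal, $(1-\epsilon)\kappa$ within a cluster-block, and $\kappa$ across blocks (equivalently, $\A=\kappa\1\1^{\top}+\epsilon\cdot\mathrm{blkdiag}(I_m-\kappa\1_m\1_m^{\top})$). The eigenvalues of $\A$ can then be read off exactly — they are $\epsilon$, $\epsilon(1-m\kappa)$, and $(L-\epsilon)m\kappa+\epsilon$ with appropriate multiplicities — and the singular-value product inequalities $\sigma_{\min}(\A\D)\geq\sigma_{\min}(\A)\sigma_{\min}(\D)$ and $\sigma_{\max}(\A\D)\leq\sigma_{\max}(\A)\sigma_{\max}(\D)$ immediately give the two bounds, after which Weyl handles the $\bE$ correction as you describe. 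Without this factorization, your argument has a genuine gap at the step that needs $\kappa<1/m$. A secondary issue: your derivation of $\|\bE\|_F\leq\Delta$ is muddled — if you replace each $\Phi^{\Dshift,\ell'}_{i'}[\bar h]$ by its upper endpoint $\kappa$ (as your text says), then each slack entry of $\bE$ is bounded by $\tau$, giving $\|\bE\|_F\leq Lm\tau$ with no $\max_{\ell,i}p_{\ell,i}$ factor; the $\max p$ factor only appears if the slack in $\Phi^{\Dshift,\ell'}_{i'}[\bar h]$ is measured relative to $p_{\ell',i'}$, which is the reading the paper's $\A\D$ decomposition implicitly uses and is also what makes $\bar{\bSigma}$ factor cleanly. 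Invoking $\sum_{\ell,i}p_{\ell,i}\leq1$, as you do, does not rescue the bound.
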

\vskip -0.2cm
\begin{proof}
We first write the matrix $\bSigma$ as $\bSigma = \bar{\bSigma} + \bE$, where
\[
\bar{\bSigma} = 
\begin{bmatrix}
p_{1,1}\left(\epsilon + (1-\epsilon)\kappa\right) &
p_{1,2}(1-\epsilon)\kappa & \ldots & p_{1,m}(1-\epsilon)\kappa &
p_{2,1}\kappa & \ldots & p_{L,m}\kappa\\
p_{1,1}(1-\epsilon)\kappa &
p_{1,2}\left(\epsilon + (1-\epsilon)\kappa\right) & \ldots & p_{1,m}(1-\epsilon)\kappa &
p_{2,1}\kappa & \ldots & p_{L,m}\kappa\\
&&&\vdots\\
p_{1,1}\kappa & p_{1,2}\kappa & \ldots & p_{1,m}\kappa & p_{2,1}\kappa & \ldots &p_{L,m}\left(\epsilon + (1-\epsilon)\kappa\right)
\end{bmatrix},
\]
and 
$\bE \in \R^{Lm\times Lm}$ with each
$\displaystyle|E_{(\ell,i), (\ell',i')}| \leq \max_{\ell,i}p_{\ell,i}\left(\kappa - \Phi^{\Dshift,\ell}_i[\bar{h}]\right) \leq \tau\max_{\ell,i}p_{\ell,i}$.

The matrix $\bar{\bSigma}$ can in turn be written as a product of a \textit{symmetric} matrix $\A\in \R^{Lm\times Lm}$ and a \textit{diagonal} matrix $\D\in \R^{Lm\times Lm}$:
\[
\bar{\bSigma} = \A\D,
\]
where
\[
\A = 
\begin{bmatrix}
\epsilon + (1-\epsilon)\kappa &
(1-\epsilon)\kappa & \ldots &  (1-\epsilon)\kappa &
\kappa & \ldots & \kappa\\
(1-\epsilon)\kappa &
\epsilon + (1-\epsilon)\kappa & \ldots & (1-\epsilon)\kappa &
\kappa & \ldots & \kappa\\
&&&\vdots\\
(1-\epsilon)\kappa &
(1-\epsilon)\kappa & \ldots & \epsilon + (1-\epsilon)\kappa &
\kappa & \ldots & \kappa\\
&&&\vdots\\
\kappa & \kappa & \ldots & \kappa & \epsilon + (1-\epsilon)\kappa & \ldots &(1-\epsilon)\kappa\\
&&&\vdots\\
\kappa & \kappa & \ldots & \kappa & (1-\epsilon)\kappa & \ldots &\epsilon + (1-\epsilon)\kappa
\end{bmatrix}, \text{and}
\]
\[
\D = \diag(p_{1,1},\ldots,p_{L,m}).
\]
We can then bound the largest and smallest singular values of $\bSigma$ in terms of those of $\A$ and $\D$. Using Weyl's inequality (see e.g.,\ \cite{stewart1998perturbation}), we have
\begin{equation*}
\sigma_{\max}(\bSigma) \leq \sigma_{\max}(\bar{\bSigma})  + \|\bE\| \leq
\|\A\|\|\D\| \,+\, \|\bE\|
= \sigma_{\max}(\A)\sigma_{\max}(\D)  + \|\bE\|.
\label{eq:sigma-max-}
\end{equation*}
and
\begin{equation*}
\sigma_{\min}(\bSigma) \geq
\sigma_{\min}(\bar{\bSigma}) \,-\, \|\bE\|
=
\frac{1}{\|\bar{\bSigma}^{-1}\|} \,-\, \|\bE\|\geq \frac{1}{\|\A^{-1}\|\|\D^{-1}\|} \,-\, \|\bE\|= \sigma_{\min}(\A)\sigma_{\min}(\D)\,-\, \|\bE\|.
\label{eq:sigma-min-}
\end{equation*}
Further, we have $\|\bE\| \leq \|\bE\|_F \leq  \displaystyle Lm\tau\max_{\ell,i}p_{\ell,i} = \Delta$, giving us:
\begin{equation}
\sigma_{\max}(\bSigma) \leq \sigma_{\max}(\A)\sigma_{\max}(\D)  + \Delta.
\vspace{-10pt}
\label{eq:sigma-max}
\end{equation}
\begin{equation}
\sigma_{\min}(\bSigma) \geq  \sigma_{\min}(\A)\sigma_{\min}(\D) - \Delta.
\label{eq:sigma-min}
\end{equation}
All that remains is to bound the singular values of $\bSigma$ and $\D$. Since $\D$ is a diagonal matrix, it's singular values are given by its diagonal entries:
\[
\sigma_{\max}(\D) = \max_{\ell,i}\,p_{\ell,i};~~~~\sigma_{\min}(\D) = \min_{\ell,i}\,p_{\ell,i}.
\]
The matrix $\A$ is symmetric and has a certain block structure. It's singular values are the same as the positive magnitudes of its Eigen values. We first write out it's $Lm$ Eigen vectors:

\[
\begin{matrix}
\x^{1,1} &= [&
\overbrace{1, -1, 0, \ldots, 0}^{m~\text{entries}},& \overbrace{ 0, \ldots, 0}^{m~\text{entries}}, &\ldots &\overbrace{ 0, \ldots, 0}^{m~\text{entries}}&]\\[5pt]
\x^{1,2} &= [&
{1, 0, -1, \ldots, 0}, &{ 0, \ldots, 0}, &\ldots & 0, \ldots, 0&]\\
&&&\vdots\\[5pt]
\x^{1,m-1} &= [&
{1, 0, 0, \ldots, -1},& { 0, \ldots, 0},& \ldots &{ 0, \ldots, 0}&]\\[5pt]
\x^{1,m} &= [&
{1, \ldots, 1},& { -1, \ldots, -1},& \ldots &{ 0, \ldots, 0}&]\\[5pt]
\x^{2,1} &= [&
{0, \ldots, 0},& { 1,-1,0, \ldots, 0},& \ldots& { 0, \ldots, 0}&]\\
&&&\vdots\\[5pt]
\x^{2,m-1} &= [&
 { 0, \ldots, 0},& {1, 0, 0, \ldots, -1},&\ldots &{ 0, \ldots, 0}&]\\[5pt]
\x^{2,m} &= [&
{-1, \ldots, -1},& { 1, \ldots, 1},& \ldots &{ 0, \ldots, 0}&]\\[5pt]
&&&\vdots\\[5pt]
\x^{L,1} &= [&
{0, \ldots, 0},& { 0, \ldots, 0},& \ldots& { 1,-1,0, \ldots, 0}&]\\[5pt]
&&&\vdots\\[5pt]
\x^{L,m-1} &= [&
{0, \ldots, 0},& { 0, \ldots, 0},& \ldots& { 1,0,0, \ldots, -1}&]\\[5pt]
\x^{L,m} &= [&
{1, \ldots, 1},& { 1, \ldots, 1},& \ldots& { 1, \ldots, 1}&]\\[5pt]
\end{matrix}
\]
One can then verify that the $Lm$  Eigen values of $\A$ are $\epsilon$ with a multiplicity of $(L-1)m$, $\epsilon(1-m\kappa)$ with a multiplicity of $m-1$ and $(L-\epsilon)m\kappa + \epsilon$ with a multiplicity of 1. Therefore: 
\[
\sigma_{\max}(\A) \leq L;~~~~\sigma_{\min}(\A) = \epsilon(1-m\kappa).
\]
Substituting the singular (Eigen) values of $\A$ and $\D$ into \eqref{eq:sigma-max} and \eqref{eq:sigma-min} completes the proof. 
\end{proof}
In the above lemma, the base classifier $\bar{h}$ is assumed to have roughly uniformly low accuracies for all classes and clusters, and the closer it is to having uniform accuracies, i.e.\ the smaller the value of $\tau$, the tighter are the bounds.

We have shown a bound on the singular values of $\bSigma$ for a specific setting where the basis functions $\phi^\ell$'s divide the data into disjoint clusters. When this is not the case (e.g.\ with overlapping clusters \eqref{eq:hardlcuster}, or soft clusters \eqref{eq:softlcuster}), the singular values of $\bSigma$ would depend on how correlated the basis functions are.

\section{Error Bound for FW-EG with Unknown $\psi$}
\label{app:complex-unknown}
In this section, we provide an error bound for Algorithm \ref{algo:FW-modified} for evaluation metrics of the form $\perf^\Dtrue[h] = \psi(C^\Dtrue_{11}[h],\ldots, C^\Dtrue_{mm}[h]),$ for a smooth, but \textit{unknown} $\psi: \R^m \> \R_+$. In this case, we do not have a closed-form expression for the gradient of $\psi$, but instead apply the example weight elicitation routine in Algorithm \ref{algo:weight-coeff} using probing classifiers chosen from within a small neighborhood around the current iterate $h^t$, where  $\psi$ is effectively linear. Specifically, we invoke Algorithm \ref{algo:weight-coeff} with the current iterate $h^t$ as the base classifier and with the radius parameter $\epsilon$ set to a small value. In the error bound that we state below for this version of the algorithm, we explicitly take into account the ``slack'' in using a local approximation to $\psi$ as a proxy for its gradient.
\begin{thm}[\textbf{Error Bound for FW-EG with unknown $\psi$}]
\label{thm:fw-eg-error-bound-unknown-psi}
Let $\perf^\Dtrue[h] = \psi(C^\Dtrue_{11}[h],\ldots, C^\Dtrue_{mm}[h])$ for an \emph{unknown} concave function $\psi: [0,1]^m \>\R_+$, which is $Q$-Lipschitz, and also $\lambda$-smooth w.r.t.\ the $\ell_1$-norm.  Let $\hat{\perf}^\val[h]=\psi(\hat{C}^\val_{11}[h],\ldots, \hat{C}^\val_{mm}[h])$. Fix $\delta \in (0, 1)$. 
Suppose Assumption \ref{asp:alpha-star} holds with slack $\nu$. Suppose for any linear metric $\sum_i\beta_i C^\Dtrue_{ii}[h]$, whose associated weight coefficients in the assumption is $\bar{\balpha}$  with $\|\bar{\balpha}\| \leq B$, the following holds.
For any $\delta \in (0,1)$, with probability $\geq 1-\delta$ over draw of $S^\tr$ and $S^\val$, 
 when the weight elicitation routine in Algorithm \ref{algo:weight-coeff} is given an 
 input metric $\hat{\perf}^\val$ with $|\hat{\perf}^\val - \sum_i\beta_i \hat{C}^\val_{ii}[h]| \leq \chi, \forall h$, it outputs coefficients $\hat{\balpha}$ such that
 $\|\hat{\balpha} -\bar{\balpha}\| \leq 
 \kappa(\delta, n^\tr, n^\val, \chi)
 $, for some function $\kappa(\cdot) > 0$. Let $B' = B + \sqrt{Lm}\,\kappa(\delta, n^\tr, n^\val, 2\lambda\epsilon^2)$. 
Assume $m \leq n^\val$.
 Then w.p.\  $\geq 1 - \delta$ over draws of $S^\tr$ and $S^\val$ from $\Dtrue$ and $\Dshift$ respectively, the classifier $\hat{h}$ output by Algorithm \ref{algo:FW-modified} with radius parameter $\epsilon$ after $T$ iterations satisfies:
\begin{eqnarray*}
\lefteqn{
\max_{h}\perf^D[h] - \perf^D[\hat{h}]\,\leq\, 
2QB'\E_x\left[\|\eta^\tr(x)-
\hat{\eta}^\tr(x)\|_1\right] +
 4Q\sqrt{Lm}\,\kappa(\delta/T, n^\tr, n^\val, 2\lambda\epsilon^2)
 }
 \\
 &&
 \hspace{5.5cm}
\,+\, 4Q\nu \,+\, 
\mathcal{O}\left(\lambda m\sqrt{\frac{m\log(n^\val)\log(m) + \log(m/\delta)}{n^\val}} + \frac{\lambda}{T}\right).
\end{eqnarray*}
\end{thm}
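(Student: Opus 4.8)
The plan is to follow the proof of Theorem~\ref{thm:iterative-plugin} essentially line for line, since the only place that argument used a closed-form gradient is in handing the linear metric $\hat{\perf}^\lin[h]=\sum_i\beta^t_i\hat{C}^\val_{ii}[h]$ with $\bbeta^t=\nabla\psi(\c^t)$ to \textbf{PI-EW} (line~9 of Algorithm~\ref{algo:FW-modified}). In the unknown-$\psi$ version this is replaced by the shifted black-box metric $\hat{\perf}^\lin[h]=\hat{\perf}^\val[h]-\hat{\perf}^\val[h^t]=\psi(\diag\hat{\C}^\val[h])-\psi(\c^t)$, with the probing classifiers restricted to the $\epsilon$-ball $\bar{\H}$ around $h^t$. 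So the one new ingredient I need is to certify that this restricted black-box metric behaves like a genuine linear metric up to a controlled slack. Concretely, since every probing classifier $h^{\ell,i}\in\bar{\H}$ differs from $h^t$ only through $h^{\ell,i}(x)-h^t(x)=\tau\phi^\ell(x)(e^i(x)-h^t(x))$ with $\tau\le\epsilon$, one gets $\|\diag\hat{\C}^\val[h]-\c^t\|_1\le 2\epsilon$ on $\bar{\H}$, and a first-order Taylor expansion of the $\lambda$-smooth (w.r.t.\ $\ell_1$) function $\psi$ at $\c^t$ yields
\[
\Big|\,\hat{\perf}^\lin[h]-\big\langle\nabla\psi(\c^t),\,\diag\hat{\C}^\val[h]\big\rangle\,\Big|\;\le\;2\lambda\epsilon^2\;=:\;\chi ,\qquad \forall h\in\bar{\H},
\]
where subtracting $\hat{\perf}^\val[h^t]$ in line~8 removes the zeroth-order term $\psi(\c^t)$. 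Thus the theorem's hypothesis on the weight-elicitation routine applies with this $\chi$, and (union-bounding over the $T$ iterations, $\delta\mapsto\delta/T$) each call to Algorithm~\ref{algo:weight-coeff} returns $\hat{\balpha}$ with $\|\hat{\balpha}-\bar{\balpha}\|\le\kappa(\delta/T,n^\tr,n^\val,2\lambda\epsilon^2)$, where $\bar{\balpha}$ is the Assumption~\ref{asp:alpha-star} weighting for the (normalized) linear functional $\langle\nabla\psi(\bar{\c}^t),\cdot\rangle$, noting $\|\nabla\psi\|\le Q$ by $Q$-Lipschitzness.

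With this in place the remainder is bookkeeping identical to the known-$\psi$ case, with $\kappa(\cdot)$ everywhere replaced by $\kappa(\cdot,2\lambda\epsilon^2)$ and $B'=B+\sqrt{Lm}\,\kappa(\delta/T,n^\tr,n^\val,2\lambda\epsilon^2)$. I would re-run Lemma~\ref{lem:plugin-linear} (error bound for \textbf{PI-EW}) and Lemma~\ref{lem:lmo} (approximate linear maximizer) verbatim: Lemma~\ref{lem:C-genbound} still bounds $\|\c^t-\bar{\c}^t\|_\infty=\|\hat{\C}^\val[h^t]-\C^\Dtrue[h^t]\|_\infty$ because the iterates $h^t$ remain randomized combinations of plug-in classifiers, contributing the $\lambda m\sqrt{(m\log m\log n^\val+\log(m/\delta))/n^\val}$ term via the gradient-mismatch step $2\lambda m\|\c^t-\bar{\c}^t\|_\infty$, and Lemma~\ref{lem:plugin-linear} supplies $QB'\E_x\|\eta^\tr(x)-\hat{\eta}^\tr(x)\|_1+2Q\sqrt{Lm}\,\kappa(\cdot,2\lambda\epsilon^2)+2Q\nu$. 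Hence $\hat f$ from line~9 is a $\Delta$-approximate maximizer of $\langle\nabla\psi(\bar{\c}^t),\diag\C^\Dtrue[\cdot]\rangle$ with $\Delta$ the sum of these terms, and feeding this into Lemma~\ref{lem:fw} (Frank--Wolfe with approximate linear maximization, using convexity of $\cC$ from Lemma~\ref{lem:C-convexity}) gives $\max_h\perf^D[h]-\perf^D[\hat h]\le 2\Delta+8\lambda/(T+2)$, which is exactly the claimed bound. The two-sample split of $S^\val$ is used, as in Theorem~\ref{thm:iterative-plugin}, to keep the confusion-matrix estimates in line~10 independent of the elicitation step.

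The step I expect to require the most care is the one hidden in the phrase ``up to the constant'': the Taylor expansion identifies $\hat{\perf}^\lin$ on $\bar{\H}$ with $\sum_i\beta^t_i\hat{C}^\val_{ii}[h]$ only up to the additive offset $-\langle\nabla\psi(\c^t),\c^t\rangle$, and one must argue that such a constant shift of the right-hand side $\hat{\bcE}$ of the elicitation linear system does not degrade the guarantee — which is precisely why line~8 performs the shift and why the theorem's hypothesis is stated as closeness to $\sum_i\beta_i\hat{C}^\val_{ii}[h]$. Everything else — the uniform-convergence bounds of Lemmas~\ref{lem:C-genbound} and~\ref{lem:plugin-linear}, and the Frank--Wolfe recursion of Lemma~\ref{lem:fw} — transfers unchanged.
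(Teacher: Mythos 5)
Your proposal follows the paper's own proof essentially step for step: a Taylor expansion of $\psi$ at $\c^t$ with slack $2\lambda\epsilon^2$ (the paper's Lemma~\ref{lem:eps-linear-approx}), then a re-run of the PI-EW error bound (Lemma~\ref{lem:plugin-linear} $\to$ Lemma~\ref{lem:plugin-linear-unknown-psi}), the approximate-linear-maximizer bound (Lemma~\ref{lem:lmo} $\to$ Lemma~\ref{lem:lmo-unknown}), and the Frank--Wolfe recursion of Lemma~\ref{lem:fw}, with $\kappa(\cdot)$ replaced by $\kappa(\cdot,2\lambda\epsilon^2)$ throughout. So the overall architecture is identical.

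However, the step you yourself flag as delicate is a genuine gap, and your attempted resolution is incorrect. Your Taylor display claims $\bigl|\hat{\perf}^\lin[h]-\langle\nabla\psi(\c^t),\diag\hat{\C}^\val[h]\rangle\bigr|\le 2\lambda\epsilon^2$, but $\lambda$-smoothness actually yields
$\bigl|\hat{\perf}^\lin[h]-\langle\nabla\psi(\c^t),\diag\hat{\C}^\val[h]-\c^t\rangle\bigr|\le 2\lambda\epsilon^2$, so the additive offset $-\langle\nabla\psi(\c^t),\c^t\rangle$ survives. The subtraction of $\hat{\perf}^\val[h^t]$ in line~8 removes only the zeroth-order term $\psi(\c^t)$; it does \emph{not} remove the first-order constant $\langle\nabla\psi(\c^t),\c^t\rangle$, contrary to what you assert. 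Because this constant is $O(Q)$ rather than $O(\epsilon^2)$, one cannot simply invoke the theorem's hypothesis (closeness of $\hat{\perf}^\val$ to $\sum_i\beta_i\hat{C}^\val_{ii}[h]$) with $\chi=2\lambda\epsilon^2$: as stated, the shifted metric is $2\lambda\epsilon^2$-close only to the \emph{affine} functional $\sum_i\upsilon^t_i\hat{C}^\val_{ii}[h]-\langle\bupsilon^t,\c^t\rangle$, and the constant perturbs the right-hand side $\hat{\bcE}$ of the linear system by $c\mathbf{1}$, which in turn perturbs $\hat{\balpha}$ by $c\,\hat{\bSigma}^{-1}\mathbf{1}$ --- an amount of order $Q\sqrt{Lm}/\gamma$, not controlled by $\epsilon$. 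It is worth noting that the paper's own Lemma~\ref{lem:eps-linear-approx} silently omits exactly this $+\langle\bupsilon^t,\c^t\rangle$ term in the second line of its proof, so this is a gap shared by the paper; a correct treatment would need either to re-state the elicitation hypothesis in terms of affine (constant-plus-linear) metrics, or to show separately that a constant shift of $\hat{\bcE}$ leaves the plug-in classifier's $\argmax$ unchanged (which is not automatic since $\hat{\bSigma}^{-1}\mathbf{1}$ need not be constant across class indices $i$).
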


One can plug-in $\kappa(\cdot)$ with e.g. the error bound we derived for Algorithm \ref{algo:weight-coeff} in Theorem \ref{thm:alpha-diagonal-linear-conopt}, suitably modified to accommodate input metrics $\hat{\perf}^\val$ that may differ from the desired linear metric by at most $\chi$. Such modifications can be easily made to Theorem \ref{thm:alpha-diagonal-linear-conopt} and would result in an  additional term $\sqrt{Lm}\chi$  in the error bound to take into account the additional approximation errors in computing the right-hand side of the linear system in \eqref{eq:perf-emp}.

Before proceeding to prove Theorem \ref{thm:fw-eg-error-bound-unknown-psi}, we state a few useful lemmas. The following  lemma shows that because $\psi(\C)$ is $\lambda$-smooth, it is effectively linear within a small neighborhood around $\C$.
\begin{lem}
\label{lem:eps-linear-approx}
Suppose $\psi$ is $\lambda$-smooth w.r.t.\ the $\ell_1$-norm. 
For each iteration $t$ of Algorithm \ref{algo:FW-modified}, let ${\bupsilon}^t = \nabla\psi(\c^t)$ denote the true gradient of $\psi$ at $\c^t$.
Then for any classifier $h^\epsilon(x) = (1-\epsilon)h^t(x) + \epsilon h(x),$
\begin{eqnarray*}
    \left|\hat{\perf}^\val[h^\epsilon] - \hat{\perf}^\val[h^t] - \sum_{i}\upsilon^t_i \hat{C}^\val_{ii}[h^\epsilon]\right|
    &\leq& 2\lambda\epsilon^2.
\end{eqnarray*}
\end{lem}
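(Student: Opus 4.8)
The plan is to read $\hat\perf^\val$ as the composition of the smooth scalar function $\psi$ with the diagonal‑confusion map $\c(h):=\diag(\hat\C^\val[h])$, and to exploit that this map is affine in the classifier. Under the Frank–Wolfe update of Algorithm~\ref{algo:FW-modified} (with $\hat f$ deterministic, so $\hat\C^\val[\hat f]=\hat\C^\val[\onehot(\hat f)]$) the iterate satisfies $\c^t=\c(h^t)$, hence $\hat\perf^\val[h^t]=\psi(\c^t)$ and $\bupsilon^t=\nabla\psi(\c^t)$. Since each coordinate $\hat C^\val_{ii}[h]=\frac1{n^\val}\sum_{(x,y)\in S^\val}\1(y=i)h_i(x)$ is linear in $h$ and $h^\epsilon=(1-\epsilon)h^t+\epsilon h$, linearity of expectation gives $\c(h^\epsilon)=(1-\epsilon)\c^t+\epsilon\,\c(h)=\c^t+\epsilon\big(\c(h)-\c^t\big)$, i.e.\ $h^\epsilon$ corresponds to a confusion point within $\ell_1$‑distance $\epsilon\|\c(h)-\c^t\|_1$ of $\c^t$.

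Then I would combine two elementary estimates. First, a diameter bound: every diagonal‑confusion vector lies in $\{\u\in\R^m_{\ge 0}:\sum_i u_i\le 1\}$, because $\hat C^\val_{ii}[h]\ge 0$ and $\sum_i\hat C^\val_{ii}[h]=\frac1{n^\val}\sum_{(x,y)\in S^\val}h_y(x)\le 1$ as $h(x)\in\Delta_m$; hence $\|\c(h)\|_1\le 1$, $\|\c^t\|_1\le 1$, so $\|\c(h)-\c^t\|_1\le 2$ and $\|\c(h^\epsilon)-\c^t\|_1\le 2\epsilon$. Second, the two‑sided quadratic bound that $\lambda$‑smoothness of $\psi$ w.r.t.\ $\|\cdot\|_1$ yields: $\big|\psi(\u)-\psi(\v)-\langle\nabla\psi(\v),\u-\v\rangle\big|\le\tfrac\lambda2\|\u-\v\|_1^2$ for all $\u,\v$. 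Applying this with $\u=\c(h^\epsilon)$, $\v=\c^t$ gives $\big|\psi(\c(h^\epsilon))-\psi(\c^t)-\langle\bupsilon^t,\c(h^\epsilon)-\c^t\rangle\big|\le\tfrac\lambda2(2\epsilon)^2=2\lambda\epsilon^2$. Note only smoothness is used; concavity of $\psi$ is irrelevant for this lemma.

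Finally I would translate back to classifier notation: $\psi(\c(h^\epsilon))=\hat\perf^\val[h^\epsilon]$, $\psi(\c^t)=\hat\perf^\val[h^t]$, and $\langle\bupsilon^t,\c(h^\epsilon)-\c^t\rangle=\sum_i\upsilon^t_i\hat C^\val_{ii}[h^\epsilon]-\sum_i\upsilon^t_i\hat C^\val_{ii}[h^t]$, so the estimate is exactly the displayed inequality once one records that the subtracted term $\sum_i\upsilon^t_i\hat C^\val_{ii}[h^t]=\langle\bupsilon^t,\c^t\rangle$ is the classifier‑independent intercept of the first‑order model of $\psi$ at $\c^t$ — precisely the additive constant stripped off by the recentered metric $\hat\perf^\lin[\cdot]=\hat\perf^\val[\cdot]-\hat\perf^\val[h^t]$ passed in line~8 of Algorithm~\ref{algo:FW-modified}, which therefore never affects the right‑hand side of the elicitation system \eqref{eq:perf-emp} nor the plug‑in argmax. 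Technically there is no hard step here; the only points requiring care are (i) noting that the confusion map is affine in $h$ so the Taylor remainder is $O(\|\c(h^\epsilon)-\c^t\|_1^2)=O(\epsilon^2)$, (ii) pinning the diameter bound to $2\epsilon$ so the constant comes out as $2\lambda\epsilon^2$, and (iii) matching the constant‑offset bookkeeping to the recentered linear metric that the algorithm actually feeds the weight‑elicitation routine — which is also where $\chi=2\lambda\epsilon^2$ enters in Theorem~\ref{thm:fw-eg-error-bound-unknown-psi}.
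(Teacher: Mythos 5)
Your proof takes essentially the same route as the paper's: expand $\hat\perf^\val$ as $\psi\circ\c(\cdot)$, use that $\c(h^\epsilon)-\c^t=\epsilon(\c(h)-\c^t)$ by affineness of the confusion map, invoke the two-sided quadratic remainder bound from $\lambda$-smoothness, and close with the $\ell_1$-diameter bound $\|\c(h)-\c^t\|_1\le 2$ to get $\frac{\lambda}{2}(2\epsilon)^2=2\lambda\epsilon^2$. The mechanics are identical.

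What you add is the observation that $\lambda$-smoothness actually bounds
\[
\Bigl|\psi(\c(h^\epsilon))-\psi(\c^t)-\langle\bupsilon^t,\c(h^\epsilon)-\c^t\rangle\Bigr|\le 2\lambda\epsilon^2,
\]
whereas the lemma as displayed drops the intercept $\langle\bupsilon^t,\c^t\rangle=\sum_i\upsilon^t_i\hat C^\val_{ii}[h^t]$ from the linearization. You are right to flag this: the paper's second inequality in its proof silently applies smoothness as if $\langle\bupsilon^t,\c^t\rangle=0$, so the stated inequality is off by that classifier-independent constant. Your explanation of why the slip is harmless downstream — the constant is exactly absorbed by the recentering $\hat\perf^\lin[\cdot]=\hat\perf^\val[\cdot]-\hat\perf^\val[h^t]$ in line~8 of Algorithm~\ref{algo:FW-modified}, so it enters $\chi$ only as a bookkeeping convention — is the right reading, and is worth recording explicitly since Lemma~\ref{lem:plugin-linear-unknown-psi} and Theorem~\ref{thm:fw-eg-error-bound-unknown-psi} invoke this lemma through the slack parameter $\chi=2\lambda\epsilon^2$ rather than through the literal displayed inequality. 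One small caution: the paper identifies $\c^t$ with $\diag(\hat\C^\val[h^t])$ when it writes $\bupsilon^t=\nabla\psi(\diag(\hat\C^\val[h^t]))$, which is consistent with your treatment but is itself only approximate after the two-sample split in Algorithm~\ref{algo:FW-modified}; both you and the paper gloss over this, and it is fine to do so for this lemma.
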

\begin{proof}
For any randomized classifier $h^\epsilon(x) = (1-\epsilon)h^t(x) + \epsilon h(x),$
\allowdisplaybreaks
\begin{eqnarray*}
    \left|\hat{\perf}^\val[h^\epsilon] - \hat{\perf}^\val[h^t] - \sum_{i}\upsilon^t_i \hat{C}^\val_{ii}[h^\epsilon]\right| &=&
    \left|\psi(\diag(\hat{\C}^\val[h^\epsilon])) - \psi(\diag(\hat{\C}^\val[h^t])) -  \sum_{i}\upsilon^t_i \hat{C}^\val_{ii}[h^\epsilon]\right|\\
    &\leq& \frac{\lambda}{2}\|\diag(\hat{\C}^\val[h^\epsilon]) \,-\, \diag(\hat{\C}^\val[h^t])\|^2_1\\
    &=& \frac{\lambda}{2}\|\epsilon(\diag(\hat{\C}^\val[h]) \,-\, \diag(\hat{\C}^\val[h^t]))\|_1^2\\
    &=&  \frac{\lambda}{2}\epsilon^2\|\diag(\hat{\C}^\val[h]) \,-\, \diag(\hat{\C}^\val[h^t])\|_1^2\\
     &\leq&  \frac{\lambda}{2}\epsilon^2\left(\|\diag(\hat{\C}^\val[h])\|_1 \,+\, \|\diag(\hat{\C}^\val[h^t])\|_1\right)^2\\
    &\leq& \frac{\lambda}{2}\epsilon^2(2)^2 ~=~2\lambda\epsilon^2.
\end{eqnarray*}
Here the second line follows from the fact that $\psi$ is $\lambda$-smooth w.r.t.\ the $\ell_1$-norm, and $\bupsilon^t = \nabla\psi(\diag(\hat{\C}^\val[h^t]))$. The third line follows from linearity of expectations. The last line follows from the fact that the sum of the entries of a confusion matrix  (and hence the sum of its diagonal entries) cannot exceed 1.
\end{proof}

We next restate the error bounds for the call to \textbf{PI-EW} in line 9 and the corresponding bound on the approximation error in the linear maximizer $\hat{f}$ obtained.
\begin{lem}[Error bound for call to \textbf{PI-EW}  in line 9 with unknown $\psi$]
\label{lem:plugin-linear-unknown-psi}
For each iteration $t$ of Algorithm \ref{algo:FW}, let ${\bupsilon}^t = \nabla\psi(\c^t)$ denote the true gradient of $\psi$ at $\c^t$,  when the algorithm is run with an unknown $\psi$ that is $Q$-Lipschitz and $\lambda$-smooth w.r.t.\ the $\ell_1$-norm.
Let
 $\bar{\balpha}$ be the associated weighting coefficient for the linear metric $\sum_i {\upsilon}^t_i C^D_{ii}[h]$ (whose coefficients are unknown) in Assumption \ref{asp:alpha-star}, with $\|\bar{\balpha}\|_1 \leq B$, and with slack $\nu$. Fix $\delta>0$.
Suppose w.p. $\geq 1-\delta$ over draw of $S^\tr$ and $S^\val$, when the weight elicitation routine used in \emph{\textbf{PI-EW}} is called with the input metric $\hat{\perf}^\val[h] - \hat{\perf}^\val[h^t]$  with $|\hat{\perf}^\val[h] - \sum_i\upsilon_i \hat{C}^\val_{ii}[h]| \leq \chi, \forall h$, it
outputs coefficients $\hat{\balpha}$ such that
 $\|\hat{\balpha} -\bar{\balpha}\| \leq 
 \kappa(\delta, n^\tr, n^\val, \chi)
 $, for some function $\kappa(\cdot) > 0$.  Let $B' = B + \sqrt{Lm}\,\kappa(\delta, n^\tr, n^\val, 2\lambda\epsilon^2)$. 
 Then with the same probability, the classifier $\hat{h}$ output by \emph{\textbf{PI-EW}}  when called 
 by Algorithm \ref{algo:FW-modified}
 with metric $\hat{\perf}^\lin[h]= \hat{\perf}^\val[h] - \hat{\perf}^\val[h^t]$
 and radius $\epsilon$ satisfies:
\begin{eqnarray*}
{
\max_{h}\sum_i {\upsilon}^t_i C^D_{ii}[h] - \sum_i {\upsilon}^t_i C^D_{ii}[\hat{h}]} &\leq
Q\left(B'\E_x\left[\|\eta^\tr(x)-
\hat{\eta}^\tr(x)\|_1\right]
\,+\,
 2\sqrt{Lm}\,\kappa(\delta, n^\tr, n^\val, 2\lambda\epsilon^2)
 \,+\, 2\nu
 \right),
\end{eqnarray*}
where $\eta_i^\tr(x) = \P^\mu(y=i|x)$.
\end{lem}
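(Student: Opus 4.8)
The plan is to derive this as a direct corollary of two results already in hand: the PI-EW error bound for linear metrics (Lemma~\ref{lem:plugin-linear}) and the local-linearization estimate for smooth $\psi$ (Lemma~\ref{lem:eps-linear-approx}). The point is that even though $\psi$ is unknown and the gradient $\bupsilon^t=\nabla\psi(\c^t)$ is never explicitly formed, running the elicitation routine on the shifted metric $\hat{\perf}^\lin[h]=\hat{\perf}^\val[h]-\hat{\perf}^\val[h^t]$, with probing classifiers confined to a radius-$\epsilon$ neighborhood of $h^t$, is --- up to an additive slack of $2\lambda\epsilon^2$ --- the same as running it on the exact linear metric $\sum_i\upsilon^t_i\hat{C}^\val_{ii}[h]$. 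Once that reduction is made, the argument of Lemma~\ref{lem:plugin-linear} applies essentially verbatim with $\bbeta=\bupsilon^t$.

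First I would invoke Lemma~\ref{lem:eps-linear-approx}. Since Algorithm~\ref{algo:weight-coeff}, called from line~9 of Algorithm~\ref{algo:FW-modified} with base classifier $\bar{h}=h^t$ and radius $\epsilon$, picks its probing classifiers from $\bar{\H}=\{\tau h+(1-\tau)h^t : h\in\H,\ \tau\in[0,\epsilon]\}$, each such classifier is of the form $h^\epsilon=(1-\tau)h^t+\tau h$ with $\tau\le\epsilon$, so Lemma~\ref{lem:eps-linear-approx} gives $\big|\hat{\perf}^\lin[h^{\ell,i}]-\sum_i\upsilon^t_i\hat{C}^\val_{ii}[h^{\ell,i}]\big|\le 2\lambda\tau^2\le 2\lambda\epsilon^2$ for every probing classifier. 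Hence the elicitation routine is effectively being fed the linear metric $\sum_i\upsilon^t_i\hat{C}^\val_{ii}[\cdot]$ with approximation parameter $\chi=2\lambda\epsilon^2$, and by the hypothesis of the lemma it returns $\hat{\balpha}$ with $\|\hat{\balpha}-\bar{\balpha}\|\le\kappa(\delta,n^\tr,n^\val,2\lambda\epsilon^2)$, where $\bar{\balpha}$ is the Assumption~\ref{asp:alpha-star} coefficient (with $\|\bar{\balpha}\|_1\le B$, slack $\nu$) associated with the linear metric $\sum_i\upsilon^t_i C^D_{ii}[\cdot]$ --- the very metric whose maximizer the claimed bound is about.

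Next I would replay the proof of Lemma~\ref{lem:plugin-linear} with $\bbeta=\bupsilon^t$, using $\|\bupsilon^t\|=\|\nabla\psi(\c^t)\|\le Q$ by $Q$-Lipschitzness of $\psi$. Writing $\bar{W}_i(x)=\sum_\ell\bar{\alpha}^\ell_i\phi^\ell(x)$ and $\hat{W}_i(x)=\sum_\ell\hat{\alpha}^\ell_i\phi^\ell(x)$, the inequalities $|\bar{W}_i(x)-\hat{W}_i(x)|\le\sqrt{Lm}\,\kappa$, $|\bar{W}_i(x)|\le B$, and therefore $|\hat{W}_i(x)|\le B'=B+\sqrt{Lm}\,\kappa$ carry over unchanged (here $\kappa=\kappa(\delta,n^\tr,n^\val,2\lambda\epsilon^2)$). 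Then, combining the $\nu$-slack of Assumption~\ref{asp:alpha-star} for $\sum_i\upsilon^t_i C^D_{ii}[\cdot]$, the optimality of $\hat{h}(x)\in\argmax_i\hat{W}_i(x)\hat{\eta}^\tr_i(x)$ against the true maximizer, the $\bar{W}\!\to\!\hat{W}$ substitution, and H\"older's inequality applied to $\eta^\tr-\hat{\eta}^\tr$, one recovers the stated bound (with the $Q$-scaled version of Lemma~\ref{lem:plugin-linear} accounting for $\|\bupsilon^t\|\le Q$ and its slack term, which then propagate into Theorem~\ref{thm:fw-eg-error-bound-unknown-psi}). Since the plug-in step is deterministic given $\hat{\balpha}$ and $\hat{\eta}^\tr$, this holds on the same high-probability event on which the elicitation guarantee holds.

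The main obstacle here is bookkeeping rather than a genuine difficulty: one must carefully thread the approximation parameter $\chi=2\lambda\epsilon^2$ through the elicitation routine's guarantee and ensure the $\bar{\balpha}$ used in the plug-in step is exactly the Assumption~\ref{asp:alpha-star} coefficient of the \emph{gradient} metric $\sum_i\upsilon^t_i C^D_{ii}[\cdot]$, not of the raw metric; and one must check that Lemma~\ref{lem:eps-linear-approx} applies at every mixing weight $\tau\le\epsilon$, not only at $\tau=\epsilon$ (it does, as its proof only uses that $\|\diag(\hat{\C}^\val[h^\epsilon])-\diag(\hat{\C}^\val[h^t])\|_1$ is linear in the mixing weight). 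No new concentration or generalization bounds are needed at this step --- those enter through $\kappa(\cdot)$ and through Lemma~\ref{lem:C-genbound} at the level of Theorem~\ref{thm:fw-eg-error-bound-unknown-psi}.
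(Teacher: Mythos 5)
Your proposal is correct and follows essentially the same route the paper takes: the paper's proof is simply the observation that one replays Lemma~\ref{lem:plugin-linear} verbatim for the (unknown) linear metric $\sum_i\upsilon^t_i C^D_{ii}[\cdot]$, with Lemma~\ref{lem:eps-linear-approx} supplying the slack $\chi=2\lambda\epsilon^2$ with which the elicitation routine's $\kappa(\cdot)$ guarantee is invoked. You flesh out the paper's one-sentence argument, correctly noting that the probing classifiers live at mixing weight $\tau\le\epsilon$ (so Lemma~\ref{lem:eps-linear-approx} applies with $2\lambda\tau^2\le 2\lambda\epsilon^2$) and that $\|\bupsilon^t\|\le Q$ by Lipschitzness, which governs how $Q$ and $\nu$ enter downstream in Lemma~\ref{lem:lmo-unknown} and Theorem~\ref{thm:fw-eg-error-bound-unknown-psi}.
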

\begin{proof}
The proof is the same as that of Lemma \ref{lem:plugin-linear} for the ``known $\psi$'' case, except that the  $\kappa(\cdot)$ guarantee for the call to weight elicitation routine in line 2 is different, and takes into account the fact that the input metric $\hat{\perf}^\val[h] - \hat{\perf}^\val[h^t]$ to the weight elicitation routine  is only a local approximation to the (unknown) linear metric $\sum_i\upsilon_i \hat{C}^\val_{ii}[h]$. We use Lemma \ref{lem:eps-linear-approx} to compute the value of slack $\chi$ in $\kappa(\cdot)$.
\end{proof} 

\begin{lem}[Approximation error in linear maximizer $\hat{f}$ in line 9 with unknown $\psi$]
\label{lem:lmo-unknown} 
For each iteration $t$ in Algorithm \ref{algo:FW-modified}, let $\bar{\c}^t = \diag(\C^D[h^t])$ and let $\bar{\bbeta}^t = \nabla\psi(\bar{\c}^t)$ denote the unknown gradient of $\psi$ evaluated at $\bar{\c}^t$.
Suppose the assumptions in Theorem \ref{thm:fw-eg-error-bound-unknown-psi} hold. 
Let $B' = B + \sqrt{Lm}\,\kappa(\delta, n^\tr, n^\val, 2\lambda\epsilon^2)$. 
Assume $m \leq n^\val$. Then w.p.\ $\geq 1 - \delta$ over draw of $S^\tr$ and $S^\val$ from $\Dshift$ and $\Dtrue$ resp., for any $t = 1, \ldots, T$, the classifier $\hat{f}$ returned by \emph{\textbf{PI-EW}} in line 9 satisfies:
\begin{eqnarray*}
\lefteqn{
\max_{h}\sum_{i}\bar{\beta}^t_{ii}C^\Dtrue_i[h] \,-\, 
\sum_{i}\bar{\beta}^t_iC^\Dtrue_{ii}[\hat{f}] \,\leq\,
QB'\E_x\left[\|\eta^\tr(x)-
\hat{\eta}^\tr(x)\|_1\right] 
\,+\, 2Q\nu 
}\\
 &&
 \hspace{1cm}
 +\,
 2Q\sqrt{Lm}\,\kappa\left(\textstyle\frac{\delta}{T}, n^\tr, n^\val, 2\lambda \epsilon^2\right)
 \,+\, 
 \mathcal{O}\left(\lambda m\sqrt{\frac{m\log(m)\log(n^\val) + \log(m/\delta)}{n^\val}}\right).
\end{eqnarray*}
\end{lem}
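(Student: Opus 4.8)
The plan is to transcribe the proof of Lemma \ref{lem:lmo} (the known-$\psi$ case), substituting the unknown-$\psi$ plug-in bound of Lemma \ref{lem:plugin-linear-unknown-psi} for Lemma \ref{lem:plugin-linear} and tracking the extra slack introduced by replacing the true gradient $\nabla\psi(\c^t)$ with the local finite-difference surrogate that Algorithm \ref{algo:FW-modified} actually uses. Throughout, fix an iteration $t$ and write $\c^t = \diag(\hat{\C}^\val[h^t])$, $\bar{\c}^t = \diag(\C^\Dtrue[h^t])$, $\bupsilon^t = \nabla\psi(\c^t)$, and $\bar{\bbeta}^t = \nabla\psi(\bar{\c}^t)$; note $\|\bupsilon^t\| \leq Q$ and $\|\bar{\bbeta}^t\| \leq Q$ by $Q$-Lipschitzness of $\psi$, and that $h^t$ is a finite convex combination of plug-in classifiers over $\hat{\eta}^\tr$, so Lemma \ref{lem:C-genbound} applies to it.

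First I would pin down the linearization slack. In line 9, \textbf{PI-EW} is called with base classifier $h^t$, radius $\epsilon$, and shifted input metric $\hat{\perf}^\lin[h] = \hat{\perf}^\val[h] - \hat{\perf}^\val[h^t]$, and it only evaluates this metric on classifiers of the form $h^\epsilon = (1-\epsilon)h^t + \epsilon h$. By Lemma \ref{lem:eps-linear-approx}, on exactly this neighbourhood the shifted metric differs from the linear functional $h \mapsto \sum_i \upsilon^t_i \hat{C}^\val_{ii}[h]$ by at most $\chi := 2\lambda\epsilon^2$. Feeding this $\chi$ into the elicitation guarantee and invoking Lemma \ref{lem:plugin-linear-unknown-psi} (which already packages the union over possible linear metrics) gives, w.p.\ $\geq 1-\delta$ over $S^\tr, S^\val_1$, a bound on the population linear-maximization gap for $\bupsilon^t$:
\[
\max_h \sum_i \upsilon^t_i C^\Dtrue_{ii}[h] \,-\, \sum_i \upsilon^t_i C^\Dtrue_{ii}[\hat{f}] \,\leq\, Q B' \,\E_x\!\big[\|\eta^\tr(x) - \hat{\eta}^\tr(x)\|_1\big] \,+\, 2Q\nu \,+\, 2Q\sqrt{Lm}\,\kappa(\delta, n^\tr, n^\val, 2\lambda\epsilon^2),
\]
with $B' = B + \sqrt{Lm}\,\kappa(\delta, n^\tr, n^\val, 2\lambda\epsilon^2)$.

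Next I would convert this from the empirical-gradient metric $\bupsilon^t$ to the target population-gradient metric $\bar{\bbeta}^t$. Exactly as in Lemma \ref{lem:lmo}, letting $h^* \in \argmax_h \sum_i \bar\beta^t_i C^\Dtrue_{ii}[h]$ and adding and subtracting $\sum_i \upsilon^t_i C^\Dtrue_{ii}[h^*]$ and $\sum_i \upsilon^t_i C^\Dtrue_{ii}[\hat{f}]$, the gap $\sum_i \bar\beta^t_i C^\Dtrue_{ii}[h^*] - \sum_i \bar\beta^t_i C^\Dtrue_{ii}[\hat{f}]$ is at most $2\|\bupsilon^t - \bar{\bbeta}^t\|_\infty$ plus the $\bupsilon^t$-linear-maximization gap above, using $\sum_{ij} C^\Dtrue_{ij}[\cdot] = 1$. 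For the first term, $\lambda$-smoothness of $\psi$ w.r.t.\ the $\ell_1$ norm gives $\|\bupsilon^t - \bar{\bbeta}^t\|_\infty \leq \lambda\|\c^t - \bar{\c}^t\|_1 \leq \lambda m\|\c^t - \bar{\c}^t\|_\infty = \lambda m\|\hat{\C}^\val[h^t] - \C^\Dtrue[h^t]\|_\infty$, which Lemma \ref{lem:C-genbound} (applied on the independent half $S^\val_2$) bounds by $\mathcal{O}\big(\lambda m\sqrt{(m\log m\log n^\val + \log(m/\delta))/n^\val}\big)$, uniformly over all iterations. Taking a union bound over $t = 1,\dots,T$ for the per-iteration linear-maximization bound (rescaling $\delta \to \delta/T$ inside $\kappa(\cdot)$) and noting that the two high-probability events are supported on the disjoint halves $S^\val_1$ and $S^\val_2$ (hence hold jointly w.p.\ $\geq 1-\delta$ over $S^\val$, together with the event over $S^\tr$) yields the claimed inequality.

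The main obstacle is purely the bookkeeping around the double approximation of the gradient: one must keep the ``linearization slack'' $\chi = 2\lambda\epsilon^2$ (from the finite-difference surrogate $\hat{\perf}^\val[\cdot] - \hat{\perf}^\val[h^t]$ near $h^t$, controlled by Lemma \ref{lem:eps-linear-approx}) separated from the ``estimation slack'' $\|\nabla\psi(\c^t) - \nabla\psi(\bar{\c}^t)\|_\infty$ (from estimating $\C^\Dtrue[h^t]$ by $\hat{\C}^\val[h^t]$, controlled by Lemma \ref{lem:C-genbound}), passing the former into the $\kappa(\delta, n^\tr, n^\val, \chi)$ slot of Lemma \ref{lem:plugin-linear-unknown-psi} and bounding the latter via smoothness. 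These two error sources are genuinely independent — one lives on $S^\val_1$ and concerns the local behaviour of $\psi$, the other on $S^\val_2$ and concerns confusion-matrix concentration — so no cross terms arise, and the rest (the triangle-inequality split, the $Q$-Lipschitz / $\lambda$-smooth inequalities, and the union bound over iterations) is a verbatim adaptation of the proof of Lemma \ref{lem:lmo}.
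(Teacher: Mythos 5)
Your proposal is correct and takes exactly the route the paper intends: the paper's own proof of Lemma~\ref{lem:lmo-unknown} is a one-liner, deferring to the proof of Lemma~\ref{lem:lmo} and swapping in Lemma~\ref{lem:plugin-linear-unknown-psi} for Lemma~\ref{lem:plugin-linear} at the step corresponding to~\eqref{eq:approx-lin-max-last}. You have simply written out the details the paper leaves implicit, correctly identifying the linearization slack $\chi = 2\lambda\epsilon^2$ from Lemma~\ref{lem:eps-linear-approx}, the $\ell_\infty$ gradient-estimation term controlled via $\lambda$-smoothness and Lemma~\ref{lem:C-genbound}, the split across the independent halves $S^\val_1$ and $S^\val_2$, and the union bound over $t$; this matches the paper's argument.
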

\begin{proof}
The proof is the same as that of Lemma \ref{lem:lmo} for the ``known $\psi$'' case, with the only difference being that we use Lemma \ref{lem:plugin-linear-unknown-psi} (instead of Lemma \ref{lem:plugin-linear}) to bound the linear maximization errors in equation \eqref{eq:approx-lin-max-last}.
\end{proof}

\begin{proof}[Proof of Theorem \ref{thm:fw-eg-error-bound-unknown-psi}]

The proof follows from plugging Lemma \ref{lem:lmo-unknown} into the Frank-Wolfe convergence guarantee in Lemma \ref{lem:fw} stated in Appendix \ref{app:proof-FW}.
\end{proof}

\section{Running Time of Algorithm \ref{algo:FW}}
\label{app:run-time}
We discuss how one iteration of FW-EG (Algorithm~\ref{algo:FW}) compares with  one iteration (epoch) of training a class-conditional probability estimate $\hat\eta^{\tr}(x) \approx \P^\Dshift(y=1|x)$. In each iteration of FW-EG, we create $Lm$ probing classifiers, where each probing classifier via~\eqref{eq:trivial-classifiers} only  requires perturbing the predictions of the base classifier $\bar h = h^t$ and hence requires $n^\tr + n^\val$ computations. After constructing the $Lm$ probing classifiers, FW-EG solves a system of linear equations with $Lm$ unknowns, where a na\"ive matrix inversion approach requires $O((Lm)^3)$ time. Notice that this can be further  improved with efficient methods, e.g., using state-of-the-art linear regression solvers. Then FW-EG creates a plugin classifier and combines the predictions with the Frank-Wolfe style updates, requiring $Lm(n^\tr + n^\val)$ computations. So, the overall time complexity for each iteration of  
FW-EG is $O\left(Lm(n^\tr + n^\val) + (Lm)^3\right)$. On the other hand, one iteration (epoch) of training $\hat\eta^{\tr}(x)$ requires $O(n^\tr Hm)$ time, where $H$ represents the total number of parameters in the underlying model architecture up to the penultimate layer.
For deep networks such as ResNets (Sections~\ref{ssec:cifar10} and~\ref{ssec:adience}), clearly, the run-time is dominated by the training of $\hat\eta^{\tr}(x)$, as long as $L$ and $m$ are relatively small compared to the number of parameters in the neural network. Thus our approach is reasonably faster than having to train the model for $\hat\eta^\tr$ in each iteration~\cite{jiang2020optimizing}, training the model (such as ResNets) twice~\cite{patrini2017making}, or making multiple forward/backward passes on the training and validation set requiring three times the time for each epoch compared to training $\hat\eta^\tr$~\cite{ren2018learning}. 

\section{Plug-in with Coordinate-wise Search Baseline}
\label{ssec:multiclass-plugin}

We describe the Plug-in [train-val] baseline used in Section \ref{sec:experiments}, which constructs a classifier $\widehat{h}(x) \,\in\, \argmax_{i \in [m]} w_i\hat{\eta}^\val_i(x)$, by tuning the weights $w_i \in \R$ to maximize the given metric on the validation set .
Note that there are $m$ parameters to be tuned, and a na\"{i}ve approach would be to use an $m$-dimensional grid search. Instead, we use a trick from \citet{hiranandani2019multiclass} to decompose this search into  an independent coordinate-wise search for each $w_i$. Specifically, one can estimate the relative weighting  $w_i/w_j$ between any pair of classes $i,j$ by constructing a classifier of the form
\[
h^{\zeta}(x) = \begin{cases}
i & \text{if}~~~\zeta \hat{\eta}^\tr_i(x) > (1-\zeta) \hat{\eta}^\tr_j(x)\\
j & \text{otherwise}
\end{cases},
\]
that  predicts either class $i$ or  $j$
based on which of these receives a higher (weighted) probability estimates, 
and (through a line search) finding the  parameter  $\zeta \in (0,1)$ for which $h^{\zeta}$ yields the highest validation metric:
\[
w_i/w_j \approx \argmax_{\zeta \in [0,1]} \hat{\perf}^\val[h^{\zeta}].
\]
By fixing $i$ to class $m$, and repeating this for classes $j \in [m-1]$, one can estimate  $w_j/w_m$ for each $j \in [m-1]$, and normalize the estimated related weights  to get estimates for $w_1, \ldots, w_m$. 

\section{Solving Constrained Satisfaction Problem in \eqref{eq:con-opt}}
\label{app:con-opt}

We describe some common special cases where one can easily identify classifiers $h^{\ell,i}$'s which satisfy the constraints in \eqref{eq:con-opt}.
We will make use of a pre-trained class probability model $\hat{\eta}_i^\tr(x) \approx \P^\Dshift(y=i|x)$, also used in Section \ref{sec:algorithms} to construct the plug-in classifier in Algorithm \ref{algo:linear-metrics}. The hypothesis class $\H$ we consider is the set of all plug-in classifiers obtained by post-shifting $\hat{\eta}^\tr$.

\begin{figure}[h]
\centering
\includegraphics[scale=0.75]{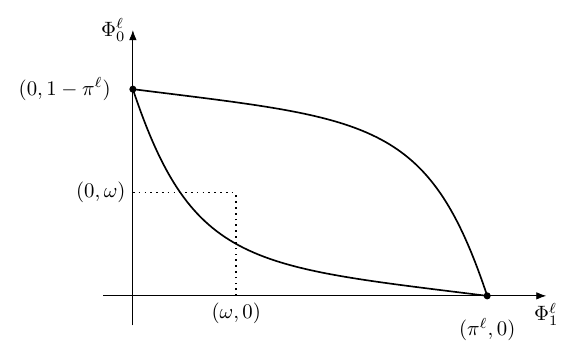}
\captionsetup{labelformat=empty}
\caption{Figure: Geometry of the space of $\Phi$-confusions~\cite{hiranandani2018eliciting} for $m=2$ classes and with basis functions $\phi^\ell(x) = \1(g(x) =\ell)$ which divide the data into $L$ disjoint clusters.
For a fixed cluster $\ell$, we plot the values of ${\Phi}^{\Dshift,\ell}_{0}[h]$ and ${\Phi}^{\Dshift,\ell}_{1}[h]$ for all randomized classifiers, with  $\pi^\ell = \P^{\Dshift}(y=1,g(x) = \ell)$. The points on the lower boundary correspond to  classifiers of the form $\1({\eta}^\tr(x) \leq 
\tau)$ for varying thresholds $\tau \in [0,1]$. 
The points on the lower boundary
within the dotted box correspond to the thresholded classifiers $h$ which yield both values $\Phi^{\Dshift,\ell}_0[h] \leq \omega$ and $\Phi^{\Dshift,\ell}_1[h] \leq \omega$. One can thus find a feasible probing classifier $h^{\ell,i}$ for the constrained optimization problem in~\eqref{eq:con-opt} using the construction given in \eqref{eq:binaryconopt}
as long as $\pi^{\ell} \geq \gamma$ and $1 - \pi^{\ell} \geq \gamma$, and the lower boundary intersects with the dotted box for clusters $\ell'\neq\ell$. If the latter does not happen, one can increase $\omega$ in small steps until the classifier given in~\eqref{eq:binaryconopt} is feasible for  \eqref{eq:con-opt}.
} 
\end{figure}

We start with a binary classification problem ($m = 2$) with basis functions $\phi^\ell(x) = \1(g(x) =\ell)$, which divide the data points into $L$ \textit{disjoint} groups according to $g(x) \in [L]$. For this setting, one can show under mild assumptions on the data distribution that \eqref{eq:con-opt} does indeed have a feasible solution (using e.g. the geometric techniques used by \citet{hiranandani2018eliciting} and also elaborated in the figure above).
One such feasible $h^{\ell, i}$  predicts class $i \in \{0,1\}$ on all example belonging to group $\ell$, and uses a thresholded of $\hat{\eta}^\tr$ for examples from other groups, with per-cluster thresholds. 
This would have the effect of maximizing the diagonal entry $\hat{\Phi}^{\tr,\ell}_i[h^{\ell, i}]$ of $\hat{\bSigma}$ and the thresholds can be tuned so that the off-diagonal entries $\hat{\Phi}^{\tr,\ell'}_{i'}[h^{\ell, i}], \forall (\ell',i') \ne (\ell,i)$ are small. More specifically, for any $\ell \in [L], i \in \{0,1\}$, the classifier $h^{\ell,i}$ can be constructed as:
\begin{equation}
h^{\ell, i}(x) = \begin{cases}
i &\text{ if  $g(x) = \ell$}\\
\1(\hat{\eta}^\tr(x) \leq 
\tau_{g(x)}) &\text{ otherwise},
\end{cases}
\label{eq:binaryconopt}
\end{equation}
where the thresholds $\tau_{\ell'} \in [0,1], \ell' \ne \ell$ can each be tuned independently using a line search to minimize $\max_{i'}\hat{\Phi}^{\tr,\ell'}_{i'}[h^{\ell, i}]$. As long as $\hat{\eta}^\tr$ is a close approximation of $\P(y|x)$, the above procedure is guaranteed to find an approximately feasible solution for \eqref{eq:con-opt}, provided one exists. Indeed one can tune the values of $\gamma$ and $\omega$ in \eqref{eq:con-opt}, so that the above construction (with tuned thresholds) satisfies the constraints.

We next look a multiclass problem ($m > 2$) with basis functions $\phi^\ell(x) = \1(g(x) =\ell)$ which again divide the data points into $L$ \textit{disjoint} groups. 
Here again, one can show under mild assumptions on the data distribution that \eqref{eq:con-opt} does indeed have a feasible solution (using e.g. the geometric tools from \citet{hiranandani2019multiclass}). 
We can once again construct a feasible $h^{\ell, i}$ 
by predicting class $i \in [m]$ on all example belonging to group $\ell$, and using a post-shifted classifier for examples from other groups. In particular, for any $\ell \in [L], i \in [m]$, the classifier $h^{\ell,i}$ can be constructed as:
\begin{equation}
h^{\ell, i}(x) =  
\begin{cases}
i &\text{ if  $g(x) = \ell$}\\
\argmax_{j\in [m]} w^{g(x)}_j\hat{\eta}_j^\tr(x) &\text{ otherwise}
\end{cases},
\label{eq:h-ell-i-con-opt}
\end{equation}
where we use $m$ parameters $w^{\ell'}_1, \ldots, w^{\ell'}_m$ for each cluster $\ell' \ne \ell$. 
We can then tune these $m$ parameters  to minimize the maximum of the off-diagonal entries of $\hat{\bSigma}$, i.e.\ {minimize} $\max_{i'}\hat{\Phi}^{\tr,\ell'}_{i'}[h^{\ell, i}]$. However, this may require an $m$-dimensional grid search. Fortunately, as described in Appendix \ref{ssec:multiclass-plugin}, we can use a trick from \citet{hiranandani2019multiclass} 
to reduce the problem of tuning $m$ parameters into $m$ independent line searches. This is based on the idea that the optimal relative weighting $w^{\ell'}_i/w^{\ell'}_j$ between any pair of classes can be determined through a line search. In our case, we will fix $w^{\ell'}_m = 1, \forall \ell' \ne \ell$ and compute  $w^{\ell'}_i, i = 1, \ldots, m-1$ by solving the following one-dimensional optimization problem to determine the relative weighting $w^{\ell'}_i / w^{\ell'}_m = w^{\ell'}_i$.
\[
w^{\ell'}_i \in \underset{\zeta \in [0,1]}{\argmin}\left( \max_{i'}\hat{\Phi}^{\tr,\ell'}_{i'}[h^{\zeta}]\right),
~~~~
\text{where}
~~~~
h^{\zeta}(x) = \begin{cases}
i & \text{if}~~~\zeta \hat{\eta}^\tr_i(x) < (1-\zeta) \hat{\eta}^\tr_m(x)\\
m & \text{otherwise}
\end{cases}.
\]
We can repeat this for each cluster $\ell' \ne \ell$ to construct the $(\ell,i)$-th probing classifier $h^{\ell,i}$ in \eqref{eq:h-ell-i-con-opt}.

For the more general setting, where the basis functions $\phi^\ell$'s cluster the data into overlapping or soft clusters (such as in \eqref{eq:softlcuster}),  one can find feasible classifiers for \eqref{eq:con-opt} by posing this problem as a ``rate'' constrained optimization problem of the form below to pick $h^{\ell,i}$:
\[
\max_{h\in \H}\hat{\Phi}^{\tr,\ell}_i[h]
~~\text{s.t.}~~
\hat{\Phi}^{\tr,\ell'}_{i'}[h] \leq \omega, \forall (\ell',i') \ne (\ell,i),
\]
which can be solved using off-the-shelf toolboxes such as the open-source library offered by \citet{cotter2019optimization}.\footnote{\url{https://github.com/google-research/tensorflow_constrained_optimization}}
Indeed one can tune the hyper-parameters $\gamma$ and $\omega$ so that the solution to the above problem is feasible for \eqref{eq:con-opt}.
If $\H$ is the set of plug-in classifiers obtained by post-shifting $\hat{\eta}^\tr$, then one can alternatively use the approach of \citet{narasimhan2018learning} to identify the optimal post-shift on $\hat{\eta}^\tr$ that solves the above constrained problem.

\section{Additional Experimental Details}
\label{app:exp}

\begin{table*}[t]
    \centering
    \caption{Data Statistics for different problem setups in Section~\ref{sec:experiments}.}
    {\footnotesize
    \begin{tabular}{lllll}
    \hline
        \textbf{Problem Setup} &\textbf{Dataset} & \textbf{\#Classes} &\textbf{\#Features} & \textbf{train / val / test split} \\
        \hline
        Indepen. Label Noise (Section~\ref{ssec:cifar10}) & CIFAR-10 & 10 & 32 $\times$ 32 $\times$ 3 & 49K / 1K / 10K  \\ \hline
        Proxy-Label (Section~\ref{ssec:adult}) & Adult & 2 & 101 & 32K / 350 / 16K  \\ \hline
        Domain-Shift (Section~\ref{ssec:adience}) & Adience & 2 & 256 $\times$ 256 $\times$ 3 & 12K / 800 / 3K \\ \hline
        Black-Box Fairness Metric (Section~\ref{ssec:adultbb}) & Adult & 2 (2 prot. groups) & 106 & 32K / 1.5K / 14K  \\
        \hline
    \end{tabular}}
    \label{tab:stats}
\end{table*}

For the experiments (Section~\ref{sec:experiments}), we provide the data statistics in Table~\ref{tab:stats}. Observe that we always use small validation data in comparison to the size of the training data. Below we provide some more details regarding the experiments:  

\begin{itemize}
    \item \textit{Maximizing Accuracy under Label Noise on CIFAR-10 (Section~\ref{ssec:cifar10}):} The metric that we aim to optimize  is test accuracy, which is a linear metric in the diagonal entries of the confusion matrix. Notice that we work with the \emph{asymmetric} label noise model from~\citet{patrini2017making}, which corresponds to the setting where a label is flipped to a particular label with a certain probability. This involves a non-diagonal noise transition matrix $\T$, and consequently the corrected  training objective  is a linear function of the entire confusion matrix. Indeed, the loss correction approach from~\cite{patrini2017making} makes use of the estimate of the entire noise-transition matrix, including the off-diagonal entries. Whereas, our approach in the experiment elicits weights for the diagonal entries alone, but assigns a different set of weights for each basis function, i.e., cluster. We are thus able to achieve better performance than \citet{patrini2017making}  by optimizing 
    correcting for the noise using a linear function of per-cluster diagonal entries.     Indeed, we also observed that PI-EW often achieves better accuracy during cross-validation with ten basis functions,  highlighting the benefit of underlying modeling in PI-EW. We expect to get further improvements by incorporating off-diagonal entries in PI-EW optimization on the training side as explained in Appendix~\ref{app:linear-gen}. We also stress that the results from our methods can  be further improved by cross-validating over kernel width, UMAP dimensions, and selection of the cluster centers, which are currently set to fixed values in our experiments. Lastly, we did not compare to the Adaptive Surrogates~\cite{jiang2020optimizing} for this experiment as this baseline requires to re-train the ResNet model in every iteration, and more importantly, this method constructs its probing classifiers by perturbing the parameters of the ResNet model several times in each iteration, which can be prohibitively expensive in practice. 
    
    \item \textit{Maximizing G-mean with Proxy Labels on Adult (Section~\ref{ssec:adult}):} In this experiment, we use binary features as basis functions instead of RBF kernels as done in CIFAR-10 experiment.  This reflects the flexibility of the proposed PI-EW and FW-EG methods. Our approach can incorporate any indicator features as basis function as long as it reflects cluster memberships. Moreover, our choice of basis function was motivated from choices made in~\cite{jiang2020optimizing}. We expect to further improve our results by incorporating more binary features as basis functions. 
    
    \item \textit{Maximizing F-measure under Domain Shift on Adience (Section~\ref{ssec:adience}):} As mentioned in Section~\ref{ssec:adience}, for the basis functions, in addition to the default basis $\phi^{\text{def}}(x) = 1 \, \forall x$, we choose from subsets of six basis  functions $\phi^1,\ldots,\phi^6$ that are averages of the RBFs, centered at points  from the validation set corresponding to each one of the six age-gender combinations. We choose these subsets using knowledge of the underlying image classification task. 
    Specifically, besides the default basis function, we cross-validate over three subsets of basis functions. The first subset comprises two basis functions, where the basis functions are averages of the RBF kernels with cluster centers belonging to the two true class. The second subset comprises three basis functions, where the basis functions are averages of the RBF kernels with cluster centers belonging to the three age-buckets. The third subset comprises six basis functions, where the basis functions are averages of the RBF kernels with cluster centers belonging to the combination of true class $\times$ age-bucket. We expect to further improve our results by cross-validating over kernel width and selection of the cluster centers. Lastly, we did not compare to Adaptive Surrogates, as this experiment again requires training a deep neural network model, and perturbing or retraining the model in each iteration can be prohibitively expensive in practice.
    
    \item \textit{Maximizing Black-box Fairness Metric on Adult (Section~\ref{ssec:adultbb}):}  In this experiment, since we treat the metric as a black-box, we do not assume access to gradients and thus do not run the [$\psi$ known] variant of FW-EG. We only report the [$\psi$ unknown] variant of FW-EG with varied basis functions as shown in Table~\ref{tab:adultbb}. 
    
    \item In Table~\ref{tab:addedexp}, we replicate the ``Macro F-measure'' experiment (without noise) from Section 6.2 in~\cite{jiang2020optimizing} and report results of maximizing the macro F-measure on Adult, COMPAS and Default datasets. We see that our approach yields notable gains on two out of the three datasets in comparison to Adaptive Surrogates approach~\cite{jiang2020optimizing}. 
    
    \begin{table}[h]
    \small
    \centering
    \caption{Test macro F-measure for the maximization task in Section 6.2 of~\citet{jiang2020optimizing}.}
    \vskip -0.2cm
    \begin{tabular}{ccc}
    \hline
     $\downarrow$ Data, Method $\rightarrow$&
         Adaptive Surrogates~\cite{jiang2020optimizing} & FW-EG \\
        \hline
        COMPAS 
        & 0.629 & \textbf{0.652}
        \\
        Adult 
        & 0.665 & \textbf{0.670} 
        \\
        Default
        & 0.533 & {0.536}
        \\
        \hline
    \end{tabular}
    \label{tab:addedexp}
    \vskip -0.3cm
\end{table}
    
\end{itemize}

\end{appendices}

\end{document}